\newcommand{\subheader}{\noindent\textbf}
\newcommand{\D}{\mathcal{D}}
\newcommand{\N}{\mathbb{N}}
\newcommand{\R}{\mathbb{R}}
\newcommand{\vopt}{v_{\operatorname{opt}}}
\newcommand{\vf}{v_{\operatorname{first}}}
\newcommand{\vl}{v_{\operatorname{last}}}
\newcommand{\rij}{r_{i\leftrightarrow j}}
\newcommand{\pexp}{p_\text{Exp}}
\newcommand{\dtd}{\emph{DTD}}
\newcommand{\dtr}{\emph{DTR}}
\newcommand{\dir}{\emph{DIR}}
\newcommand{\did}{\emph{DID}}
\newcommand{\er}{\emph{ER}}
\newcommand{\ed}{\emph{ED}}
\newcommand{\rnd}{\emph{rND}}
\newcommand{\rkl}{\emph{rKL}}
\newcommand{\rrd}{\emph{rRD}}
\newcommand{\awrf}{\emph{AWRF}}
\newcommand{\psp}{\emph{PSP}}
\newcommand{\cross}{$\textcolor{red}{\times}$}
\newcommand{\gcheck}{$\textcolor{green}{\checkmark}$}
\newtheoremstyle{thrm}
  {\topsep} 
  {3pt} 
  {\itshape} 
  {} 
  {\bfseries} 
  {.} 
  {.5em} 
  {} 
\theoremstyle{thrm} 
\newtheorem{theorem}{Theorem}
\newtheorem{proposition}{Proposition}[section]
\newtheoremstyle{style}
  {\topsep} 
  {\topsep} 
  {\itshape} 
  {} 
  {\bfseries} 
  {.} 
  {.5em} 
  {} 
\theoremstyle{style} 
\newtheorem*{definition}{Definition}
\renewenvironment{proof}[1][\proofname]{\par
  \vspace{-\topsep}
  \pushQED{\qed}%
  \normalfont
  \topsep0pt \partopsep0pt 
  \trivlist
  \item[\hskip\labelsep
        \itshape
    #1\@addpunct{.}]\ignorespaces
}{%
  \popQED\endtrivlist\@endpefalse
  \addvspace{3pt plus 6pt} 
}
\newenvironment{talign}
 {\align}
 {\endalign}
\newenvironment{talign*}
 {\csname align*\endcsname}
 {\endalign}
\title{Properties of Group Fairness Metrics for Rankings}
\author{
  Tobias Schumacher \\
  University of Mannheim\\
  \texttt{tobias.schumacher@uni-mannheim.de} \\
   \And
  Marlene Lutz \\
  University of Mannheim\\
  \texttt{marlene.lutz@uni-mannheim.de} \\
  \And
  Sandipan Sikdar \\
  L3S Research Center \\
  Leibniz University Hannover\\
  \texttt{sikdar@l3s.de} \\
  \And
  Markus Strohmaier \\
  University of Mannheim,\\
  GESIS - Leibniz Institute for the Social Sciences, and\\
  Complexity Science Hub\\
  \texttt{markus.strohmaier@uni-mannheim.de}
}
\begin{document}

\maketitle

\begin{abstract}
In recent years, several metrics have been developed for evaluating group fairness of rankings. Given that these metrics were developed with different application contexts and ranking algorithms in mind, it is not straightforward which metric to choose for a given scenario. 
In this paper, we perform a comprehensive comparative analysis of existing group fairness metrics developed in the context of fair ranking. 
By virtue of their diverse application contexts, we argue that such a comparative analysis is not straightforward. 
Hence, we take an axiomatic approach whereby we design a set of thirteen properties for group fairness metrics that consider different ranking settings. 
A metric can then be selected depending on whether it satisfies all or a subset of these properties. 
We apply these properties on eleven existing group fairness metrics, and through both empirical and theoretical results we demonstrate that most of these metrics only satisfy a small subset of the proposed properties. 
These findings highlight limitations of existing metrics, and provide insights into how to evaluate and interpret different fairness metrics in practical deployment.
The proposed properties can also assist practitioners in selecting appropriate metrics for evaluating fairness in a specific application. 
\end{abstract}

\section{Introduction}

As recommender systems have become nearly ubiquitous in online systems and therefore in our daily lives, the issue of fairness in ranked recommendations has garnered increasing attention in research.
Recent literature has proposed a plethora of approaches \cite{celis2018ranking, celis2020interventions,bower2021individually, Castillo2019,Geyik2019,Gorantla2021,Morik2020,pitoura2021fairness,zehlike2020matching, Zehlike2022a, Zehlike2022b, Yang2020} that aim to ensure that rankings are not only accurate, but also fair. 
At the same time, existing metrics that have been developed to determine the \emph{quality} of rankings \cite{DCG_2002, kendall1938} are not directly applicable or useful for evaluating the \emph{fairness} of rankings. 
As a result, there is a growing body of research~\cite{yang2016measuring,Singh2018,sapiezynski2019quantifying,narasimhan2020pairwise}  that is dedicated to designing and deploying metrics for assessing the fairness of rankings.
In this work, we specifically focus on \emph{group fairness} metrics which are aimed at measuring if groups (where membership is based on the value of a sensitive attribute like race, gender etc.) as a whole receive similar outcomes from a ranking process. 
We assume the existence of a \emph{protected group} which is considered to have been discriminated against historically.

\noindent\textbf{Problem \& Objective.} 
Several group fairness metrics (cf. Section \ref{sec:ex_metrics})  have been developed considering a wide range of different ranking problems and contexts. 
Given their diverse application contexts, it is however unclear how they can be compared or evaluated. 
Hence, when presented with an application context, it is difficult to choose an appropriate group fairness metric. 
For similar evaluation problems in information retrieval, axiomatic analyses have often been preferred in the existing literature~\citep{amigo2018axiomatic, amigo2013general, amigo2019comparison, amigo2020nature, amigo2009comparison}.
To perform an axiomatic analysis, we first define a set of verifiable properties and then characterize the metrics according to whether they satisfy these properties.
Equipped with these properties, we systematically analyze and compare a set of eleven existing metrics for group fairness in rankings.

\noindent \textbf{Approach.} 
We perform an axiomatic analysis considering two different ranking settings, where (i) the entire population of candidates is ranked (e.g., ranking candidates for a job where the total number of candidates are limited), and (ii) only a subset of the population is retrieved to be ranked (reminiscent of a traditional information retrieval setup such as document search). 
We identify properties which are applicable to both settings as well as properties which apply individually to each of the two settings (cf. Section~\ref{sec:background}). We deploy these properties to compare existing group fairness metrics through both theoretical and empirical results.  
To the best of our knowledge, this is the first comprehensive axiomatic analysis of group fairness metrics.

\noindent \textbf{Results.} 
Our experimental and theoretical results demonstrate that most existing metrics for group fairness in rankings only satisfy a subset of our properties.
Particularly for the ranking setting in which the full population is ranked, the corresponding properties are hardly ever satisfied.
In contrast, most existing metrics satisfy the properties for the second setting, where only a subset of the population is ranked.
This points out certain limitations of existing metrics, and suggests a need for careful attention being paid to evaluating the adequacy of metrics for different scenarios.

\noindent \textbf{Contributions.} 
We present and formalize a comprehensive set of properties for the evaluation of group fairness metrics for rankings.  
Through theoretical and empirical results, we demonstrate that most of the existing metrics only satisfy a subset of these properties. 
Our work sheds light on current limitations of metrics for group fairness and can aid practitioners in reflecting about requirements in selecting appropriate fairness metrics for given situations. 

\section{Background}
\label{sec:background}  
In this section, we formally define the ranking problem and present the ranking settings that we consider in this paper. For our analysis, we mainly focus on group fairness metrics, which measure if all groups as a whole receive a similar outcome. Hence, we present a detailed overview of existing group fairness metrics as well.

\subsection{Preliminaries}

\subheader{The Ranking Problem.} 
In typical ranking problems, one is given a candidate population 
$\mathcal{D}$ and a set of queries $\mathcal{Q}$. 
For a given query $q\in\mathcal{Q}$, the goal is to rank a set of candidates $D:=D^q\subseteq\mathcal{D}$ with $n:=n(q):=|D|$.
Each candidate $d \in \D$ can be represented as a tuple $d=(\mathbf{x},y(\mathbf{x}))$  where $\mathbf{x} \in \mathbb{R}^p$ denotes a feature vector of $d$ that determines the match between the candidate and the query, and a ground truth relevance score $y(\mathbf{x}):=y(\mathbf{x},q) \in \mathbb{R}$ that quantifies its qualification with respect to the query $q$.
In practice, the relevance is however often unknown and needs to be estimated by some learning algorithm.
For simplicity, we will also denote the relevance of a candidate $d$ as $y(d)$.

A ranking $r := r(D) := \langle r(1), ..., r(n)\rangle$ is then a total ordering of the candidate set $D$.
We let $r(k)$ denote the candidate $d$ at rank $k$, and $r^{-1}(d)$ denote the position $k$ of candidate $d$ in ranking $r$. 
For a pair of candidates $d, d'\in D$, we say that $d$ is \textit{ranked higher} than $d'$ in $r$, denoted by $d \succ_r d'$, if $r^{-1}(d) < r^{-1}(d')$.
Further, for all candidate sets $D \subseteq \D$ we let $\mathcal{R}(D)$ denote the set of all rankings on $D$.

\subheader{Group Fairness.} In the context of this work, we assume the presence of two groups, a protected group $G_1\not=\emptyset$ and a non-protected group $G_0\not=\emptyset$, in the population $\D$. Each candidate $d\in\mathcal{D}$ belongs to exactly one group, such that $G_0\cup G_1 = \D$ and $G_0 \cap G_1 = \emptyset$. 
For each group $G\in\{G_0,G_1\}$ we let $p_{G} := p_{G}(\D) := \frac{|G|}{|\D|}$ denote the fraction of candidates from this group in the population.
Further, we let $p_\text{groups} := p_\text{groups}(\D) = \left(p_{G_0}(\D), p_{G_1}(\D)\right)\in \R^2$ denote the vector of the group distribution.

\subheader{Ranking Metrics.}
A ranking metric $m$ is a mapping $m: \bigcup_{D\subseteq \D} \mathcal{R}(D) \longrightarrow \mathbb{R}$ 
that assigns each ranking a quality score.
In context of this work, we specifically focus on metrics for group fairness.
For every fairness metric $m$, there exists an \emph{optimal value} $\vopt=\vopt(m)$. A ranking $r$ is considered perfectly fair with respect to $m$ if $m(r) = \vopt$.
Further, without loss of generality, we assume that if a ranking $r$ discriminates against the protected group with respect to $m$, it has to hold that $m(r)<\vopt$, and that generally, lower values indicate higher levels of discrimination against the protected group.
Conversely, not all metrics have their maximum value at $\vopt$, and values $m(r)>\vopt$ would indicate that the non-protected group is disadvantaged (cf. property 1).
We note that for some existing metrics, its optimal value is given by its minimum, however, to obtain a unified setting, we will adapt these metrics here.

\subheader{Generating Rankings.}
In practice rankings are usually generated algorithmically, based on the application at hand. 
Since in this work we focus on evaluating ranking \emph{metrics} via theoretical properties, specific algorithms are however neither applied nor of interest.
Instead, the properties proposed in Section \ref{sec:properties} will consider theoretical sets of rankings where additional mathematical assumptions may be imposed.

\subsection{Two Settings for Ranking}\label{sec:settings}
In existing literature, fair ranking metrics are applied in a diverse range of ranking scenarios that imply different assumptions. In light of this issue, we propose to differentiate between two main settings for rankings, which are described in the following. 

\subheader{Setting 1: Ranking Full Population.} 
In Setting 1 we assume that the candidate set that is to be ranked corresponds to the full population. 
This setting is particularly applicable when the total candidate population is not too large, e.g., when ranking candidates for a job. 
In such applications, the same ranking algorithms may be applied on different populations $\D$, e.g., when considering applications for the same job over multiple years, and thus there is a necessity to assess the behavior of ranking metrics over varying candidate populations.

\subheader{Setting 2: Ranking Subset of Population.} \label{subh: setting_2}
In this setting, we assume that the candidate set $D$ to be ranked for a given query is a strict subset of the full population $\D$, i.e., $D\subsetneq\D$. Further, we assume that $|D|\ll|\D|$, which is a common scenario in information retrieval, where only a limited number of documents can be retrieved and ranked upfront.
In context of group fairness, it is particularly important to consider that the relative proportion of each group $G\in\{G_0,G_1\}$ in the candidate set might strongly differ from its relative proportion $p_G(\D)$ in the full population.

\subsection{Existing Group Fairness Metrics} \label{sec:ex_metrics}
Having introduced the foundations of the fair ranking problem, we will now present an overview of existing metrics as a basis for evaluation.
Most of the subsequent metrics follow the intuition that the highest ranked candidates receive more attention from users than candidates at lower ranks.
The \emph{position bias} captures this intuition by assigning weights to each rank in a way that the highest ranks obtain particularly high weights \cite{Baeza2018}. 
We adopt its common formulation as a logarithmic discount, as it provides a smooth reduction and exhibits favorable theoretical properties \cite{Wang2013}:
\begin{equation}\label{eq:posbias}
    b(k) := \frac{1}{\log_2(k+1)}.
\end{equation}

\subheader{Prefix Metrics.} 
\citet{yang2016measuring} have proposed three metrics that conceptually compare the representation of protected candidates in successive ranking prefixes, i.e., top-$k$ subsets of a ranking, with the representation of the protected group in the whole population. 
The difference in representation is accumulated at discrete cut-off points $k\in I \subseteq \{1,\dots, n\}$, and weighted by the position bias function \eqref{eq:posbias}.
The exact set of cut-off points $I$ is usually determined for a given problem at hand, common choices include linearly increasing cut-offs such as $I = \{10, 20, 30,\dots \}$~\citep{wu2018discrimination, raj2022measuring}, or using every single rank as cut-off~\citep{draws2021, berg2022}.

\emph{Normalized discounted difference} (\emph{rND}) considers the absolute difference between the proportions of the protected group in the top-$k$ ranks and in the overall ranking.
Letting $p_{G}^k(r) := \frac{|\{r(i) : i\leq k\}\cap G|}{k}$ denote the fraction of candidates of a group $G\in\{G_0,G_1\}$ in the top-$k$ positions of the ranking $r$, we define $rND$ as
\begin{equation*}
	rND(r) := \mathlarger{1} - \frac{1}{Z_\text{ND}(D)}\cdot \sum_{k \in I} b(k)\cdot \left|p_{G_1}^k(r) - p_{G_1}\right|,
\end{equation*}
where $Z_\text{ND}(D)$ denotes the maximum value that the sum can take on over the given candidate set $D$.

The second metric, \emph{normalized discounted ratio} (\emph{rRD}), considers the difference between the ratio of protected and non-protected candidates in the top-$k$ ranks and the overall ranking, and is defined as
\begin{equation*}
	rRD(r) := \mathlarger{1} - \frac{1}{Z_\text{RD}(D)}\cdot \sum_{k \in I} b(k)\cdot \left|\tfrac{p_{G_1}^k(r)}{p_{G_0}^k(r)} - \frac{p_{G_1}}{p_{G_0}}\right|,
\end{equation*}
with $Z_\text{RD}$ denoting the maximum value of the sum over the given candidate set $D$.
In practical scenarios, whenever it is the case that $p_{G_0}^k(r) = 0$, the authors set $\frac{p_{G_1}^k(r)}{p_{G_0}^k(r)} := 0$ to avoid division by zero.

Instead of computing absolute differences between ratios of group proportions, \emph{normalized Kullback-Leibler divergence} (\emph{rKL}) applies the Kullback-Leibler divergence $\Delta_\text{KL}$ as a distance metric.
Given two probability vectors $p=(p_1,\dots,p_l),q=(q_1,\dots,q_l)\in\R^l$ with $l\geq 2$, the Kullback-Leibler divergence is defined as
\begin{equation}\label{eq:KL}
     \mathlarger{\Delta_\text{KL}}(p\|q) =  \sum_{i=1}^l p_i\cdot\log\left(\frac{p_i}{q_i}\right).
\end{equation}
Letting $p_\text{groups}^k(r) := \left(p^k_{G_0}(r),p^k_{G_1}(r)\right)$ denote the distribution vector of relative group sizes at rank $k$, \emph{normalized Kullback-Leibler divergence} is then defined as
\begin{equation*}
	rKL(r) := \mathlarger{1} - \frac{1}{Z_\text{KL}(D)} \cdot \sum_{k \in I} b(k)\cdot \mathlarger{\Delta_\text{KL}}\left(p_\text{groups}^k(r)\big\|p_\text{groups}\right),
\end{equation*}
where like in the previous metrics, the normalization factor $Z_\text{KL}(D)$ corresponds to the highest possible value of the right-hand sum on the candidate set $D$. 

We adapted all three metrics from their original proposition \cite{yang2016measuring} by subtracting the normalized sum from 1, so that they fit into our framework where lower values indicate higher levels of discrimination against the protected group.
The optimal and also maximum value of all three prefix metrics $m\in\{RND,RRD,RKL\}$ is then $\vopt(m)=1$, and their minimum value is 0. 
However, these metrics cannot capture whether the protected group is over- or underrepresented (cf. property 1). 

\subheader{Exposure Metrics.}
This family of metrics was introduced by \citet{Singh2018, singh2019policy}, and measures whether \emph{exposure} is fairly distributed between groups.
Exposure denotes the potential attention that a candidate $d$ receives at a specific ranking position $k$, and is quantified by the position bias $b(k)$ as defined in Equation \eqref{eq:posbias}.
Thus, we define the \emph{exposure} of a group $G\in\{G_0,G_1\}$ in a ranking $r$ on a candidate set $D$ as the total position bias of its group members, averaged by the respective group sizes in the full population $\D$:
\begin{equation} \label{eq:group_exposure}
	\operatorname{Exposure}(G | r) := \frac{1}{|G|}\cdot\sum_{d \in G\cap D} b(r^{-1}(d)) . 
\end{equation} 
To measure whether protected and non-protected group receive equal exposure in proportion to their group sizes, we can compute \emph{exposure difference} (\emph{ED}) via
\begin{equation*}
    \emph{ED}(r) := \operatorname{Exposure}(G_1|r) - \operatorname{Exposure}(G_0|r),
\end{equation*}
or alternatively \emph{exposure ratio} (\emph{ER}) as
\begin{equation*}
    \emph{ER}(r) := \frac{\operatorname{Exposure}(G_1|r)}{\operatorname{Exposure}(G_0|r)}.
\end{equation*}
The optimal values are obtained when both groups receive the same exposure, with $\vopt(ED) = 0$ and $\vopt(ER) = 1$. 

The next four metrics also consider ground-truth relevance scores.
\citet{Singh2018, singh2019policy} argue that the treatment or impact of a group $G$ should be proportional to their average relevance $Y(G)$, which is defined as
\begin{equation*}
	Y(G) := \frac{1}{|G|}\cdot \sum_{d \in G} y(d) .
\end{equation*}
To measure the extent to which the protected and non-protected groups are treated disparately, with \emph{treatment} being viewed as allocation of exposure, they define \emph{disparate treatment difference} (\dtd{}) as
\begin{equation*}
	DTD(r) := \frac{\operatorname{Exposure}(G_1|r)}{Y(G_1)} - \frac{\operatorname{Exposure}(G_0|r)}{Y(G_0)} .
\end{equation*}
Analogously, \emph{disparate treatment ratio} (\dtr{}) is defined as
\begin{equation*}
	DTR(r) := \frac{\operatorname{Exposure}(G_1|r)}{\operatorname{Exposure}(G_0|r)}\cdot \frac{Y(G_0)}{Y(G_1)} .
\end{equation*}
The optimal values are given by $\vopt(DTR) = 1$ and $\vopt(DTD) = 0$.
Note that \dtd{} and \dtr{} are equivalent to \ed{} and \er{}, respectively, if the relevance scores of all candidates $d\in \D$ are $y(d)=1$.

Rather than treatment, the next two metrics consider the \emph{impact} of groups in a ranking, which is modeled via the \emph{click-through rate} (\emph{CTR}).
The CTR aggregates the candidate-wise product of relevance and exposure, and is defined as
\begin{equation*}
    CTR(G|r) := \frac{1}{|G|}\cdot \sum_{d \in G\cap D} b(r^{-1}(d)) \cdot y(d).
\end{equation*}
\emph{Disparate impact difference} (\did{}) measures the extent to which the click-through rates of the protected and non-protected group are proportional to their average relevance:
\begin{equation*}
    \emph{DID}(r) := \frac{CTR(G_1|r)}{Y(G_1)} - \frac{CTR(G_0|r)}{Y(G_0)} .
\end{equation*}
Analogously, the \emph{disparate impact ratio} (\dir{}) is defined as
\begin{equation*}
DIR(r) := \frac{CTR(G_1|r)}{CTR(G_0|r)}\cdot \frac{Y(G_0)}{Y(G_1)} .
\end{equation*}
For these metrics, it holds that $\vopt(DID) = 0$ and $\vopt(DIR) = 1$. 
Again, if the relevance scores of all candidates $d\in \D$ are $y(d)=1$, these metrics are equal to \ed{} and \er{}, respectively.

\subheader{Attention-weighted Rank Fairness.}
\citet{sapiezynski2019quantifying} have considered a similar rationale as the exposure metrics, arguing that the relative exposure of a group $G$ should be similar to its relative proportion $p_G$ in the population.
Thus, in their \emph{attention-weighted rank fairness} (\awrf{}) metric, they introduce the probability vector 
$$
\pexp(r) = \tfrac{1}{\sum_{k=1}^n b(k)} \cdot \left(\Sigma_{d \in G_0\cap D}\, b(r^{-1}(d)), \Sigma_{d \in G_1\cap D}\, b(r^{-1}(d)\right) \in\R^2
$$
to model the distribution of exposure per group, which is then compared to the true distribution $p_\text{groups}$ of relative group sizes in the population.
Therefore, they define their metric as 
\[
AWRF_\Delta(r) = \Delta(\pexp(r), p_\text{groups}),
\]
where $\Delta$ denotes a given divergence metric that can be applied on two probability vectors.
While \citet{sapiezynski2019quantifying} consider applying a statistical test to compare similarity of these metrics, \citet{raj2022measuring} propose using some more traditional distance functions such as the Kullback-Leibler divergence $\Delta_\text{KL}$ \eqref{eq:KL}.
In the recent \emph{TREC 2022} fair ranking challenge \cite{trec-fair-ranking-2021}, the Jensen-Shannon divergence $\Delta_\text{JS}$ is used as distance function.
In our analysis, we will also use this distance metric, and thus, define \awrf{} as
$$
AWRF(r) := AWRF_{\Delta_\text{JS}}(r) := \mathlarger{1} - \left[ \tfrac{1}{2}\cdot\mathlarger{\Delta_\text{KL}}\left(\pexp(r)\big\| p_\text{sum}(r)\right) + \tfrac{1}{2}\cdot\mathlarger{\Delta_\text{KL}}\left(p_\text{groups}\big\| p_\text{sum}(r)\right)\right], 
$$
where $p_\text{sum}(r) := \tfrac{1}{2}\cdot\big(\pexp(r)+p_\text{groups}\big)$.
The optimal value of this metric is $\vopt(AWRF) = 1$.
Note that again we adapted the original metric by subtracting it from 1 to make it consistent with our setting.

\subheader{Pairwise Metrics.}
\emph{Pairwise statistical parity} (\psp{})~\cite{narasimhan2020pairwise} measures ranking fairness by considering candidate pairs in which the respective candidates belong to different groups. 
Based on these pairs, it is examined whether, on average, both groups have the same chance of being top ranked.
This metric is only applicable in Setting 1 where the full population is ranked, and is defined as
\[
PSP(r) := \frac{\big|\{(d, d')\in G_0 \times G_1 : d' \succ_r d\}\big| - \big|\{(d, d')\in G_0 \times G_1 : d \succ_r d'\}\big|}{\big| G_0 \times G_1\big|}. 
\]
The optimal value of this metric is given by $\vopt(PSP) = 0$.

\section{Properties of Group Fairness Metrics}\label{sec:properties}

Having established the preliminaries of fair ranking problems, we now turn to formulating a set of properties that we deem useful for evaluating group fairness metrics for rankings.
In that context, we distinguish properties by their applicability to the ranking settings mentioned in Section \ref{sec:settings}.
We begin with properties that are applicable in both settings, before describing properties that apply individually to either Setting 1 or Setting 2.

\subheader{Uniform Relevance.}
Many of the properties that are presented in the following will assume that all candidates in a given population have the same relevance with respect to a given query. We will denote this specific setting as \emph{uniform relevance}. 
Formally, this means that we have $y(d)=1$ for all candidates $d\in \D$. 
We note that this is rather a theoretical assumption that in practice one would not except to hold.
However, when evaluating group fairness metrics for rankings, this assumption allows for assessing behavior of metrics on rankings where higher or lower ranks cannot be justified in terms of relevance, since all candidate have equal relevance.
Any disadvantage would then be based on group membership, and thus properties of group fairness properties are of specific interest in that scenario.
In the following, we will only assume uniform relevance to hold when this is explicitly specified in the definition of a property.

\subheader{Extreme Case Rankings.}
Next to the uniform relevance setting, we will often consider particularly imbalanced rankings, where all elements from one group are ranked higher than any element from the other group.
Such rankings should intuitively yield extreme values in terms of any metric $m$, and are thus of particular interest in context of this work.
Formally, given a population $\D$ and a candidate set $D\subseteq \D$ we define $\mathcal{R}_{\operatorname{first}}(D)$ as the set of all rankings on $D$ where every candidate of the protected group is ranked higher than every candidate of the non-protected group.
Conversely, we define $\mathcal{R}_{\operatorname{last}}(D)$ as the set of all rankings on $D$ where every candidate of the non-protected group is ranked higher than every candidate of the protected group.
When additionally assuming uniform relevance, all metrics presented in Section \ref{sec:ex_metrics} will assign the same value $m(r)$ for all $r\in\mathcal{R}_{\operatorname{first}}(D)$, and the same value $m(r')$ for all $r'\in\mathcal{R}_{\operatorname{last}}(D)$.
Thus, in this setting we can define the extreme ranking values $\vf(D):= \vf(m,D) := m(r), r\in\mathcal{R}_{\operatorname{first}}(D)$ and $\vl(D) := \vl(m,D) := m(r'),r'\in\mathcal{R}_{\operatorname{last}}(D)$.

\begin{figure}
     \centering
     \begin{subfigure}[b]{0.62\textwidth}
         \centering
         \includegraphics[width=\textwidth]{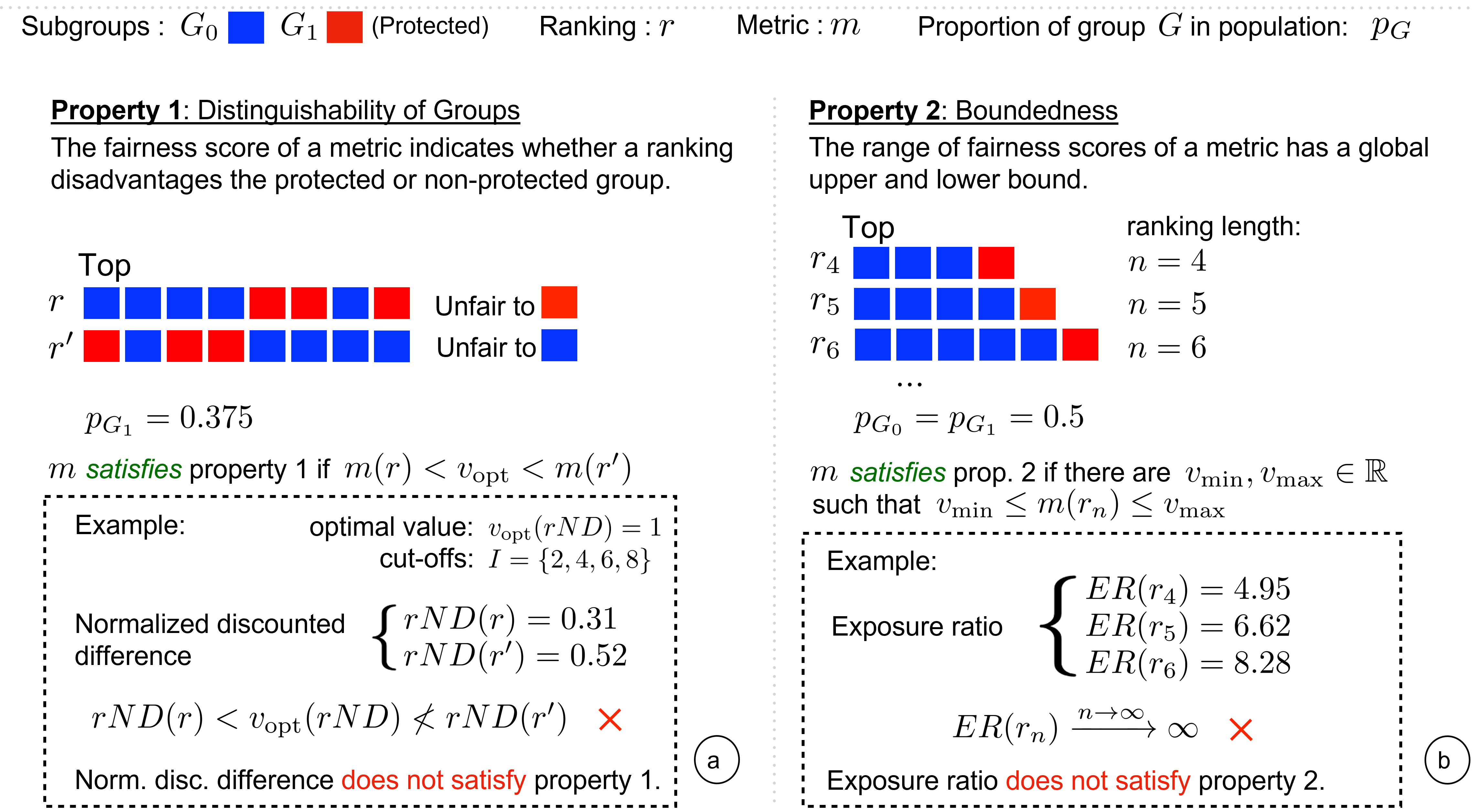}
         \label{fig:1.1}
     \end{subfigure}
     \begin{subfigure}[b]{0.318\textwidth}
         \centering
         \includegraphics[width=\textwidth]{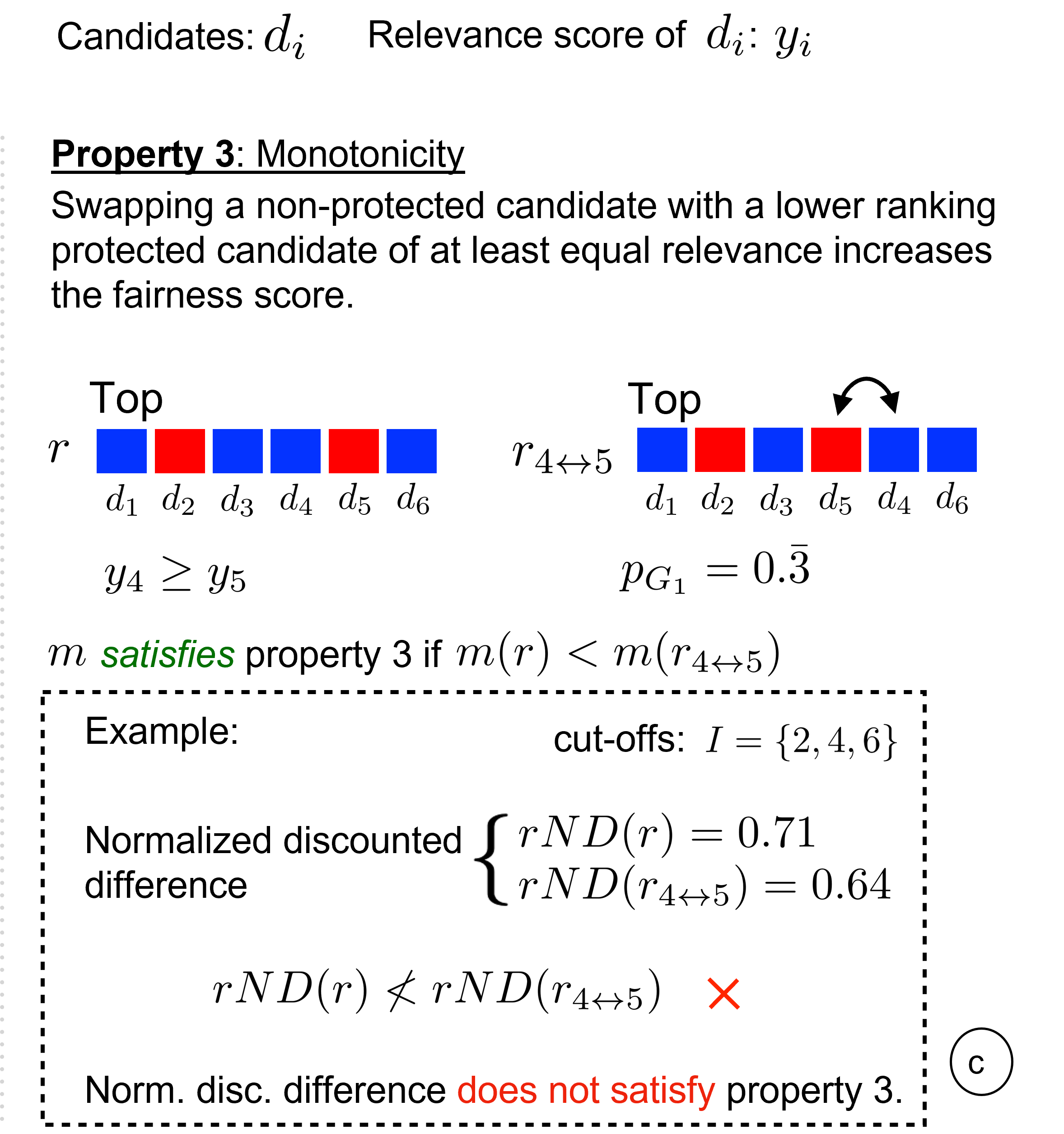}
         \label{fig:1.2}
     \end{subfigure}
     \centering
     \begin{subfigure}[b]{0.295\textwidth}
         \centering
         \includegraphics[width=\textwidth]{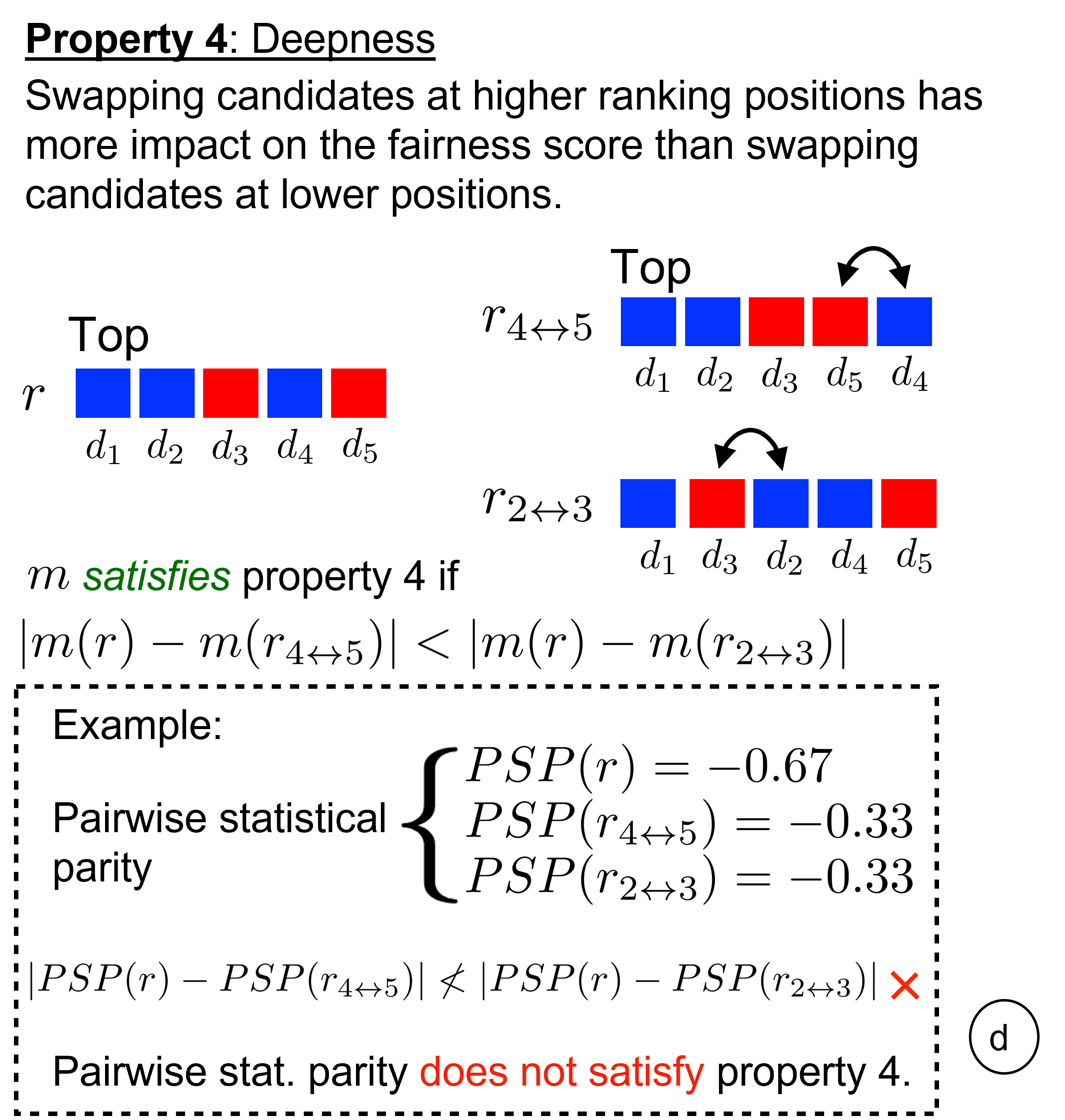}
         \label{fig:1.3}
     \end{subfigure}
     \begin{subfigure}[b]{0.627\textwidth}
         \centering
         \includegraphics[width=\textwidth]{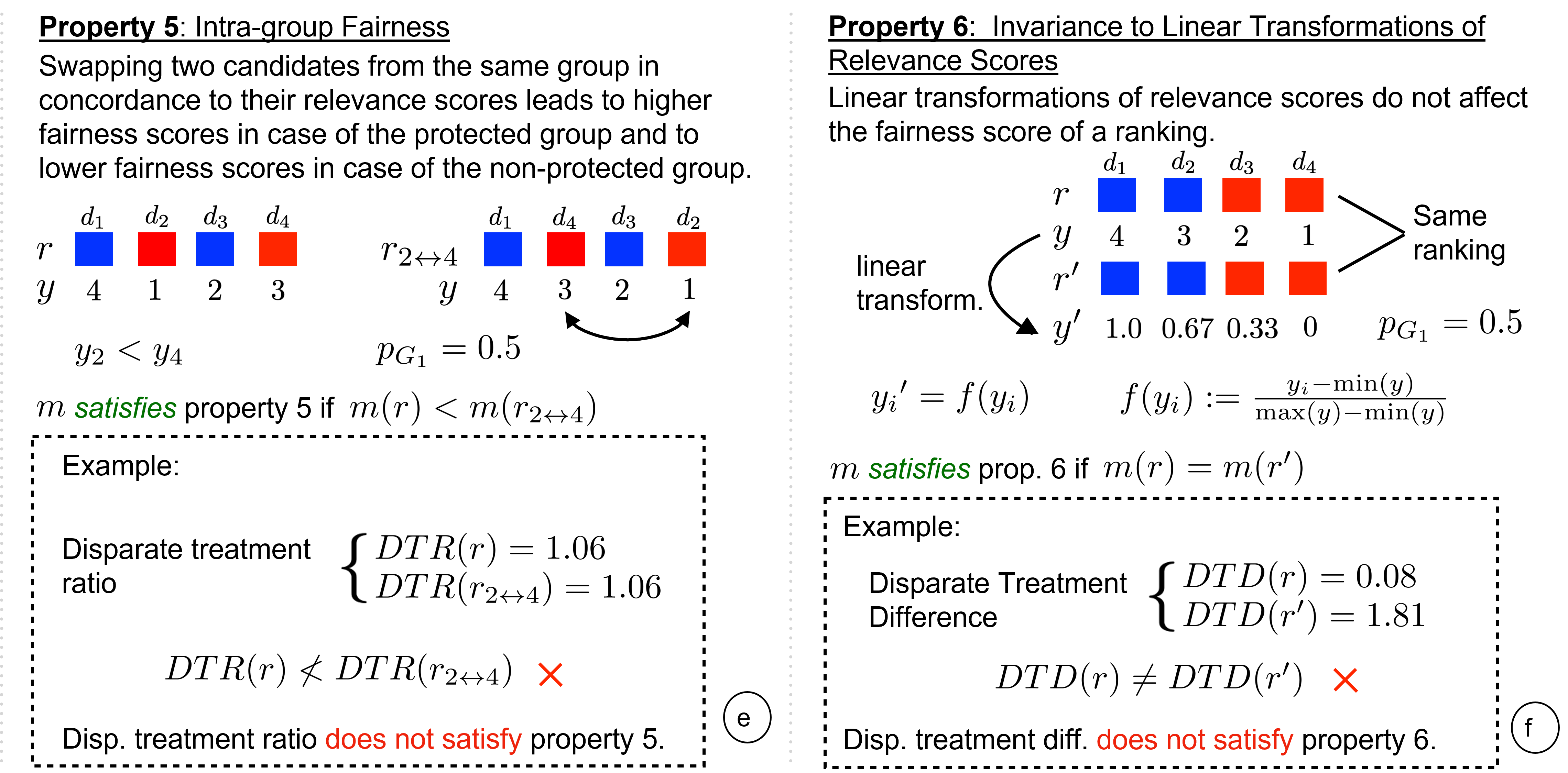}
         \label{fig:1.4}
     \end{subfigure}\vspace{-5mm}
     \caption{\emph{Universal properties for group fairness metrics.} We illustrate six universal properties using exemplary ranking scenarios in panels (a) - (f). In each scenario, a ranking candidate belongs to either a \textcolor{red}{protected} or a \textcolor{blue}{non-protected} group.
     In (a), property 1 requires that a fairness metric $m$ is able to reflect whether the protected group or the non-protected group is disadvantaged. The example shows that normalized discounted difference (\rnd) does not satisfy this criterion, as it assigns values lower than the optimal value $\vopt$ when either group is disadvantaged.
     In (b), property 2 requires that $m$ is bounded. Our example shows a set of rankings where with increasing ranking length, exposure ratio (\er{}) goes to infinity. 
     In (c), property 3 is satisfied if swapping a non-protected candidate with a lower ranking protected ranking of at least equal relevance increases the score of $m$. In the given example, \rnd{} however decreases its value for such a swap.
     In (d), property 4 requires that the value of $m$ should be impacted more when swapping candidates at higher ranks than when swapping at lower ranks. In the given example, pairwise statistical parity (\psp{}) however weighs these swaps the same.
     In (e), a metric $m$ satisfies property 5 if swapping candidates in concordance to their relevance scores increases the score if both are in the protected, and decreases the score if both are in the non-protected group.
     In the given example, disparate treatment ratio \dtr{} is however not affected at all by such a swap.
     In (f), property 6 requires that $m$ is invariant to linear transformations of relevance scores. Here, the rankings $r$ and $r'$ are the same, except that the relevance scores $y'$ in $r'$ are a min-max scaled version of the scores $y$ in $r$. By assigning different scores to $r$ and $r'$, we illustrate that disparate treatment difference (\dtd) does not satisfy property 6.}
     \label{fig:universal} 
\end{figure}

\subsection{Universal Properties for Group Fairness Metrics}
\label{sec:prop_set_1_2}

In the following, we present six properties of group fairness metrics for rankings that can be applied in any of the two specified settings. An illustrative overview of these properties is provided in Figure \ref{fig:universal}.

We often consider swapping single pairs of candidates in rankings.
To formalize such a swap within a given ranking $r\in\mathcal{R}(D)$, $D\subseteq \D$, for any two indices $i,j$ with $1\leq i < j \leq n$ we let $r_{i \leftrightarrow j}$, defined as
\[
r_{i \leftrightarrow j}(k) := \begin{cases}
r(i), &k = j, \\
r(j), &k = i,\\
r(k), &\text{else,}
\end{cases}
\]
denote the ranking resulting from $r$ when swapping the candidates at positions $i$ and $j$.

\subheader{\underline{Property 1: Distinguishability of Groups.}}
The first property is motivated by the notion that a metric should be able to delineate whether the protected or the non-protected group is at a disadvantage.
\begin{definition}
A ranking metric $m$ satisfies \emph{distinguishability of groups}, if for every  population $\D$ with uniform relevance, it holds that $\vl(m,\D)<\vopt(m)<\vf(m,\D).$
\end{definition}

\noindent 
If \emph{distinguishability of groups} is satisfied, we will consider ranking $r$ with $m(r)<\vopt$ to be unfair towards the protected group, and a ranking $r'$ with $m(r')>\vopt$ unfair toward the non-protected group.

\subheader{\underline{Property 2: Boundedness.}}
The second property concerns the range of values that a fairness metric can take on, and requires this range to be universally bounded. This is particularly necessary to quantify the degree of unfairness of a given ranking and to objectively compare the fairness of different rankings.
\begin{definition}
A ranking metric $m$ satisfies \emph{boundedness}, if there are values $v_{\min}(m), v_{\max}(m)\in\mathbb{R}$ such that for any candidate population $\D$, all candidate sets $D\subseteq\mathcal{D}$, and all rankings $r\in\mathcal{R}(D)$, the fairness score $m(r)$ is well-defined and it holds that $v_{\min}(m)\leq m(r) \leq v_{\max}(m)$.
\end{definition}

\subheader{\underline{Property 3: Monotonicity.}} 
This property is partly motivated by similar properties defined in the context of evaluating classification \cite{sebastiani2015} and document filtering results \cite{amigo2019comparison}.
In our context, we consider a ranking in which a candidate from the non-protected group is ranked higher than a protected candidate with a relevance score at least as high as the non-protected candidate. 
Monotonicity represents the intuitive notion that swapping the ranks of these two candidates should yield an outcome that is more in favor of the protected group.
\begin{definition}
A ranking metric $m$ satisfies \emph{monotonicity}, if for any ranking $r$ with candidates $r(i) \in G_0, r(j) \in G_1$, such that $i<j$ and $y(r(i)) \leq y(r(j))$, it holds that $m(r_{i\leftrightarrow j}) > m(r)$.
\end{definition}

\subheader{\underline{Property 4: Deepness.}}
This property is inspired by a similar property proposed by \citet{amigo2013general}, albeit in the context of document ranking.
Building on the observation that users tend to focus on higher ranks \cite{Baeza2018}, \emph{deepness} formalizes the notion that swapping candidates at higher ranking positions should have more impact on fairness scores than swapping candidates at lower ranking positions.
\begin{definition}
A ranking metric $m$ satisfies \emph{deepness}, if for any ranking $r$ with ranks $i,j$ such that such that $i<j$, $y(r(i)) = y(r(j))$, $y(r(i+1)) = y(r(j+1))$, and either $r(i),r(j)\in G_0$, $r(i+1), r(j+1)\in G_1$,  or $r(i),r(j)\in G_1$, $r(i+1), r(j+1)\in G_0$, it holds that
$
|m(r) - m(r_{i\leftrightarrow i+1}) | > |m(r) - m(r_{j\leftrightarrow j+1}) |.
$
\end{definition}

\subheader{\underline{Property 5: Intra-Group Fairness.}}
This property requires a metric to capture whether higher relevance within a group leads to higher ranks.
\begin{definition}
A ranking metric $m$ satisfies \emph{intra-group fairness}, if for any ranking $r$ with a pair of candidates $r(i), r(j)$ such that $i<j$ but $y(r(i)) < y(r(j))$, it holds that $m(r_{i\leftrightarrow j}) > m(r)$ if both $r(i), r(j) \in G_1$, and $m(r_{i\leftrightarrow j}) < m(r)$ if both $r(i), r(j) \in G_0$. 
\end{definition}

\subheader{\underline{Property 6: Invariance to Linear Transformation of Relevance Scores.}}
The final universal property states that rescaling or translating relevance scores of a candidate set should not affect the fairness score of a given metric $m$. 
This is particularly relevant in settings where relevance scores for rankings are preprocessed by common techniques such as min/max-scaling or z-score normalization, which are both linear transformations. 
Satisfying property 6 ensures that such preprocessing does not change the resulting values of a ranking metric.
\begin{definition}
For any $a\in \mathbb{R}_{>0}$, $c\in\R$, let $f_{a,c}: \mathbb{R} \rightarrow\mathbb{R}, y  \mapsto ay + c$ be a linear function. 
For a given population $\D$, we let $f_{a,c}(\D) := \big\{\big(\mathbf{x},f_{a,c}(y(\mathbf{x}))\big): d=(\mathbf{x},y(\mathbf{x}))\in\D\big\}$ denote the same population in which the relevance scores have been transformed according to $f_{a,c}$, and for any candidate set $D\subseteq\D$ we let $f_{a,c}(D)\subseteq f_{a,c}(\D)$ denote the transformed candidate set. 
Then, a ranking metric satisfies \emph{invariance to linear transformations of relevance scores}, if for any population $\D$ and any candidate set $D\subseteq\D$, for every ranking $r\in\mathcal{R}(D)$, and all $a\in \mathbb{R}_{>0}$, $c\in\R$, it holds that $m\big(r(D)\big) = m\big(r(f_{a,c}(D))\big)$.
\end{definition}
 
\noindent Note that if this property only holds for all $f_{a,c}$ where $a=1$, we call $m$ \emph{invariant to translations of relevance scores}. Conversely, if this property only holds for all $f_{a,c}$ where $c=0$, we call $m$ \emph{invariant to rescaling of relevance scores}.

\subsection{Properties for Setting 1: Ranking Full Population}
\label{sec:prop_set_1}

Next, we consider a set of properties that is specifically targeted at Setting 1, in which we assume that all candidates from a given population $\mathcal{D}$ are ranked. Moreover, we assume uniform relevance for all the properties in this setting, and specifically consider the values $\vf(m,\D)$ and $\vl(m,\D)$ that a metric $m$ takes on varying populations $\D$.

If these extreme values, and hence the range of values, are inconsistent across candidate populations, interpreting individual values of a metric can become extremely difficult. 
With this intuition in mind, we introduce four properties that are representative of consistency across extreme cases as well as the average case. The properties are shown in Figure \ref{fig:setting1}. 

\begin{figure}
     \centering
     \begin{subfigure}[t]{0.495\textwidth}
         \centering
         \includegraphics[width=\textwidth]{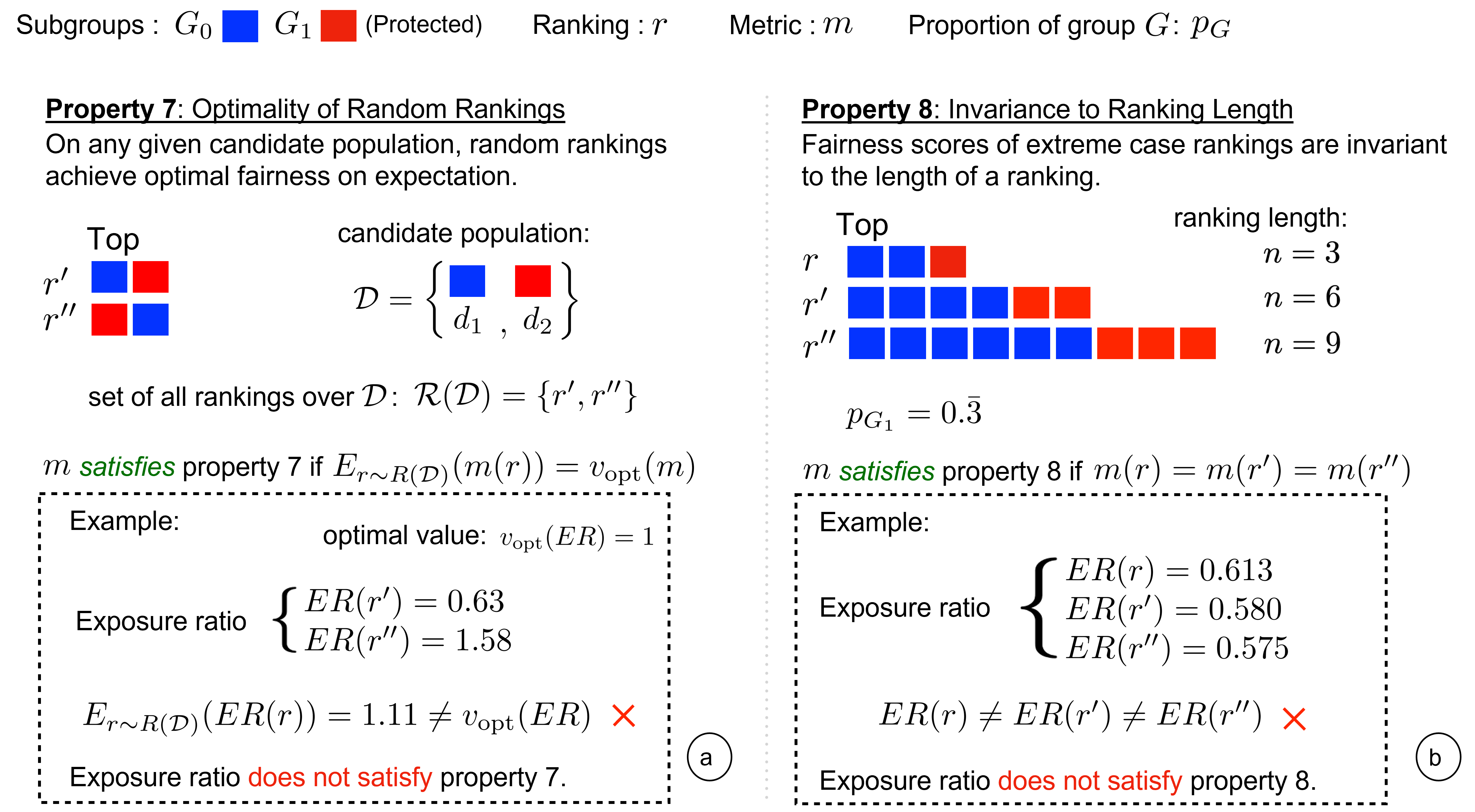}
         \label{fig:2.1}
     \end{subfigure}
     \begin{subfigure}[t]{0.495\textwidth}
         \centering
         \includegraphics[width=\textwidth]{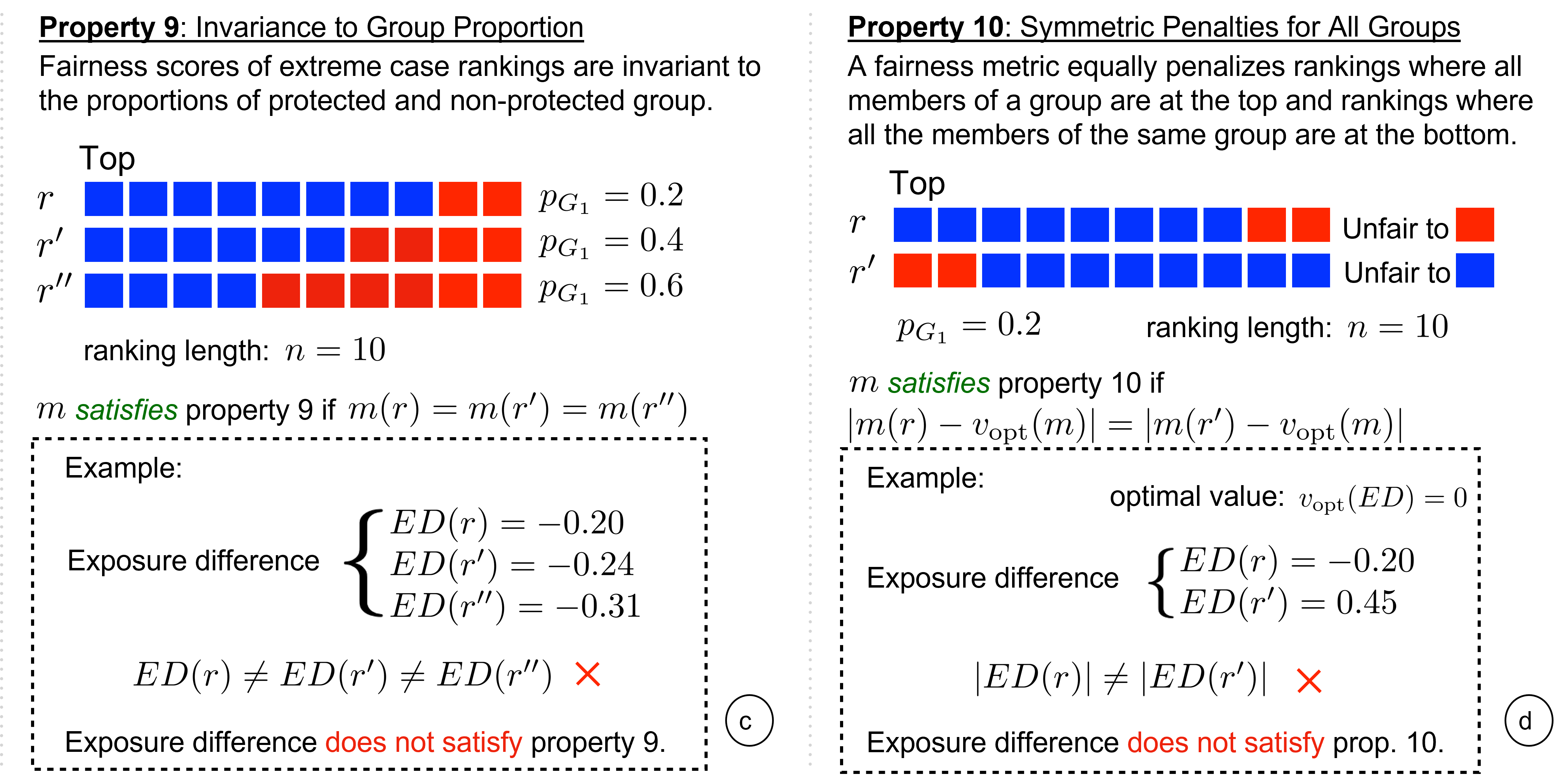}
         \label{fig:2.2}
     \end{subfigure}\vspace{-5mm}
     \caption{ \emph{Properties for ranking the full population.} We illustrate the four properties for Setting 1 in which the full population is ranked, using exemplary ranking scenarios in panels (a) - (d). In each scenario, a ranking candidate belongs to either a \textcolor{red}{protected} or a \textcolor{blue}{non-protected} group.
     In (a), property 7 requires that when sampling uniformly over all rankings of a candidate population $\D$, the ranking metric $m$ should, on expectation, yield the optimal fairness score $\vopt(m)$. We provide a minimal example in which exposure ratio (\er{}) does not obtain its optimal value $\vopt(ER) = 1$ on expectation.
     In (b), property 8 requires that a fairness metric $m$ is invariant to ranking length. The example shows that \er{} fails to satisfy this criterion, as it assigns different values to rankings that only differ in length. 
     In (c), property 9 requires that $m$ is invariant to group proportions. While the rankings in our example are intuitively similar (with the only difference being the size of the protected group), exposure difference (\ed) assigns different fairness scores to them. 
     In (d), property 10 stipulates $m$ to assign symmetric penalties to all groups. In the given example, $r$ disadvantages $G_1$ in the strongest possible way, while $r'$ disadvantages $G_0$ in the strongest possible way. However, \ed{} assigns asymmetric scores to the two rankings, thereby failing to satisfy property 10.}
     \label{fig:setting1}
\end{figure}

\subheader{\underline{Property 7: Optimality of Random Rankings.}}
Naturally, one would expect random rankings to be unbiased.
Following this intuition, this property states that under the premise of uniform relevance, random rankings should achieve optimal fairness in expectation.
\begin{definition}
Given a candidate population $\D$, we let $R:=R(\D)$ denote the random variable which draws from all rankings $r\in\mathcal{R}(\D)$ with equal probability.
Further, we let $v_E(m,\D):=E_{r\sim R(\D)}(m(r))$ denote the expected value of the metric $m$ over all rankings on $\D$.
Then, a ranking metric $m$ satisfies \emph{optimality of random rankings}, if on any population $\D$ with uniform relevance it holds that $v_E(m,\D) = \vopt(m)$.
\end{definition}


\subheader{\underline{Property 8: Invariance to Ranking Length.}} 
This property represents the intuitive notion that a metric should not produce different fairness scores for two rankings that differ only in the number of candidates. Violation of this property can lead to imprecise interpretation of the fairness scores. 
An example of such a scenario would be when the worst case value of a metric on a candidate population $\D$ is 1 while the worst case value of the same metric on a candidate population $\D'$ is 0.1, just because the candidate population $\D'$ contains more candidates.
\begin{definition}
A ranking metric $m$ satisfies \emph{invariance to ranking length}, if for any two candidate populations $\D, \D'$ with uniform relevance and $P_{G_1}(\D) = P_{G_1}(\D')$, but $|\D| \not= |\D'|$, it holds that $\vf(\D) = \vf(\D')$ and $\vl(\D) = \vl(\D')$.
\end{definition}

\subheader{\underline{Property 9: Invariance to Group Proportions.}}
Following a similar motivation as property 8 (\emph{invariance to ranking length}), this property states that the extreme values of a metric should not depend on the relative proportions of protected and non-protected candidates. 

\begin{definition}
A ranking metric $m$ satisfies \emph{invariance to group proportions}, if for any two candidate populations $\D, \D'$ with uniform relevance and $|\D| = |\D'|$, but $P_{G_1}(\D) \not= P_{G_1}(\D')$, it holds that $\vf(\D) = \vf(\D')$ and $\vl(\D) = \vl(\D')$.
\end{definition}

\subheader{\underline{Property 10: Symmetric Penalties for All Groups.}}
This property states that the extreme ranking values $\vf(m,\D)$ and $\vl(m,\D)$ of a metric $m$ should be equidistant from the optimum value $\vopt(m)$.

\begin{definition}
A metric $m$ satisfies symmetric penalties for all groups, if on any population $\D$ with uniform relevance it holds that either $|\vf(\D) - \vopt| = |\vopt - \vl(\D)|$, or $\vf(\D)/ \vopt = \vopt / \vl(\D)$.
\end{definition}
\noindent Note that the latter equations in this definition intend to accommodate  metrics that are based on ratios, such as \er{}.

\subsection{Properties for Setting 2: Ranking Subset of Population}
\label{sec:prop_set_2}
In Setting 2, the representation of the protected group in the ranked candidate set $D$ may strongly differ from its representation in the full candidate population $\D$.
The following three properties, which are illustrated in Figure \ref{fig:setting2}, specifically consider such disparities, and have been adapted from properties proposed by \citet{amigo2013general} in the context of more general document ranking tasks.


The first two properties intuitively analyze a quality versus quantity trade-off in the retrieval of protected candidates.
For that purpose, we assume uniform relevance and for a given $N\in\N$, we let $D_N, D_N' \subseteq \D$ denote two candidate sets with $|D_N| = |D_N'| = 2N$, $|D_N\cap G_1| = 1$, and $|D_N'\cap G_1| = N$.
Using these candidate sets, we compare rankings $r\in\mathcal{R}_{\operatorname{first}}(D_N)$ in which only a single protected candidate is retrieved, but ranked on top, with rankings $r'\in\mathcal{R}_{\operatorname{last}}(D_N')$ in which half of the retrieved candidates are from the protected group, but ranked at the bottom of the ranking.
One might argue that for smaller candidate sets, the first ranking should be considered more favorable, whereas for longer rankings, having retrieved more candidates from the protected group should yield higher fairness scores.

\begin{figure}[t]
     \centering
     \begin{subfigure}[t]{0.653\textwidth}
         \centering
         \includegraphics[width=\textwidth]{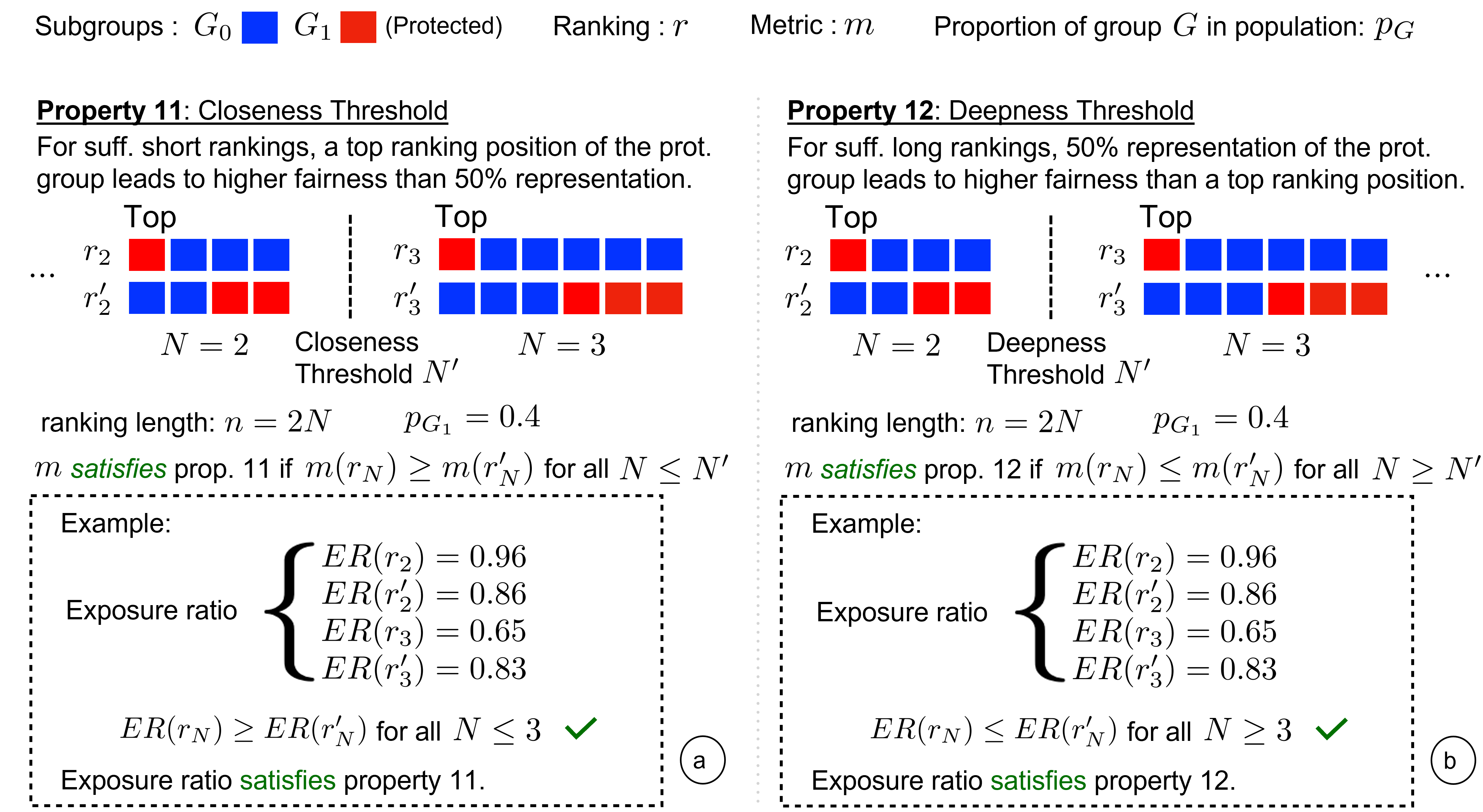}
         \label{fig:3.1}
     \end{subfigure}
     \begin{subfigure}[t]{0.335\textwidth}
         \centering
         \includegraphics[width=\textwidth]{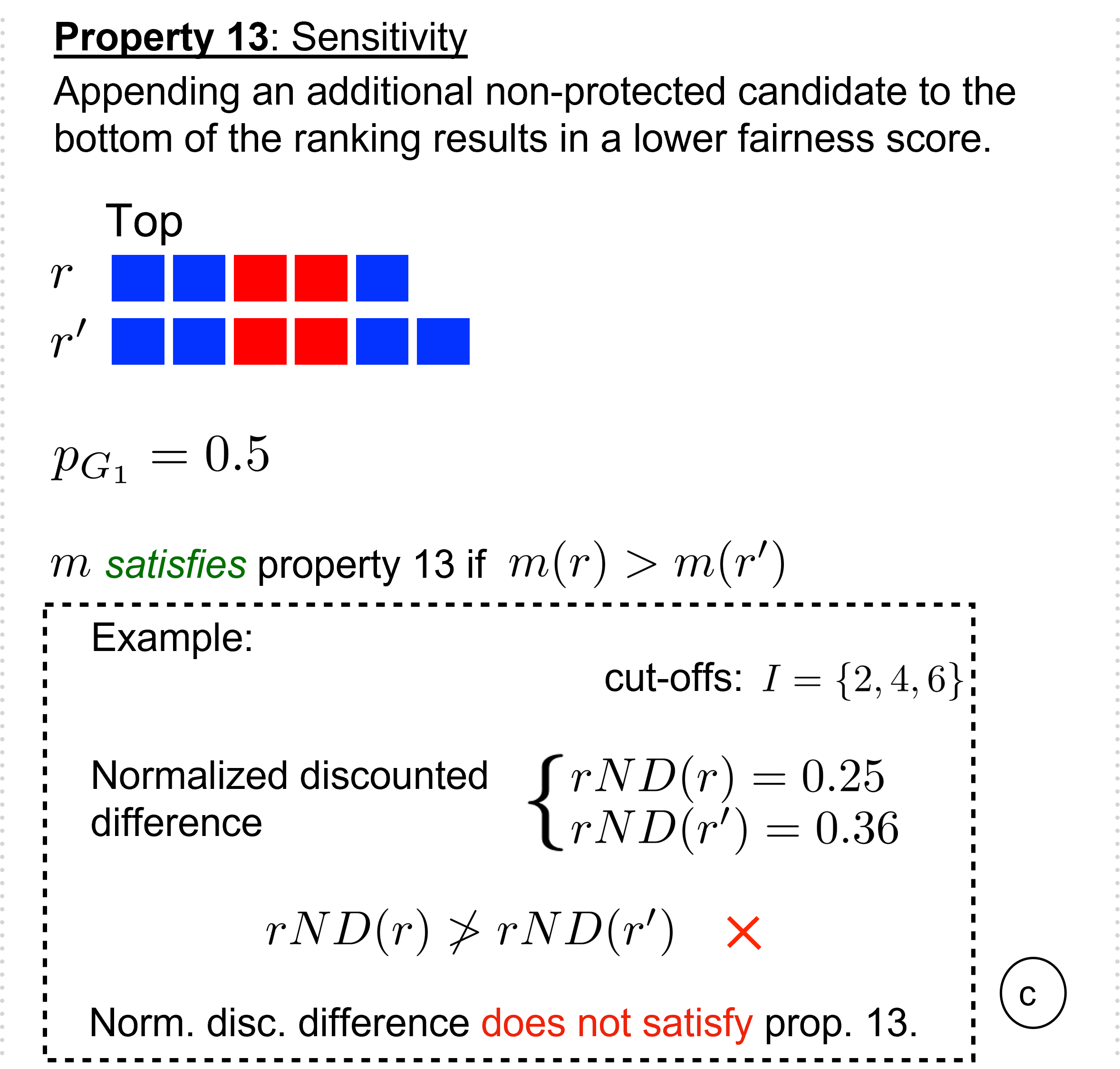}
         \label{fig:3.2}
     \end{subfigure}\vspace{-5mm}
     \caption{\emph{Properties for ranking subsets of a population.} Provided that only a subset of the candidate population is ranked, we illustrate three properties using exemplary ranking scenarios in panels (a)-(c). In each scenario, a ranking candidate belongs to either a \textcolor{red}{protected} or a \textcolor{blue}{non-protected} group.
     In (a) and (b), the given properties 11 and 12 consider and compare two kinds of rankings with identical, but varying length $n=2N$. The first type of ranking contains only a single protected candidate that is however ranked at position 1, whereas in the second type of ranking, 50\% of its candidates are from the protected group, but contrarily, all of these placed at the latter half of the ranking. 
     Now property 11 requires that for sufficiently small $N$, a metric $m$ always assigns a higher score for the first kind of rankings.
     Contrarily, property 12 requires that for sufficiently large $N$, a metric $m$ always assigns a higher score for the second kind of rankings.
     Our examples show that both of these properties are satisfied by exposure ratio (\er). 
     In (c), property 13 requires that appending a candidate of the non-protected group at the end of any ranking always results in lower fairness with respect to $m$. 
     The given example shows that this does not hold for \rnd{}. }
     \label{fig:setting2} 
\end{figure}

\subheader{\underline{Property 11: Closeness Threshold.}}
This property states that for sufficiently short rankings, placing a single candidate of the protected group at the top of a ranking should lead to a higher fairness score than having retrieved many protected candidates that are placed at the end of a ranking. 
\begin{definition}
A ranking metric $m$ has a \emph{closeness threshold}, if for any population $\D$ with uniform relevance and candidate sets $D_N,D_N'\subsetneq\D$ as defined above, there exists a $N'\in\N$ such that for all $N\leq N'$, $r\in\mathcal{R}_{\operatorname{first}}(D_N)$, and $r'\in\mathcal{R}_{\operatorname{last}}\big(D_N'\big)$ it holds that 
$ m(r(D_N)) > m\big(r'\big(D'_N\big)\big)$.
\end{definition}


\subheader{\underline{Property 12: Deepness Threshold.}}
Complementing the \emph{closeness threshold}, this property requires that for sufficiently long rankings, retrieving many candidates of the protected group should yield a higher fairness score than only retrieving a single protected candidate but placing it in the top ranking position.
\begin{definition}
A ranking metric $m$ has a \emph{deepness threshold}, if for any population $\D$ with uniform relevance and candidate sets $D_N,D_N'\subsetneq\D$ as defined above, there exists a $N'\in\N$ such that for all $N\geq N'$, $r\in\mathcal{R}_{\operatorname{first}}(D_N)$, and $r'\in\mathcal{R}_{\operatorname{last}}\big(D_N'\big)$, it holds that 
$ m(r(D_N)) < m\big(r'\big(D'_N\big)\big)$.
\end{definition}

\subheader{\underline{Property 13: Sensitivity.}}
Our final property represents the intuition that appending an additional non-protected candidate to the bottom of any ranking should reduce the fairness score with respect to a given metric.
\begin{definition}
A ranking metric $m$ satisfies \emph{sensitivity}, if for any population $\D$ with uniform relevance, any candidate set $D\subsetneq \D$, any ranking $r\in\mathcal{R}(D)$, and any candidate $d'\in G_0$, $d'\notin D$, the ranking $r':=\langle r(1),\dots,r(n),d'\rangle$ has a lower fairness score than $r$, i.e., it holds that $m(r') < m(r)$.
\end{definition}

\begin{figure}[t]
	\centering
	\begin{subfigure}[t]{0.35\textwidth}
		\centering
		\includegraphics[width=\textwidth]{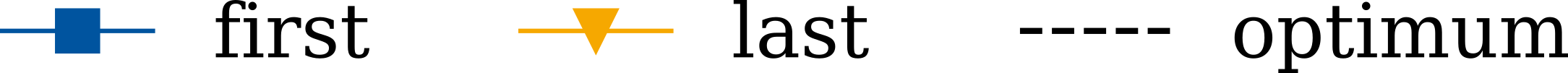}
	\end{subfigure} 
	
	\begin{subfigure}[t]{0.3\textwidth}
		\centering
		\includegraphics[width=\textwidth]{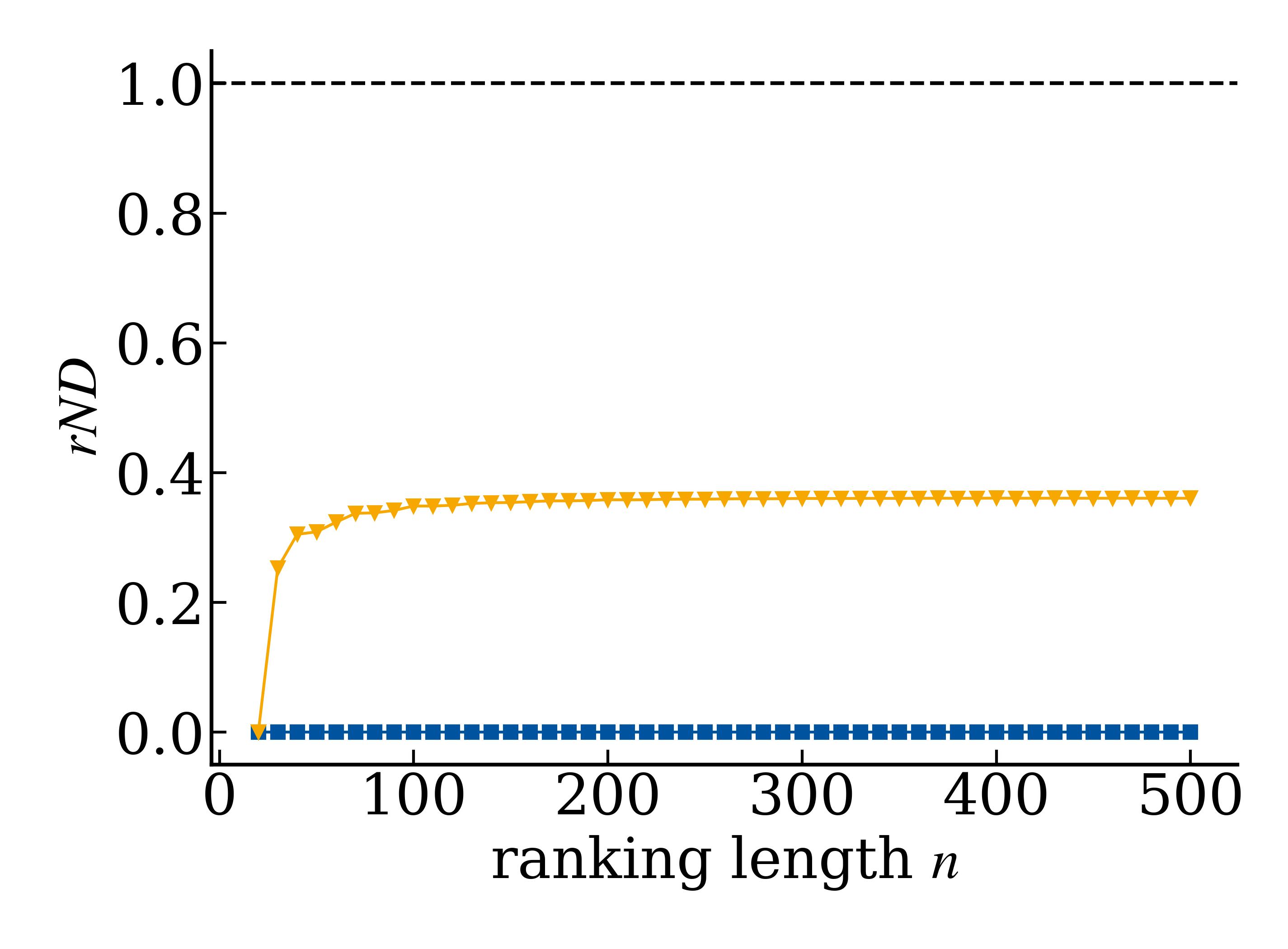}
		\subcaption{normalized disc. difference \rnd{} \hspace{0.6mm} \cross}
	\end{subfigure}
	\begin{subfigure}[t]{0.3\textwidth}
		\centering
		\includegraphics[width=\textwidth]{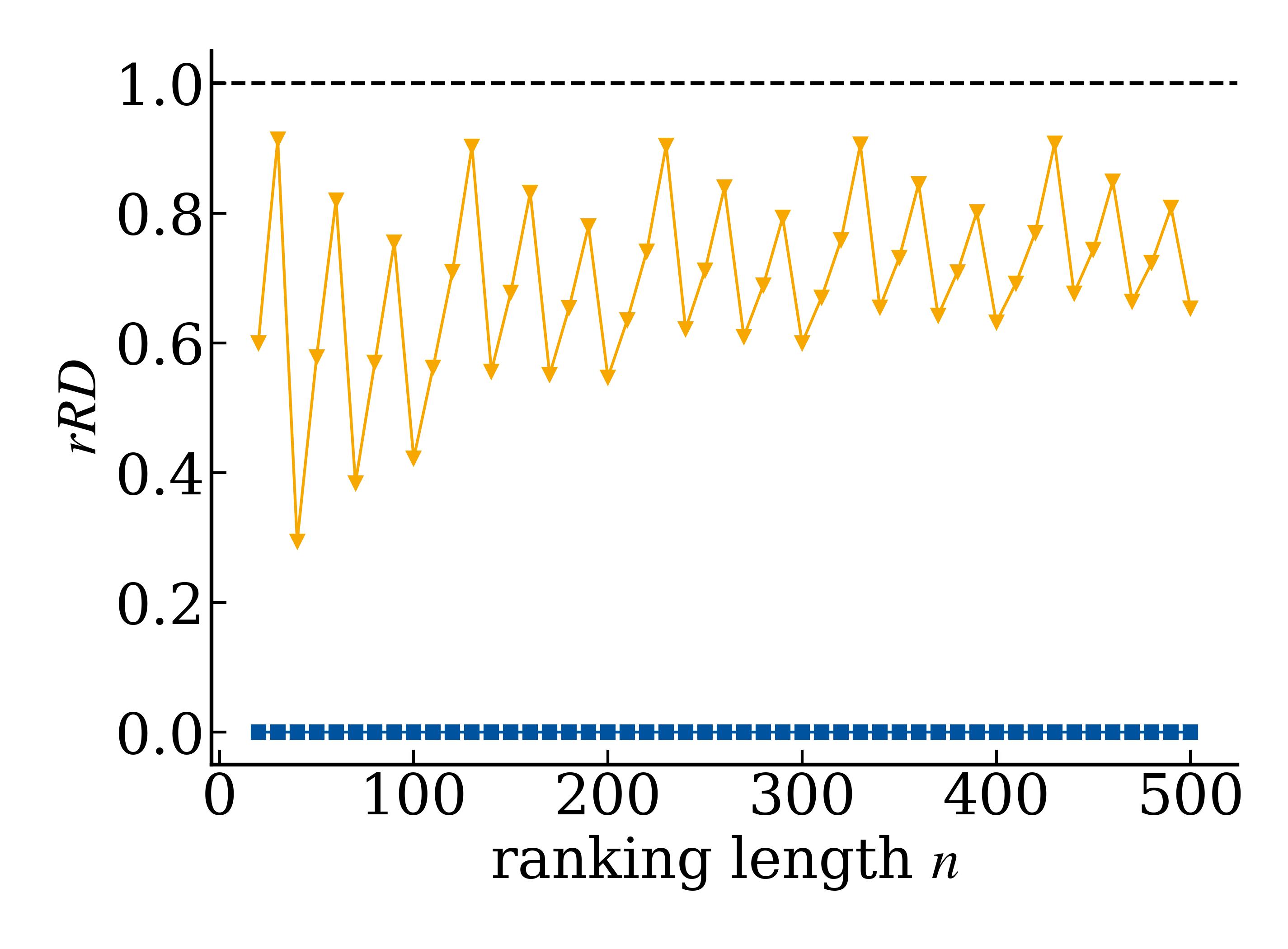}
		\subcaption{normalized disc. ratio \rrd{} \hspace{0.6mm} \cross}
	\end{subfigure}
	\begin{subfigure}[t]{0.3\textwidth}
		\centering
		\includegraphics[width=\textwidth]{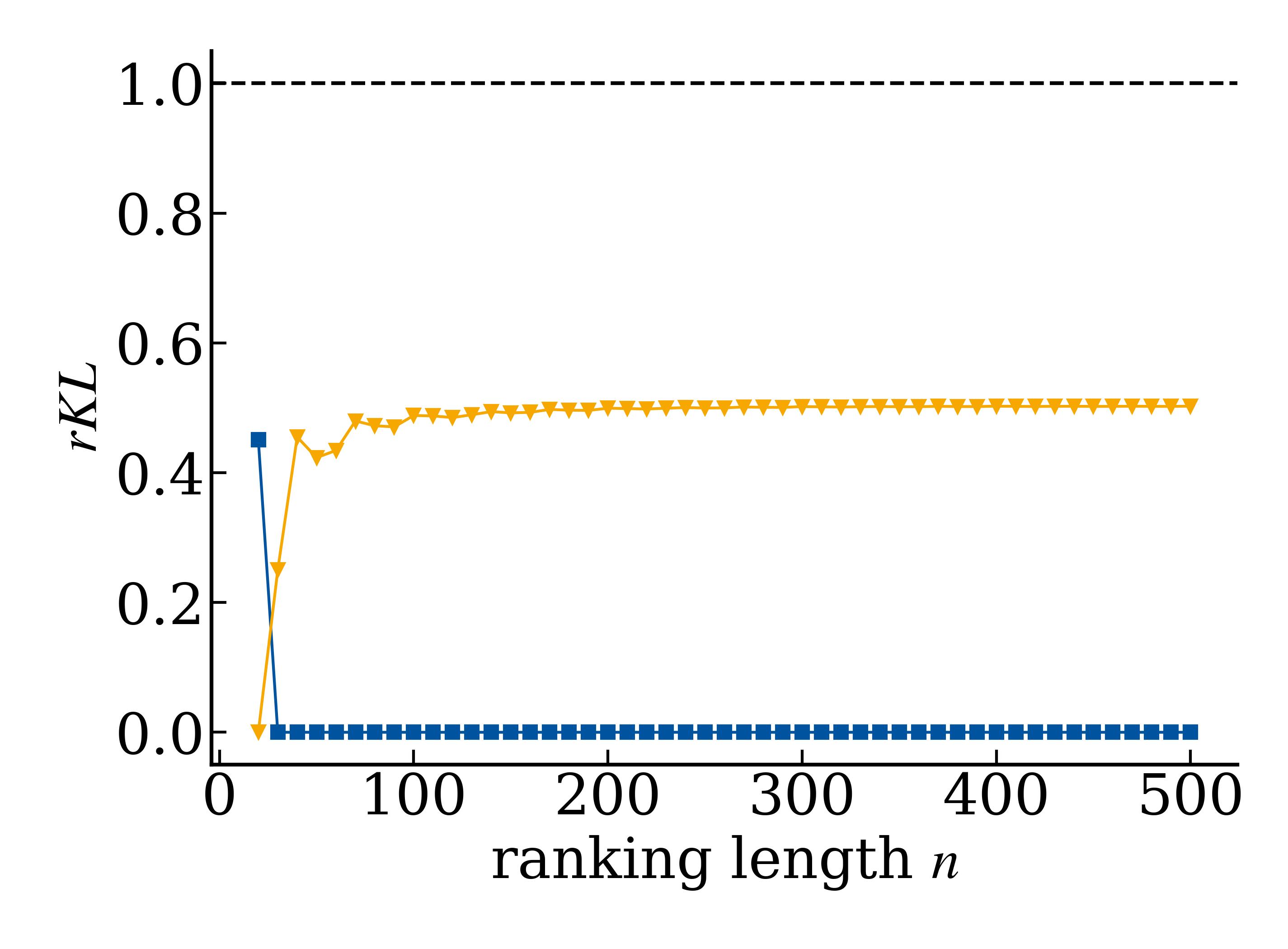}
		\subcaption{norm. disc. KL-divergence \rkl{} \hspace{0.6mm} \cross}
	\end{subfigure}
	\begin{subfigure}[t]{0.3\textwidth}
		\centering
		\includegraphics[width=\textwidth]{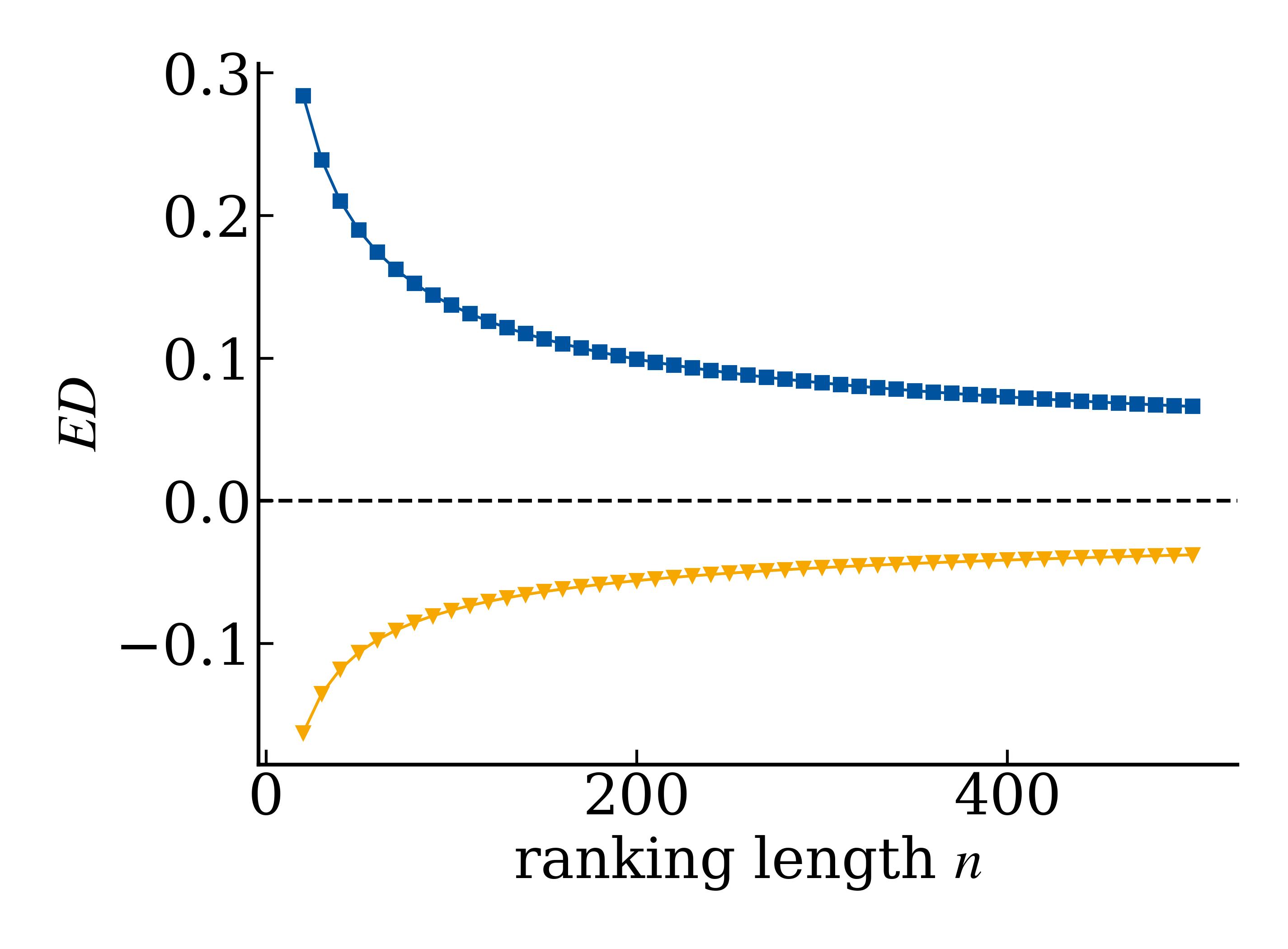}
		\subcaption{\ed{}, \dtd{} and \did{} \hspace{0.6mm} \cross}
	\end{subfigure}
	\begin{subfigure}[t]{0.3\textwidth}
		\centering
		\includegraphics[width=\textwidth]{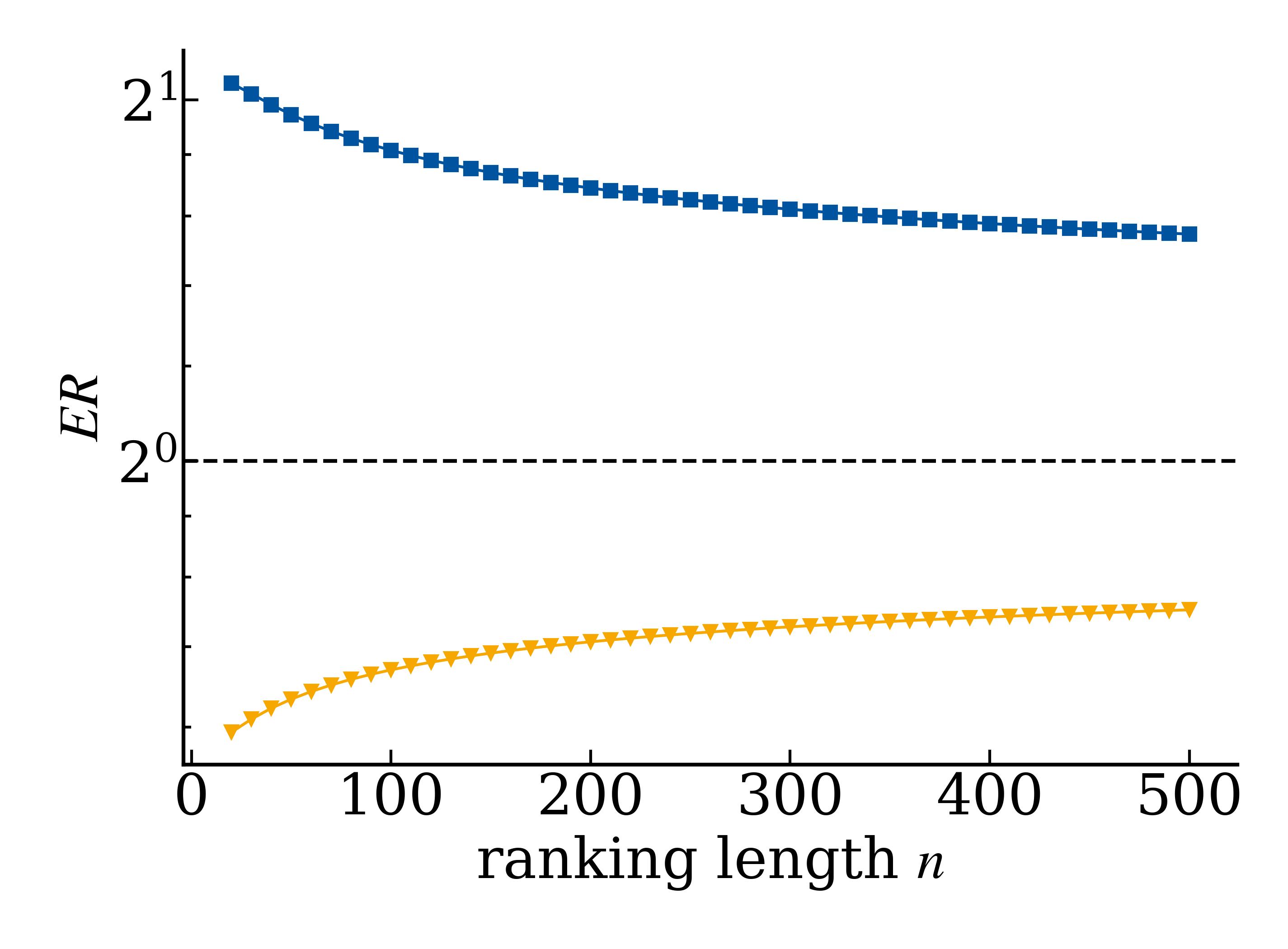}
		\subcaption{\er{}, \dtr{} and \did{} \hspace{0.6mm} \cross}
	\end{subfigure}
	\begin{subfigure}[t]{0.3\textwidth}
		\centering
		\includegraphics[width=\textwidth]{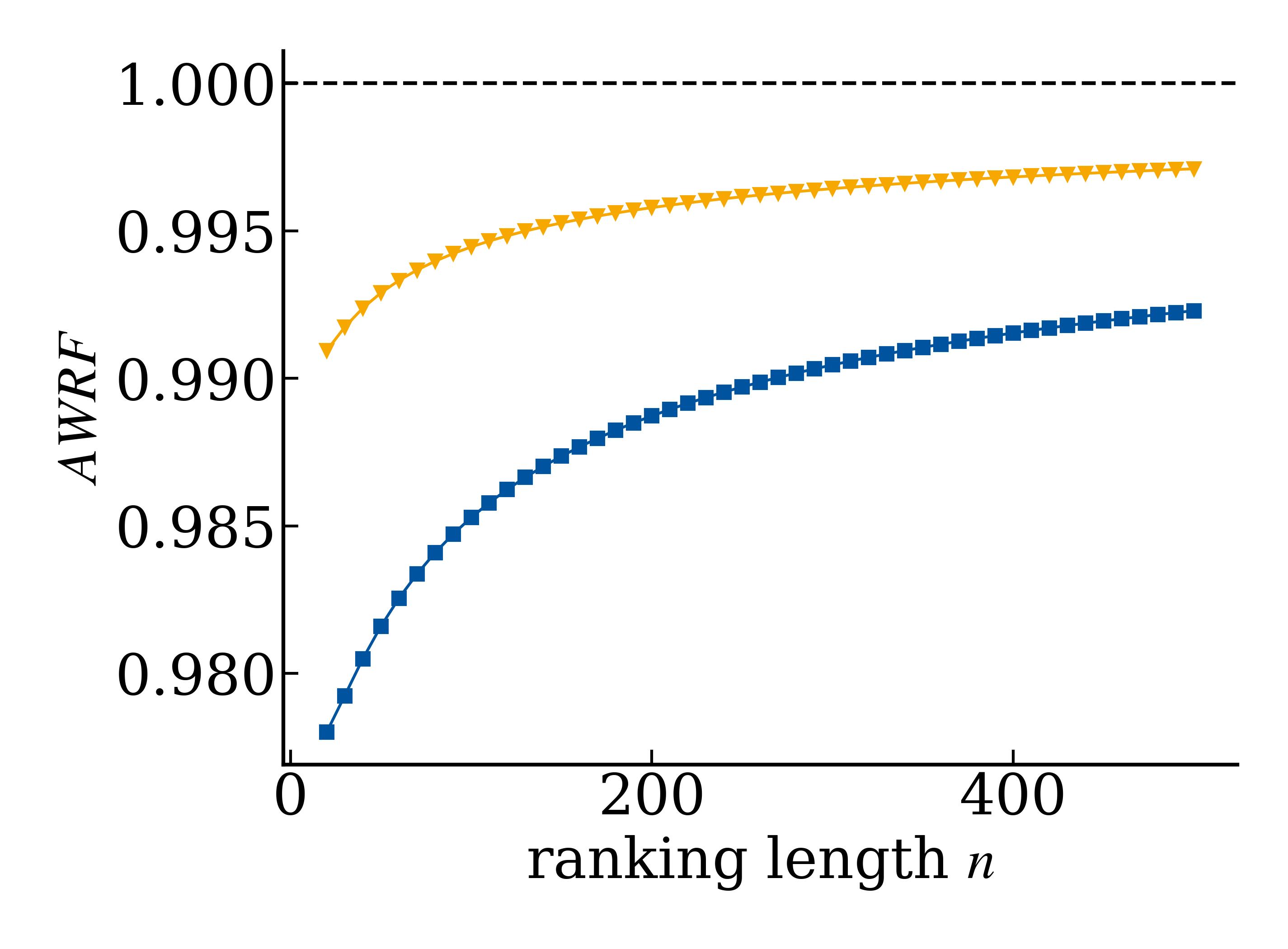}
		\subcaption{att.-weighted rank fairness \awrf{} \hspace{0.6mm} \cross}
	\end{subfigure}
    \caption{\emph{Property 8: invariance to ranking length.} We show the behavior of fairness metrics for varying ranking length $n$. 
    \texttt{Last} (blue) represents rankings in which all candidates from the protected group are ranked higher than each candidate from the non-protected group.
    Conversely, \texttt{first} (yellow) represents rankings in which all candidates from the protected group are ranked lower than each candidate from the non-protected group.
    The proportion of the protected group is fixed at $p_{G_1} = 0.3$, but qualitatively the results are the same for other values of $p_{G_1}$.
    If the markers of the different ranking types each remain at a constant value, the respective metric satisfies \emph{invariance to ranking length}. 
    We assume uniform relevance, therefore \dtd{}, \did{}, \dtr{} and \dir{} are equal to \ed{} and \er{}, respectively. 
    We observe that none of the shown metrics seem invariant to ranking length ($\textcolor{red}{\times}$). 
    This implies that for all these metrics, the values obtained on rankings of different length are hardly comparable.}
	\label{fig:ax1}
\end{figure}

\section{Experimental Analysis}
In principal, probing existing metrics for the proposed properties can be done in a completely theoretical manner.
However, we choose to also conduct and present experiments since they give us some more nuanced insights into the behavior of the given metrics, specifically with respect to properties 6 and 8-11.
Thus, we conduct experiments on synthetic and empirical ranking data data to assess whether the presented metrics satisfy the proposed properties.
First, we probe the metrics with respect to properties 8-11, which all assume uniform relevance and are thus easiest to assess via synthetic rankings.
Afterwards, we sample rankings from an empirical dataset that includes real world relevance scores to assess compliance with property 6 (\emph{invariance to linear transformations of relevance scores}). \\
We also mostly neglect \emph{pairwise statistical parity} (\psp) in the following, since this metric will be analyzed in Section \ref{sec:theory}.

\begin{figure}[t]
	\centering
	\begin{subfigure}[t]{0.35\textwidth}
		\centering
		\includegraphics[width=\textwidth]{img/prop/legend}
	\end{subfigure} 
	
	\begin{subfigure}[t]{0.3\textwidth}
		\centering
		\includegraphics[width=\textwidth]{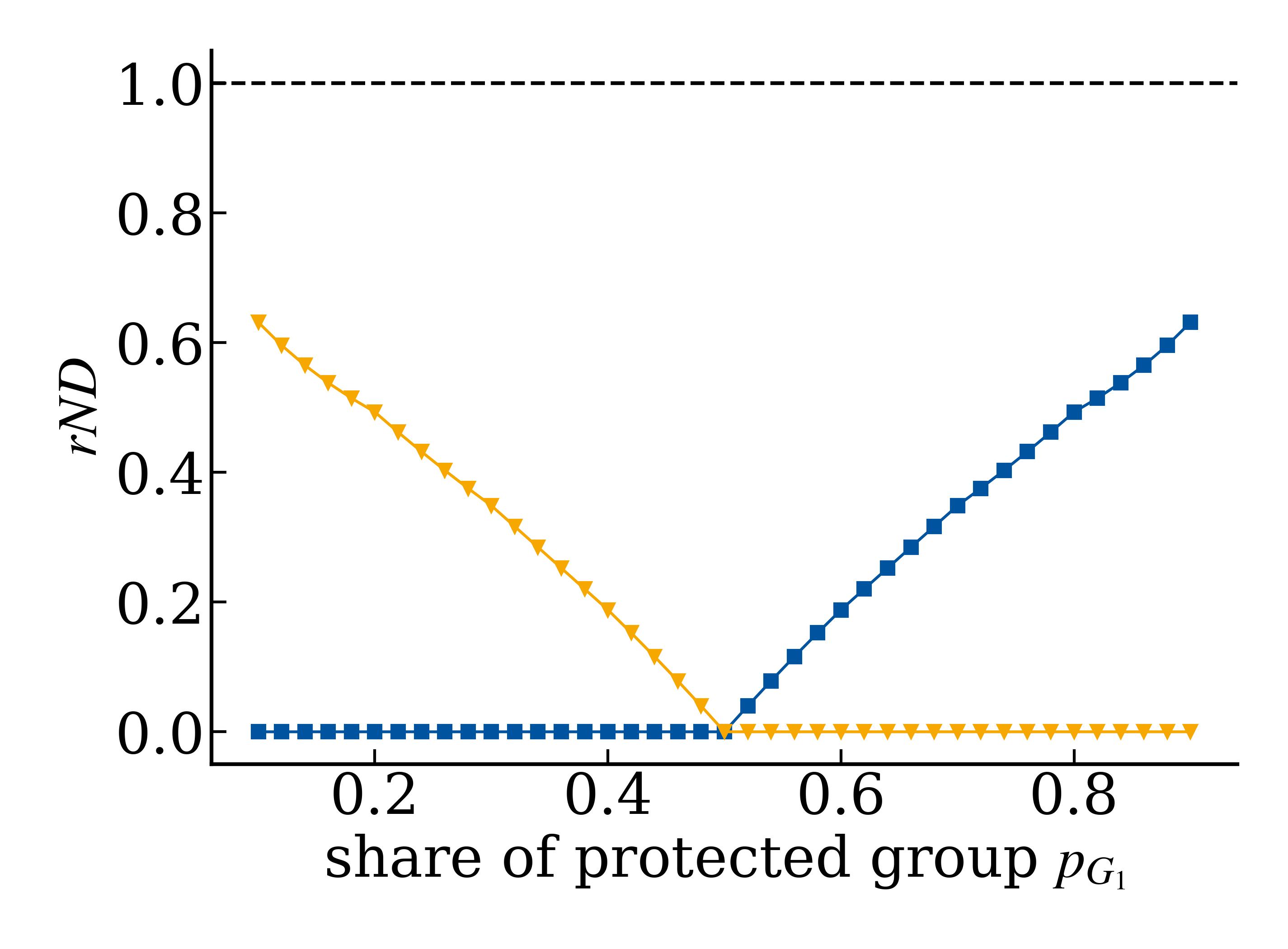}
		\subcaption{normalized disc. difference \rnd{} \hspace{0.8mm} \cross}
	\end{subfigure}
	\begin{subfigure}[t]{0.3\textwidth}
		\centering
		\includegraphics[width=\textwidth]{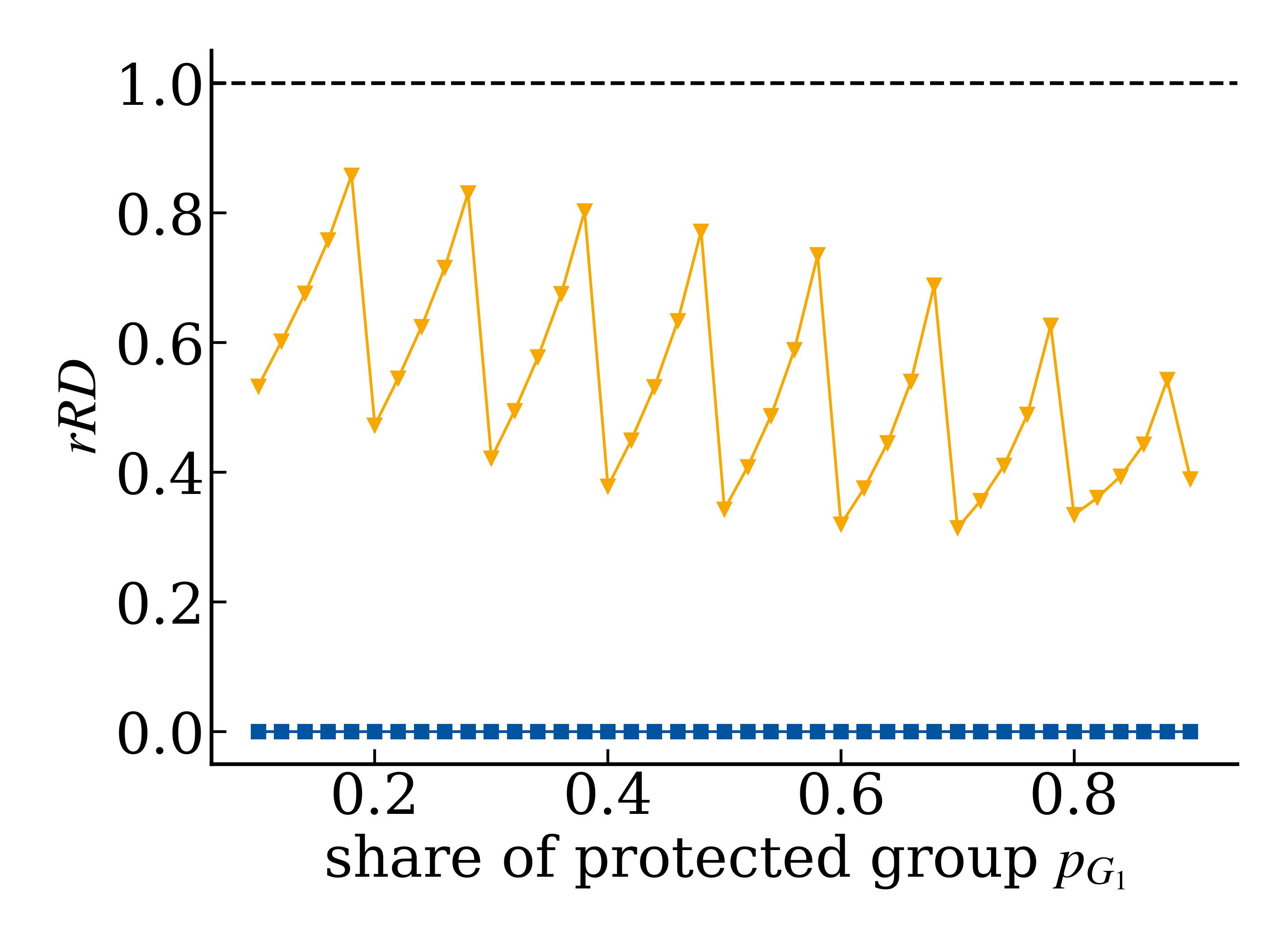}
		\subcaption{normalized disc. ratio \rrd{} \hspace{0.8mm} \cross}
	\end{subfigure}
	\begin{subfigure}[t]{0.3\textwidth}
		\centering
		\includegraphics[width=\textwidth]{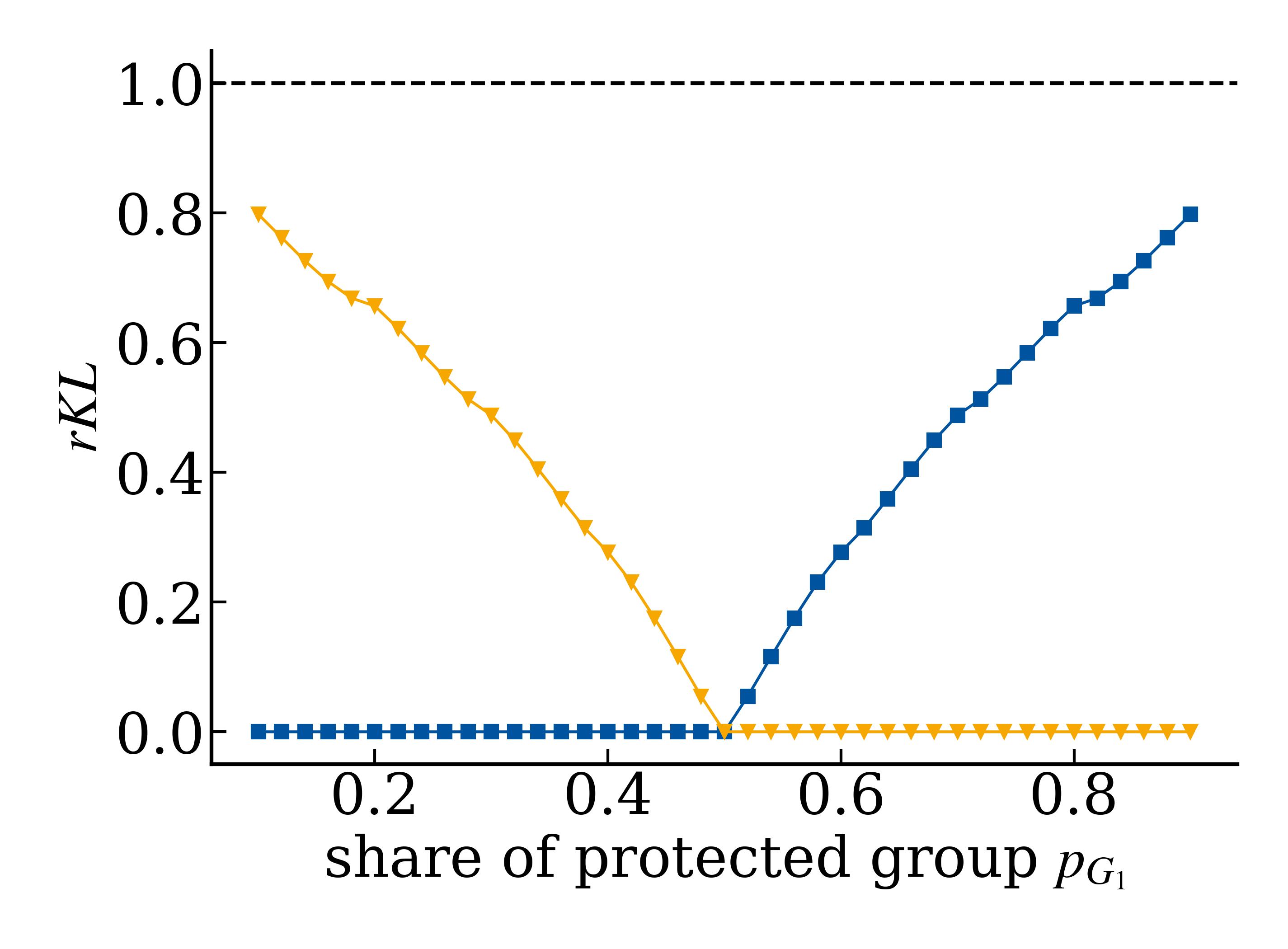}
		\subcaption{norm. disc. KL-divergence \rkl{} \hspace{0.8mm} \cross}
	\end{subfigure}
	\begin{subfigure}[t]{0.3\textwidth}
		\centering
		\includegraphics[width=\textwidth]{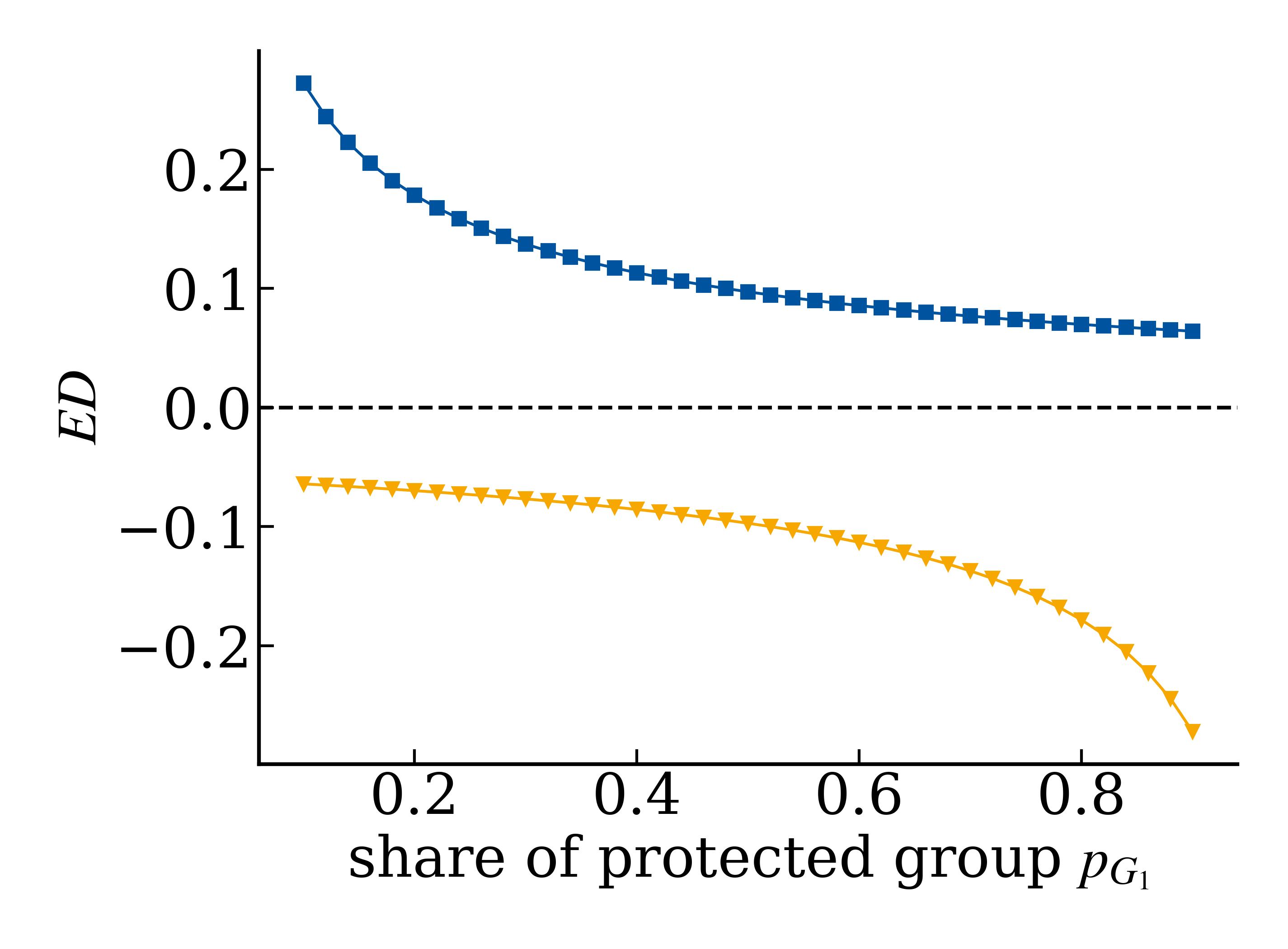}
		\subcaption{\ed{}, \dtd{} and \did{} \hspace{0.8mm} \cross}
	\end{subfigure}
	\begin{subfigure}[t]{0.3\textwidth}
		\centering
		\includegraphics[width=\textwidth]{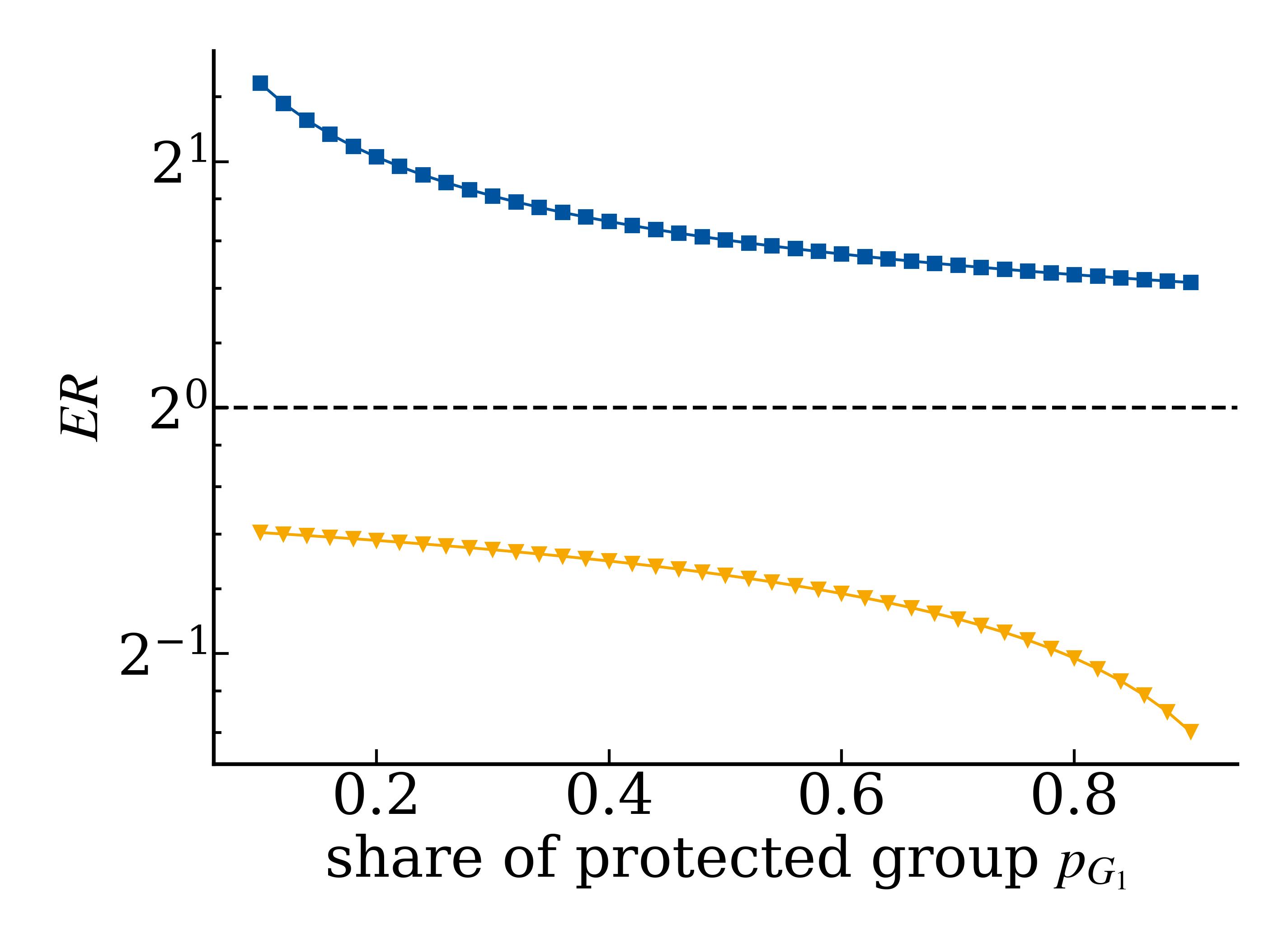}
		\subcaption{\er{}, \dtr{} and \dir{} \hspace{0.8mm} \cross}
	\end{subfigure}
	\begin{subfigure}[t]{0.3\textwidth}
		\centering
		\includegraphics[width=\textwidth]{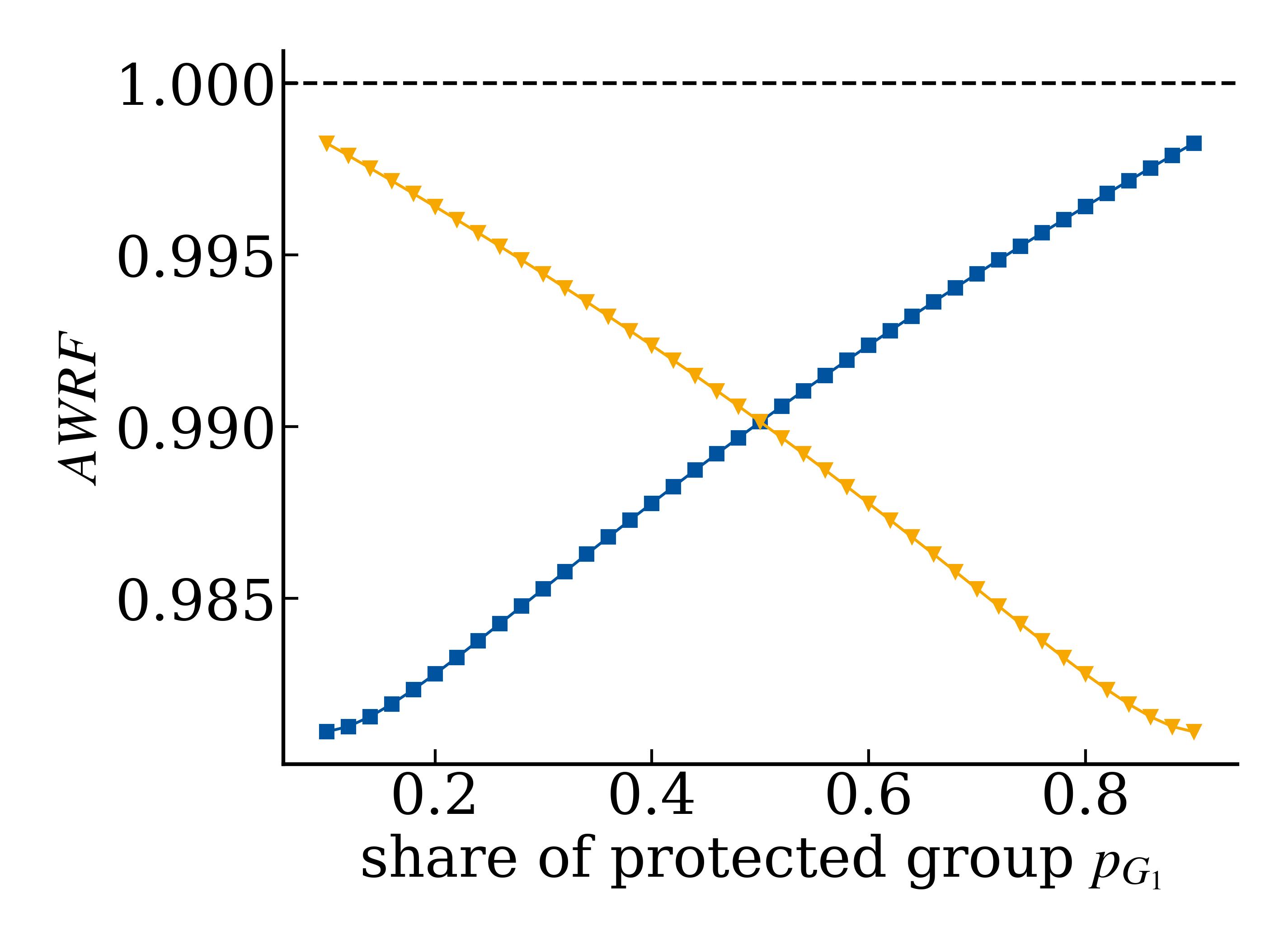}
		\subcaption{att.-weighted rank fairness \awrf{} \hspace{0.8mm} \cross}
	\end{subfigure}
	\caption{\emph{Property 9: invariance to group proportion, and property 10: symmetric penalties for all groups.} We illustrate the behavior of fairness metrics for varying proportions of the protected group $p_{G_1}$. 
    \texttt{Last} (blue) represents rankings in which all candidates from the protected group are ranked higher than each candidate from the non-protected group.
    Conversely, \texttt{first} (yellow) represents rankings in which all candidates from the protected group are ranked lower than each candidate from the non-protected group.
	The results are shown for a fixed ranking length of $n = 100$ but the results are qualitatively the same for other choices of $n$. Assuming uniform relevance, \dtd{}, \did{}, \dtr{} and \dir{} are equal to \ed{} and \er{}, respectively. If the markers of the different ranking types each remain at a constant value, the respective metric satisfies \emph{invariance to group proportions}.
	We can see that this is the case for none of the shown metrics ($\textcolor{red}{\times}$). 
	This implies that these metrics should not be used to compare the fairness of rankings that differ in terms of group shares. 
	With regard to property 10, a metric assigns \emph{symmetric penalties for all groups} if the distance to the optimum is the same for the blue and yellow markers for a fixed group proportion. 
	We observe that none of the shown metrics satisfy this property either.
	}
	\label{fig:ax2}
\end{figure}

\subsection{Experiments on Synthetic Rankings.}\label{sec:synex}

To assess whether the presented metrics satisfy property 8 (\emph{invariance to ranking length}), property 9 (\emph{invariance to group proportions}), property 10 (\emph{symmetric penalties for all groups}), and property 11 (\emph{closeness threshold}), we generate two different types of synthetic rankings:
\begin{enumerate}
	\item[1.]\texttt{first}: all protected candidates are placed at the top,
	\item[2.]\texttt{last}: all protected candidates are placed at the bottom.
\end{enumerate}
Rankings of type \texttt{first} and \texttt{last} correspond to rankings in which one group has the strongest advantage possible over the other group. We assume uniform relevance, hence the scores that each metric $m$ assigns to these ranking types on the candidate sets $D$ correspond to the extreme values $\vf(m,D)$ and $\vl(m,D)$, respectively.
As in this case \dtd{}, \dtr{}, \did{} and \dir{} are equivalent to \ed{} and \er{}, respectively, we do not explicitly report the results for them.

\subheader{Property 8: Invariance to Ranking Length.}
In order to probe the metrics for \emph{invariance to ranking length}, we generate rankings of type \texttt{first} and \texttt{last} and vary the size $n$ of the ranked population $\D$ from $20$ to $500$, with increments of $10$. The share of the protected group remains fixed at $p_{G_1}=0.3$, and the cut-offs of the prefix metrics are set to $I = \{10, 20, 30, ...\}$.
Figure \ref{fig:ax1} shows the behavior of all ranking metrics (aside from \psp{}) with respect to property 8. 
The plots indicate that none of the presented metrics are invariant to ranking length.
For the prefix metrics \rnd{} and \rkl{}, we observe that the extreme value $\vl(\D)$ is lower for shorter rankings than for longer rankings, and that it appears to converge to a fix value with increasing population size $n$. 
Regarding the third prefix metric \rrd{}, we observe much more variance in the extreme value $\vl(\D)$, presumably depending on whether the exact group shares of the overall ranking are numerically achievable in the respective ranking prefixes.
Further, for all prefix metrics we observe that in the given population $\D$, in almost all of the shown cases the extreme value $\vl(\D)$ is bigger than $\vf(\D)$, from which it directly follows that these metrics do not satisfy property 3 (\emph{monotonicity}).
For the exposure metrics, we observe that the extreme values $\vl(\D)$ and $\vf(\D)$ appear to be impacted significantly by the ranking length, seemingly converging toward their optimal values $\vopt$ with increasing $n$. 
For \awrf{}, it appears that $\vl(\D)$ and $\vf(\D)$ both increase along with increasing ranking length.
This increase appears to be the  most pronounced for smaller values of $n$, slowly decaying for longer rankings.
Further, we observe that under \awrf{}, the extreme value $\vl(\D)$ seems to be consistently higher than the value $\vf(\D)$, 
and thus this metric does not satisfy \emph{monotonicity} either.


\subheader{Property 9: Invariance to Group Proportions.}
To examine whether the presented metrics satisfy \emph{invariance to group proportions}, we fix the ranking length at $n = 100$ and vary the proportion of the protected group $p_{G_1}$ from $0.1$ to $0.9$ in increments of $0.02$. 
Again, the cut-offs of the prefix metrics are set to $I = \{10, 20, 30, ...\}$.
The behavior of the metrics is shown in Figure \ref{fig:ax2}, where we find that none of the presented metrics are invariant to group proportions. 
The plots illustrate that \emph{rND} and \emph{rKL} take on their highest value when the group that forms the majority is ranked last, regardless of whether it is the protected or non-protected group.
For the minority group, the corresponding values $\vf(\D)$ and $\vl(\D)$ appear to increase/decrease in an almost linear fashion.
For the third prefix metric \emph{rRD}, we observe that like in the case of varying ranking length, only the value $\vf(\D)$ appears invariant to such variations, whereas $\vl(\D)$ again shows strong variations. 
For all exposure metrics, we observe that both extreme values $\vf(\D)$ and $\vl(\D)$ decrease significantly with increasing share of the protected group. 
\awrf{} appears to behave similar to \rnd{} and \rkl{}, with the value $\vl(\D)$ decreasing and the value $\vf(\D)$ increasing with increasing share of the protected group $p_{G_1}$, both in a linear fashion, with these values appearing to be equal at $p_{G_1}=0.5$.

\begin{figure}[t]
	\centering
	\begin{subfigure}[b]{0.35\textwidth}
		\centering
		\includegraphics[width=\textwidth]{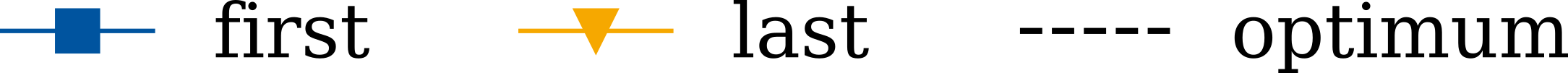}
	\end{subfigure} 
	
	\begin{subfigure}[t]{0.24\textwidth}
		\centering
		\includegraphics[width=\textwidth]{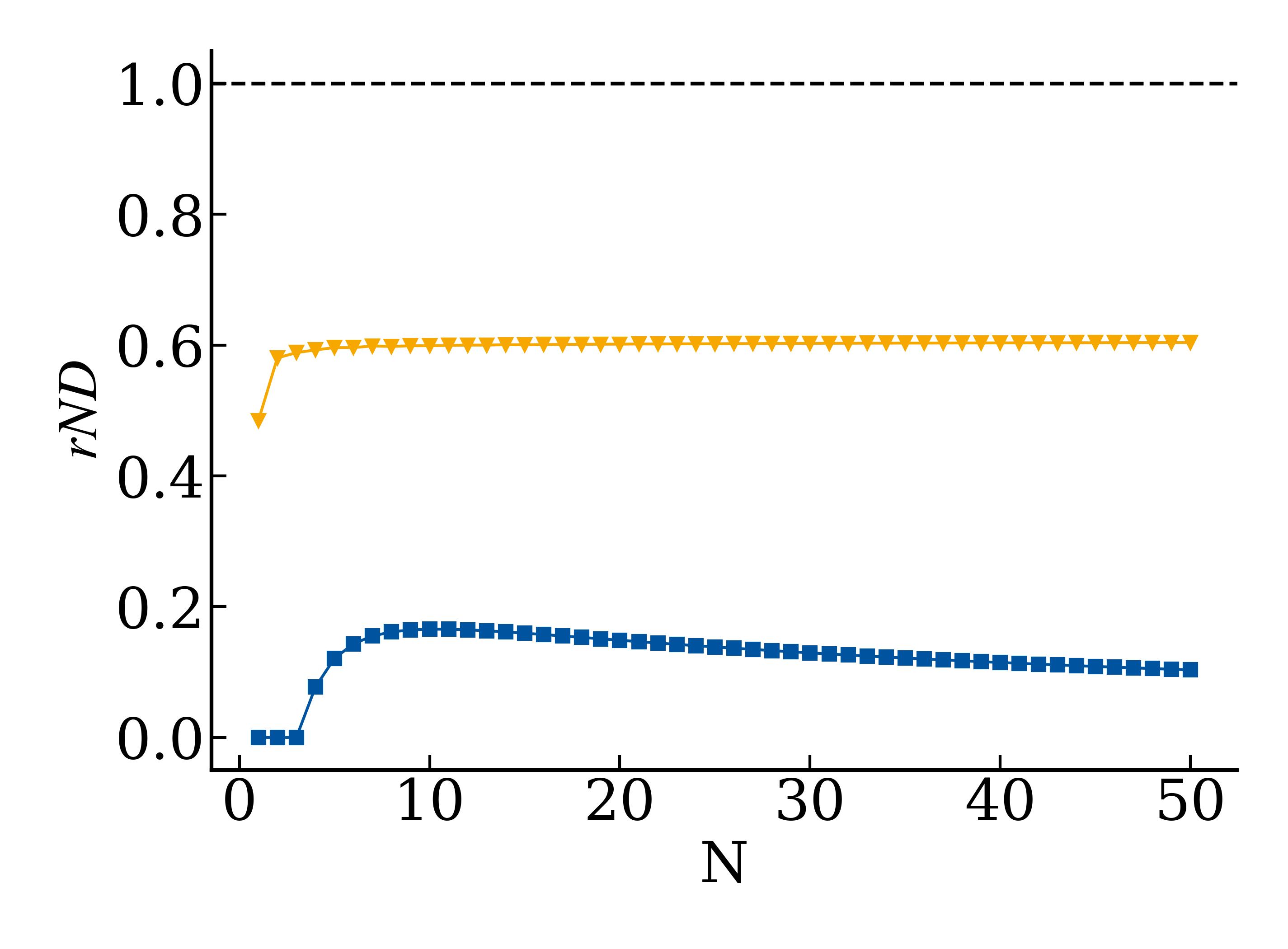}
		\subcaption{norm. disc. diff. \rnd{} \cross}
	\end{subfigure}
	\begin{subfigure}[t]{0.24\textwidth}
		\centering
		\includegraphics[width=\textwidth]{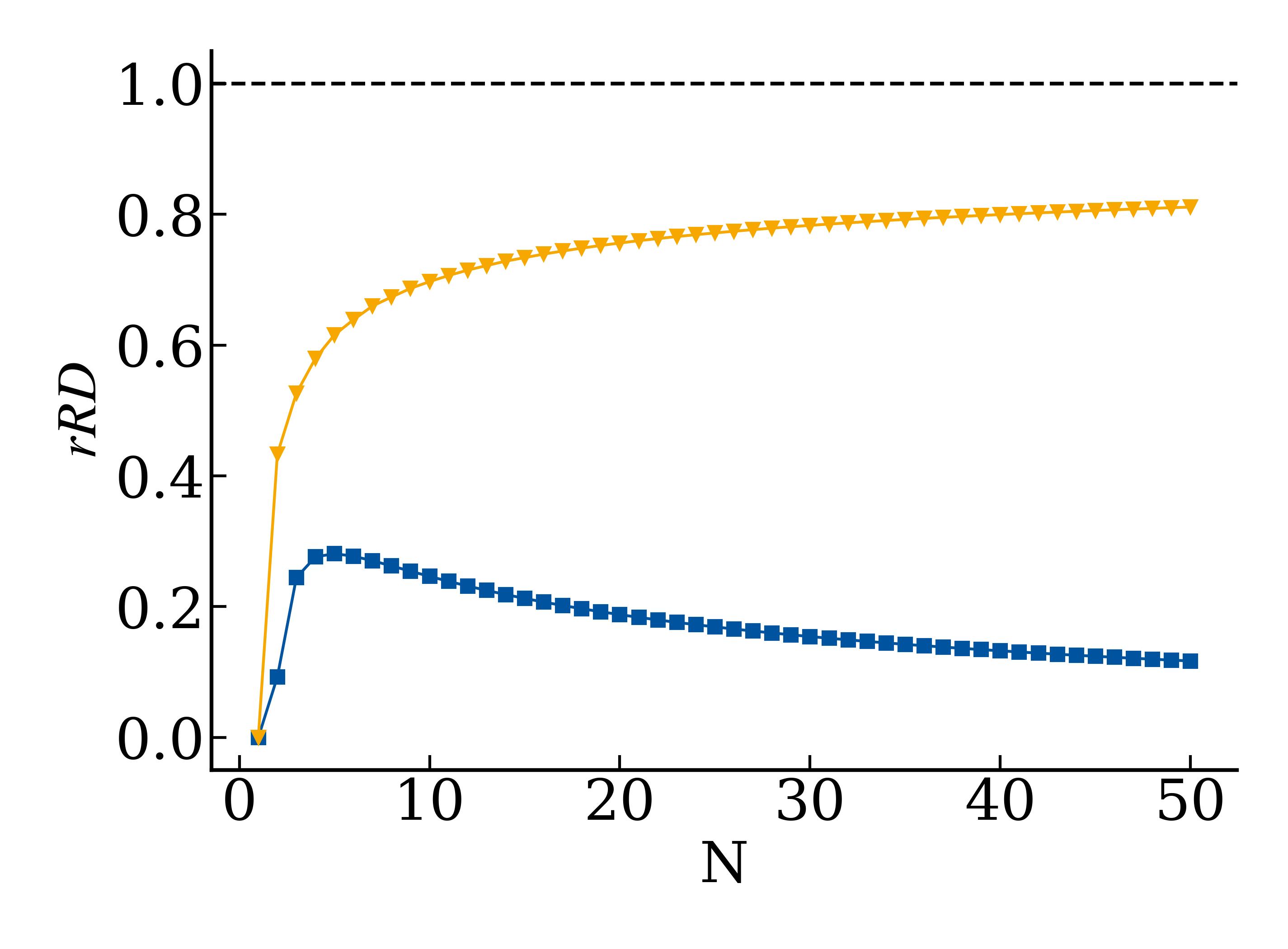}
		\subcaption{norm. disc. ratio \rrd{} \cross}
	\end{subfigure}
	\begin{subfigure}[t]{0.24\textwidth}
		\centering
		\includegraphics[width=\textwidth]{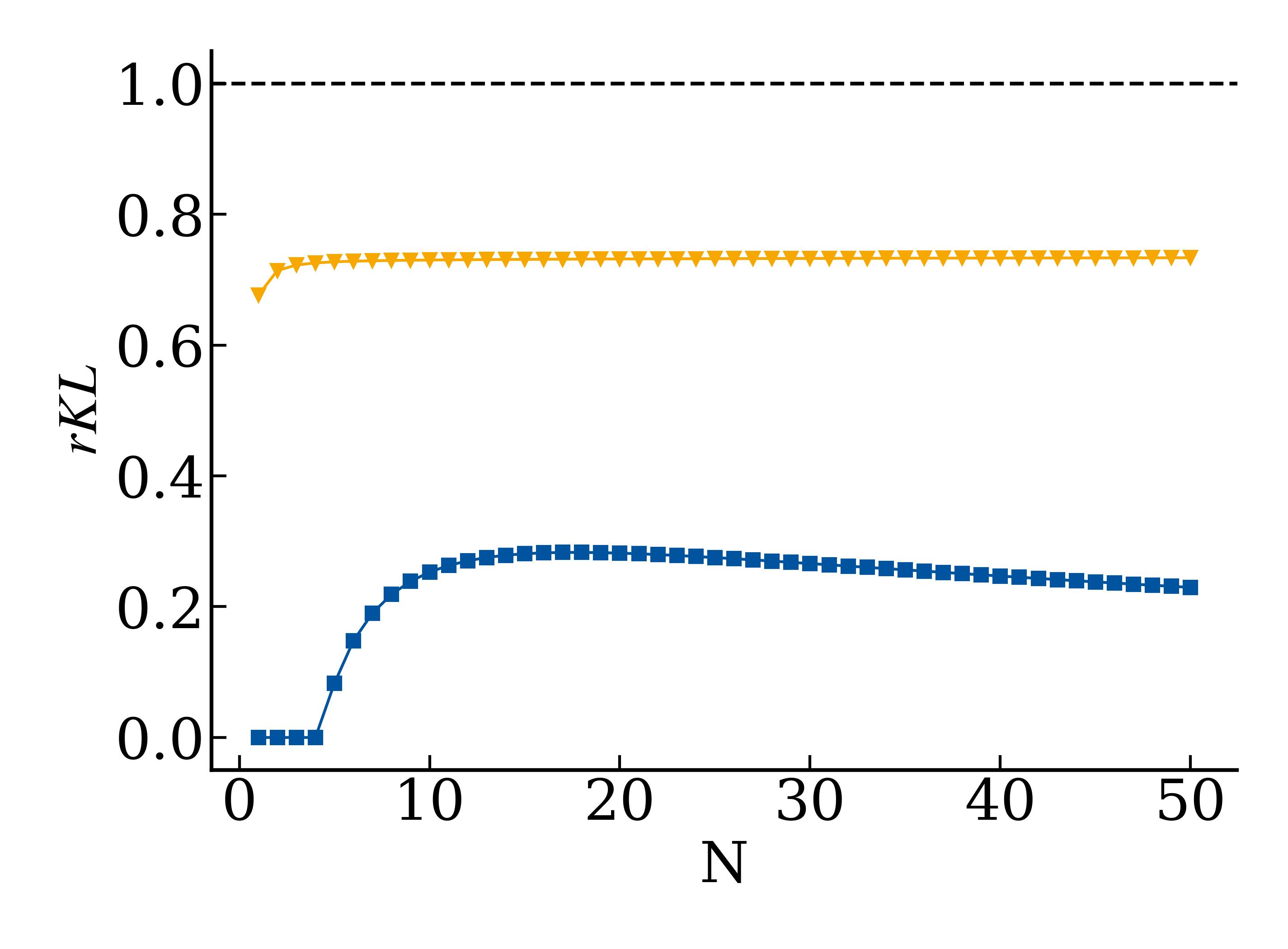}
		\subcaption{norm. disc. KL-div. \rkl{} \cross}
	\end{subfigure}
	\begin{subfigure}[t]{0.24\textwidth}
		\centering
		\includegraphics[width=\textwidth]{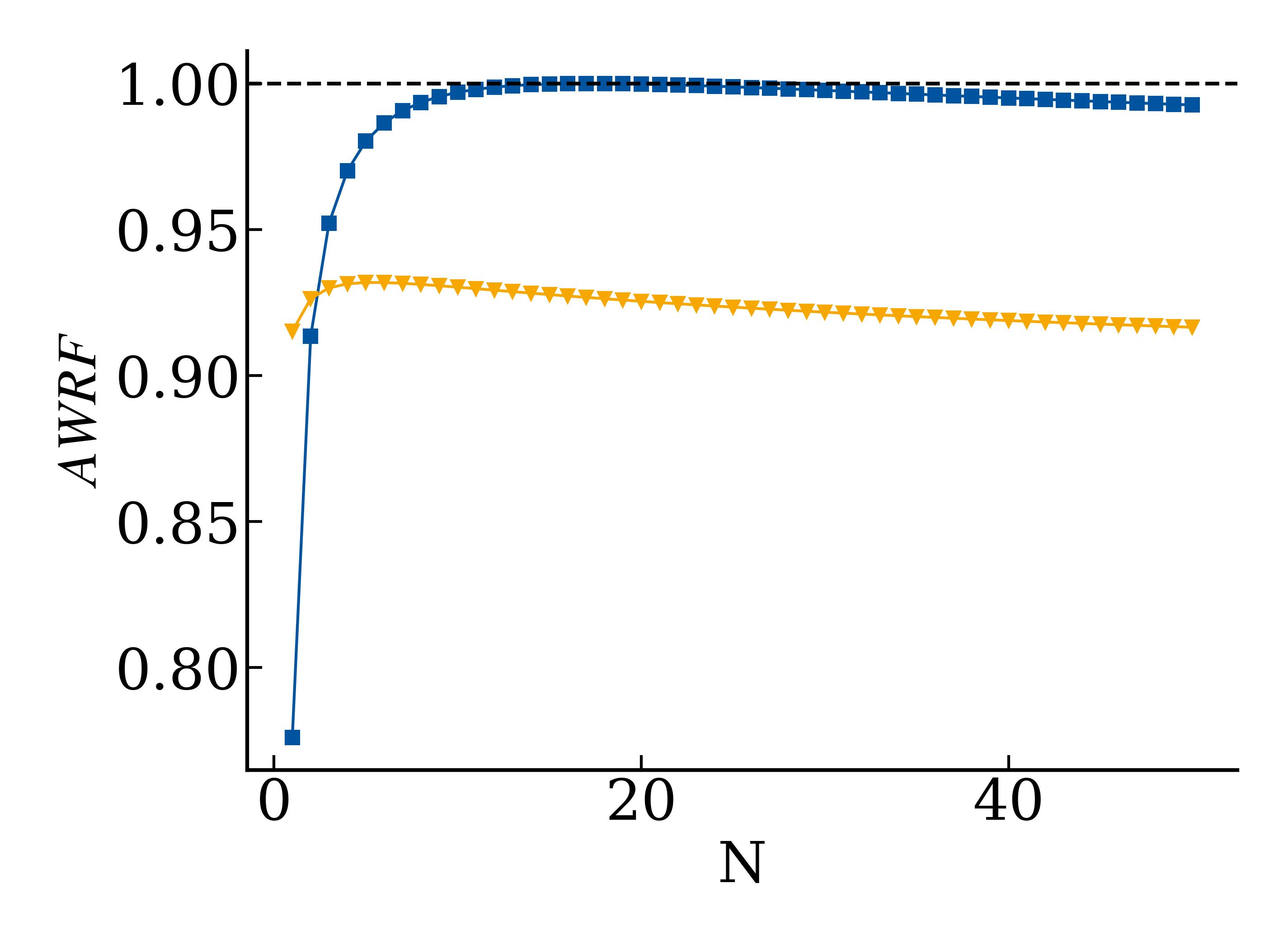}
		\subcaption{att. rank fairness \awrf{} \cross}
	\end{subfigure}
	\caption{\emph{Property 11: closeness threshold.} We illustrate two kinds of rankings for varying ranking length $n = 2N$. In the \texttt{first} rankings (blue), there is only a single candidate from the protected group present, but this candidate is placed at the first ranking position. In the \texttt{last} rankings (yellow), there are $N$ candidates from the protected group, all placed at the bottom half of the ranking. The fraction of the protected group in the population is at $P_{G_1}=0.3$ for the prefix metrics (\rnd, \rrd{} and \rkl{}) and at $P_{G_1}=0.1$ for attention-weighted rank fairness (\awrf). 
	In the plots we see that even for the lowest values $N\rightarrow 1$, the \texttt{last} ranking still yields higher fairness scores with respect to \rnd{}, \rrd{}, \rkl{}, and \awrf{} than the \texttt{first} ranking. This implies that property 11 is violated ($\textcolor{red}{\times}$).}
	\label{fig:ax11}
\end{figure}

\subheader{Property 10: Symmetric Penalties for Both Groups.}
Our experiments probing existing metrics for property 9 allow us to gain insights regarding \emph{symmetric penalties for both groups} as well.
Consequently, in Figure \ref{fig:ax2} one can also see that 
none of the depicted metrics assign \emph{symmetric penalties for both groups}. 
In general, we note that except for \rrd{}, it does not seem to matter whether a group is protected or not, but only whether it forms the minority or majority of candidates. Even more, we observe that all illustrated metrics except \rrd{}, assign rankings in which the minority is disadvantaged values that are closer to the optimal value $\vopt$ and thus suggest a fairer outcome than when the majority group is disadvantaged. It appears that symmetric penalties can only be achieved when both groups have the same cardinality.
For \rrd{}, we find that rankings in which the protected group is ranked last always achieve values closer to the optimum than rankings in which the protected group is ranked first.

Overall, these results illustrate that for most metrics, unbalanced group ratios lead to strong, asymmetric discrepancies in the fairness scores of the two extreme case rankings. In cases where one group forms a very small minority, disadvantaging this very group can still lead to fairness values close to the optimum, which implies that in settings where the minority is already experiencing discrimination, these metrics should be applied with caution.

\subheader{Property 11: Closeness Threshold.}
Figure \ref{fig:ax11} illustrates the behavior of \rnd{}, \rrd{}, \rkl{}, and \awrf{} on rankings $r\in\mathcal{R}_{\operatorname{first}}(D_N)$ and $r'\in\mathcal{R}_{\operatorname{last}}(D_N')$ for varying ranking length $n=2N$. 
For the prefix metrics, the proportion of the protected group in the full population equals $P_{G_1} = 0.3$, and for \awrf{}, we have $P_{G_1} = 0.1$.
We observe that for each of the shown metrics, at $N=1$ the ranking $r'\in\mathcal{R}_{\operatorname{last}}(D_N')$ does not have a lower fairness score than the ranking $r\in\mathcal{R}_{\operatorname{first}}(D_N)$.
From this, it follows that none of these metrics have a \emph{closeness threshold}. 
Even more, for \awrf{} it also appears that when $N>2$, the ranking $r\in\mathcal{R}_{\operatorname{first}}(D_N)$ has a higher value than the ranking $r'\in\mathcal{R}_{\operatorname{last}}(D_N')$, which would mean that property 12 \emph{(deepness threshold)} is not satisfied either.
In Section \ref{sec:theory}, we will confirm that this observation indeed holds for all $N>2$, and also prove that the exposure metrics (not shown here) satisfy properties 11 and 12.

\subsection{Experiments on Empirical Ranking Data.}
Property 6 (\emph{invariance to linear transformation of relevance scores}) can only be tested on rankings that include ground truth relevance scores. 
Thus, we use real world ranking data to obtain a realistic relevance score distribution. 
We use ranking data from the TREC Fair Ranking Track 2021 \cite{trec-fair-ranking-2021}. 
This dataset considers a set of Wikipedia articles that have to be ranked according to how much editing work they need.
The articles are grouped based on continents that are associated with the article content.
We select articles associated with only Northern America as protected group and articles associated with a single different continent as non-protected group. 
Each article can be assigned to multiple topics which are considered the queries in this dataset. 
For each query, we rank the associated set of articles $d$ according to their ground-truth relevance scores $y(d)\in[0,1]$ that indicate the quality of each article.
Using this dataset, we could then both analyze \emph{invariance to translation of relevance scores} and \emph{invariance to rescaling of relevance scores} experimentally, by applying corresponding linear functions on the relevance scores.

\begin{figure}[t]
	\centering
	\begin{subfigure}[b]{0.85\textwidth}
		\centering
		\includegraphics[width=\textwidth]{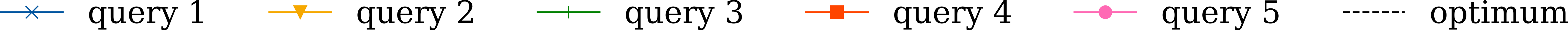}
	\end{subfigure} 
	
	\begin{subfigure}[t]{0.24\textwidth}
		\centering
		\includegraphics[width=\textwidth]{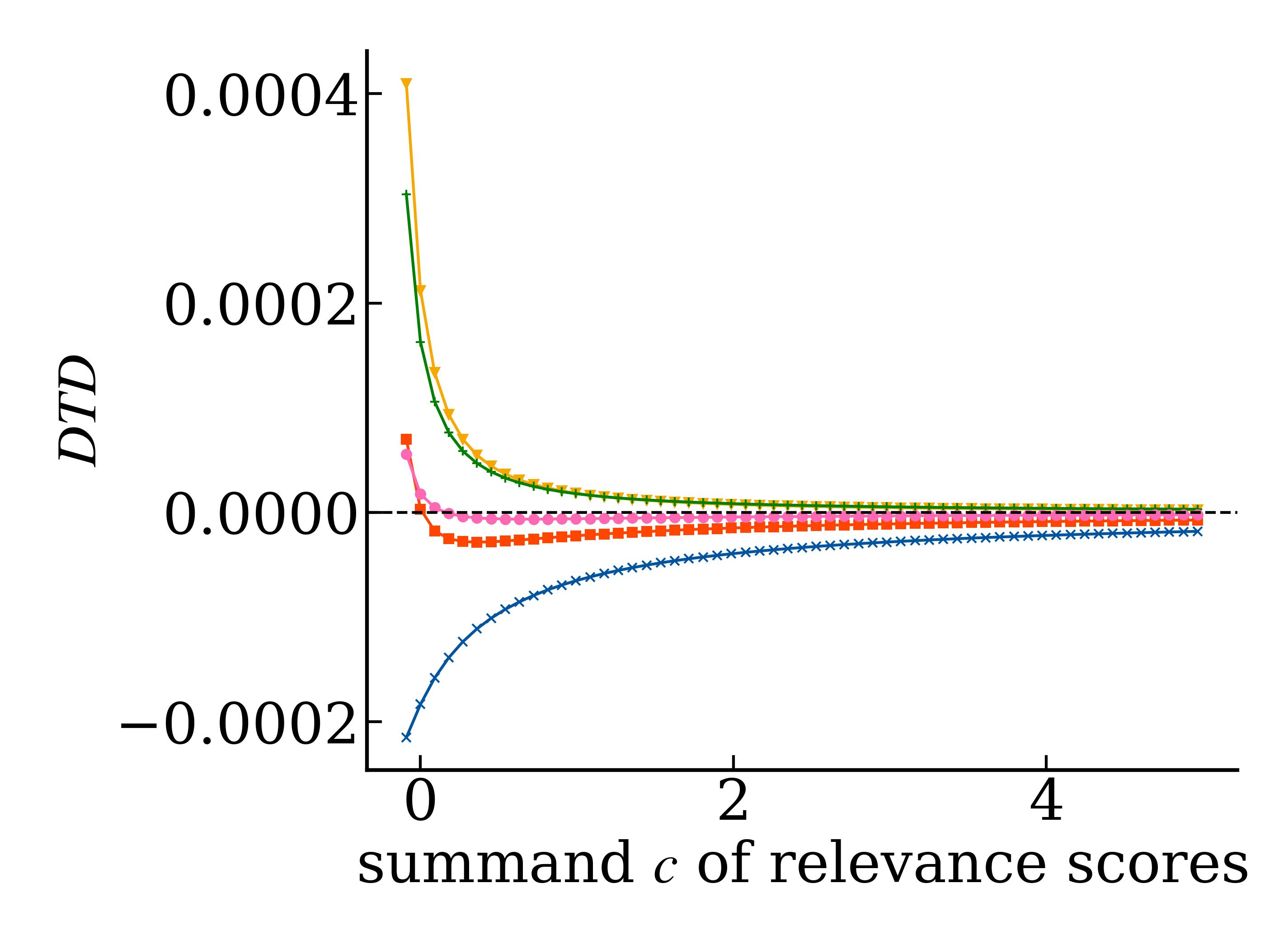}
		\subcaption{disp. treatment diff. \dtd{}  \cross}
	\end{subfigure}
	\begin{subfigure}[t]{0.24\textwidth}
		\centering
		\includegraphics[width=\textwidth]{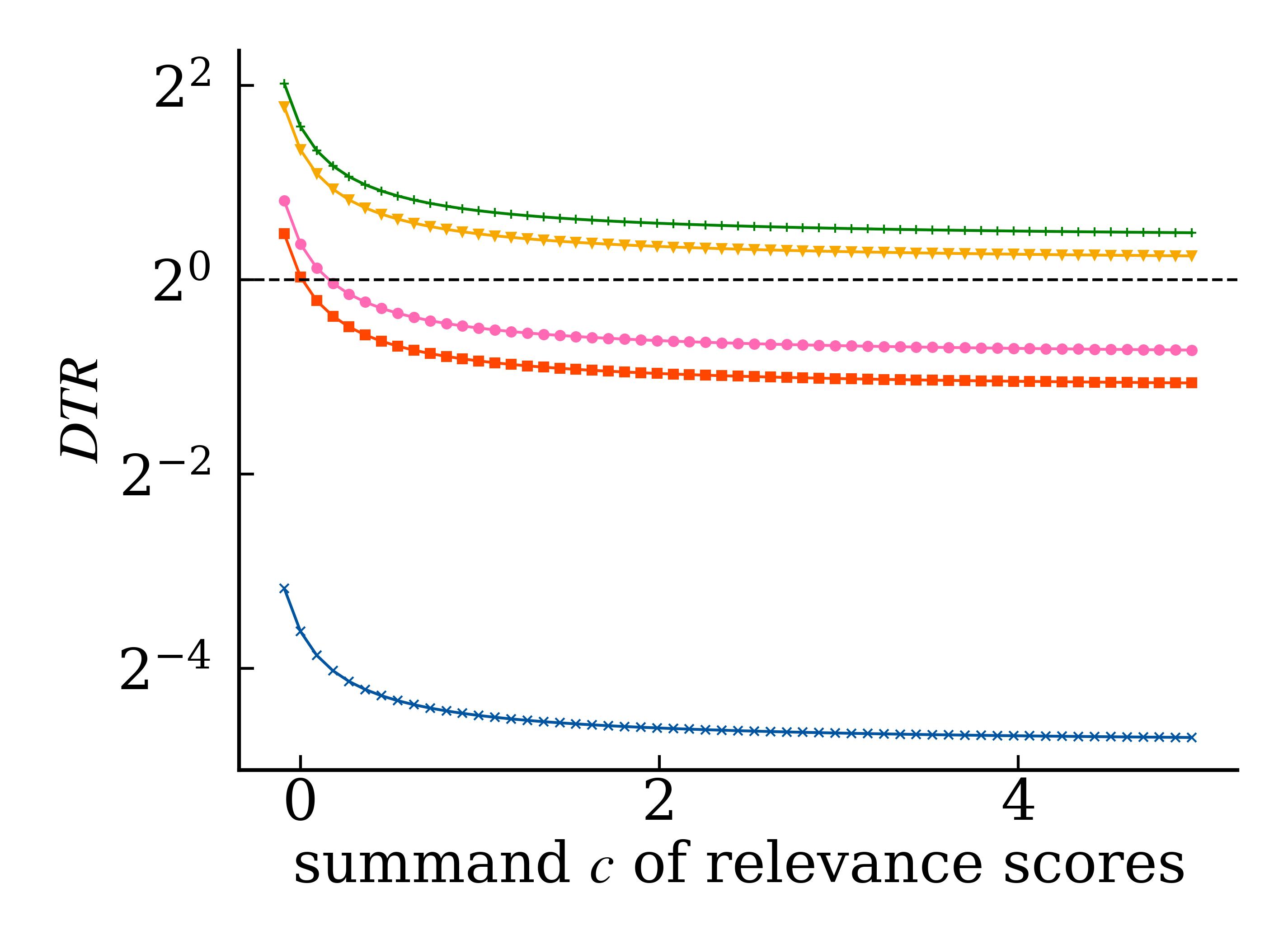}
		\subcaption{disp. treatment ratio \dtr{} \cross}
	\end{subfigure}
	\begin{subfigure}[t]{0.24\textwidth}
		\centering
		\includegraphics[width=\textwidth]{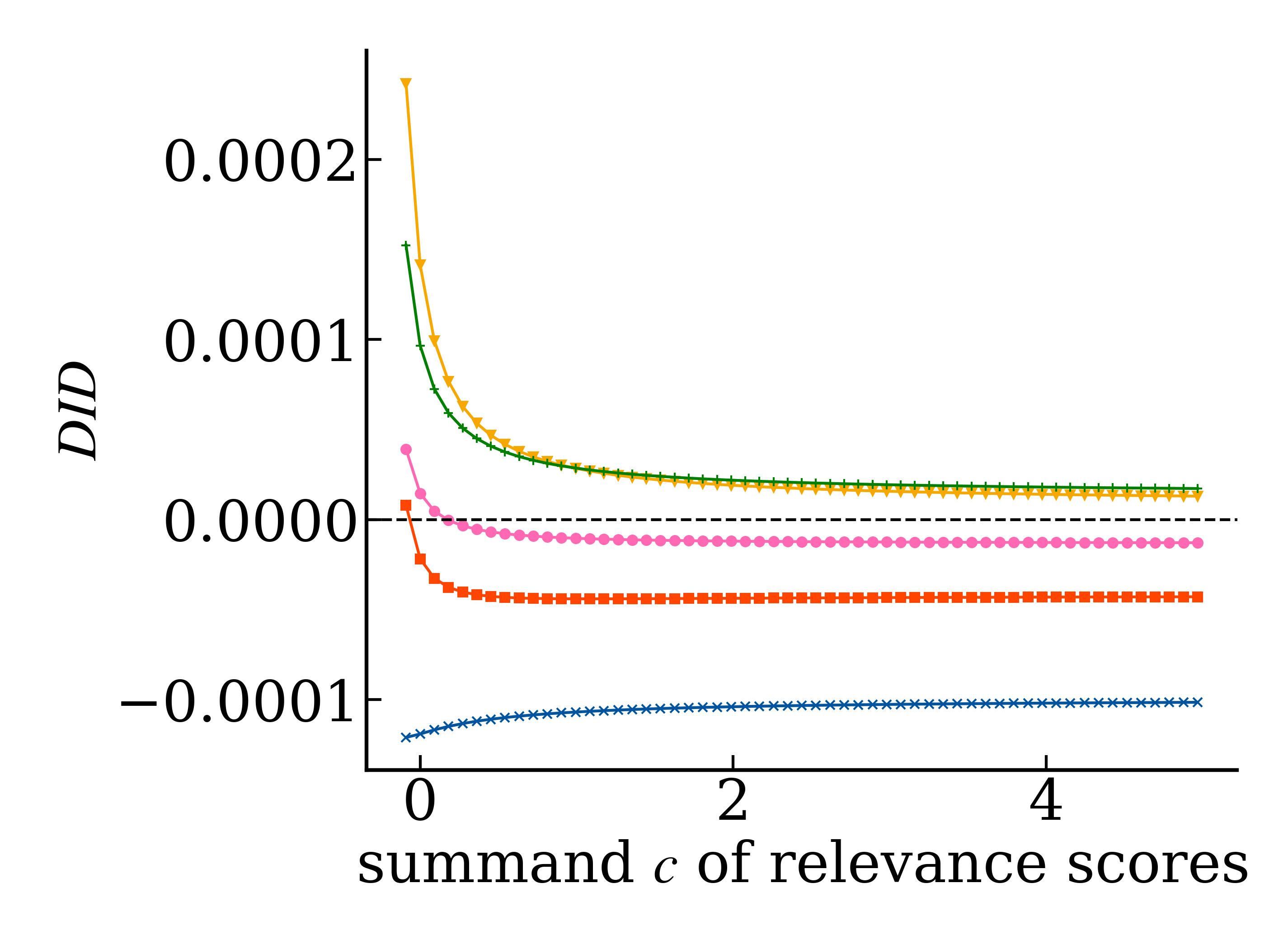}
		\subcaption{disp. impact diff. \did{} \cross}
	\end{subfigure}
	\begin{subfigure}[t]{0.24\textwidth}
		\centering
		\includegraphics[width=\textwidth]{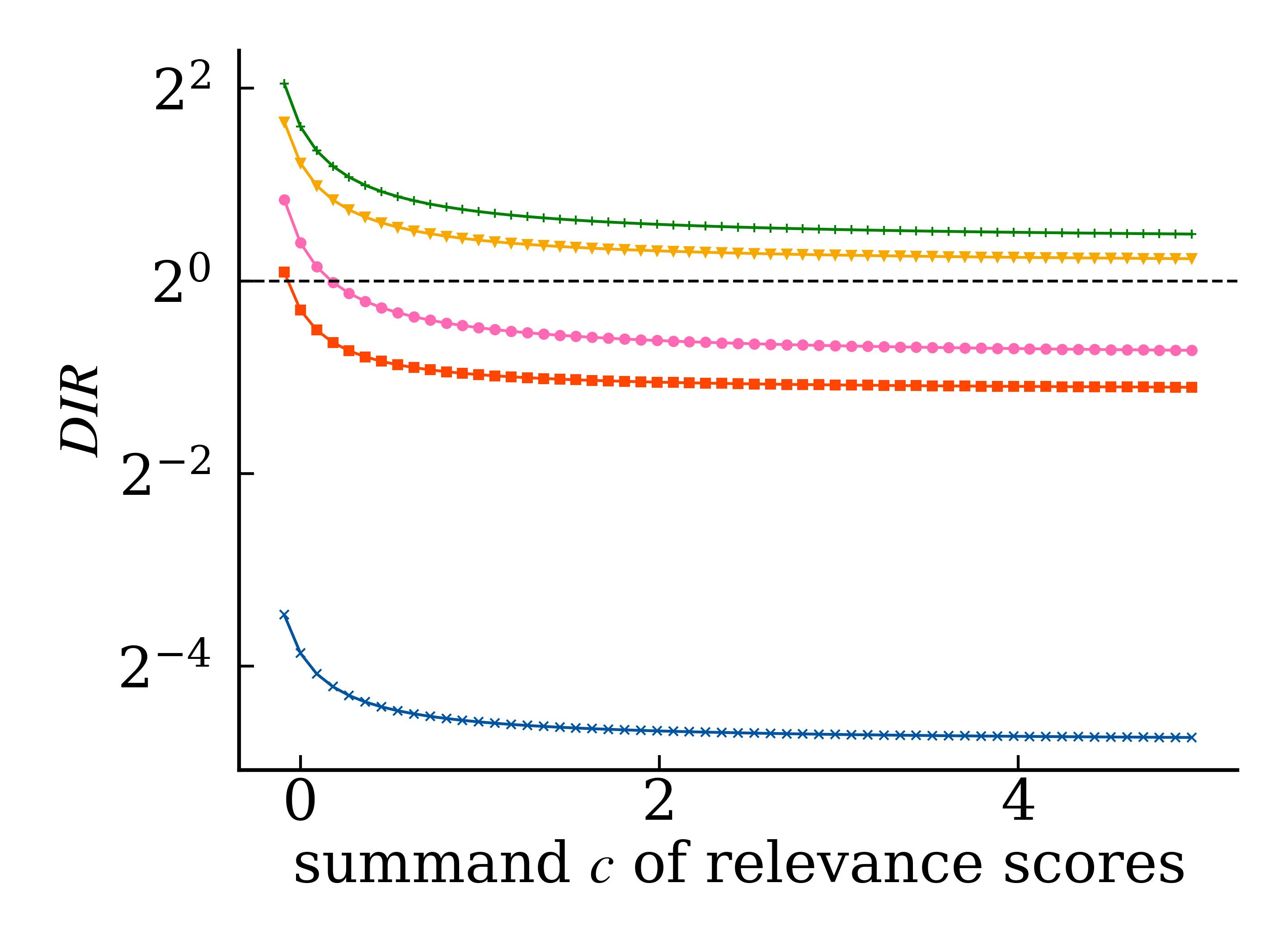}
		\subcaption{disp. impact ratio \dir{} \cross}
	\end{subfigure}
	\caption{\emph{Property 6: invariance to linear transformation of relevance scores.} We show the behavior of \dtd{}, \dtr{}, \did{} and \dir{} for translations of relevance scores by adding different values $c\in[-0.09,5]$ to each ground truth relevance score. The figure shows their fairness scores on the ground truth rankings of five queries retrieved from the TREC 2021 dataset \cite{trec-fair-ranking-2021}, where the ground truth relevance scores are in a range of $[0, 1]$. 
	If the metrics satisfied \emph{invariance to linear transformation of relevance scores}, their values should remain unaffected from these translations.
	We can observe that none of these four metrics are linear transformation of relevance scores ($\textcolor{red}{\times}$). Even more, the fairness scores of queries 4 and 5 always shift from being bigger than the optimum to smaller than the optimum. The results therefore support the usage of ground truth relevance scores without processing.
}
	\label{fig:ax4}
\end{figure}

\subheader{Property 6: Invariance to Linear Transformation of Relevance Scores.}
In Figure \ref{fig:ax4} we illustrate the behavior of \dtd{}, \dtr{}, \did{} and \dir{} on the TREC 2021 dataset under \emph{linear transformations of relevance scores}, or more specifically, under \emph{translations of relevance scores}, for five exemplary queries.
These are the only four metrics for which property 6 is applicable, hence the other metrics are omitted here.
We add a fixed value $c$ to all relevance scores and study whether the fairness scores change when varying $c$ in the interval $[-0.09,5]$.
It can be seen that none of these metrics is invariant to such translations. 
Even more, for queries 4 and 5 the fairness scores of all metrics are shifted from values $m(r)>\vopt$ to values $m(r)<\vopt$.
Further, we observe that for all these metrics, translating the relevance scores has the strongest effect the closer the relevance scores are to 0, which for the given dataset is the case at $b\approx-0.09$.
Consequently, none of these metrics is \emph{invariant to linear transformations of relevance scores}. 
We omit experimental results regarding \emph{invariance to rescaling of relevance scores} for brevity. 
In Section \ref{sec:theory}, we will show that \dtr{}, \did{} and \dir{} satisfy this property, whereas \dtd{} changes in proportion to the rescaling factor $a$. 

\section{Theoretical Analysis}\label{sec:theory}

Next, we provide theoretical results to complement and verify observations from our experiments.
To keep the analysis more short and concise, we mostly provide sketches of proofs for the following results.
Detailed versions of these proofs can be found in Appendix \ref{apdx:proofs}.

\subheader{Properties of Prefix Metrics.}
As their optimal value $\vopt=1$ is also their maximum value, the prefix metrics do not satisfy property 1 (\emph{distinguishability of groups}).
However, it is easy to see that the prefix metrics are \emph{bounded} in the interval $[0, 1]$ due to the normalization of the corresponding sum and the use of absolute values, and thus satisfy property 2.
Further, our experiments have provided examples where on a fixed candidate set $D$, rankings $r\in\mathcal{R}_{\operatorname{last}}(D)$ have lower fairness scores than rankings $r\in\mathcal{R}_{\operatorname{first}}(D)$ and thus, these metrics do not satisfy property 3 (\emph{monotonicity}) either.
Regarding property 4 (\emph{deepness}), we can see that this criterion can be violated if the candidates that are swapped within the ranking are not candidates of the index set $I$ that is summed over.
Since these metrics do not consider relevance scores, properties 5 and 6 are not applicable.
Property 7 (\emph{Optimality of random rankings}) cannot be satisfied since the optimal value $\vopt = 1$ is also the upper bound value of the metric. Regarding properties 8-11, our experiments in Section \ref{sec:synex} have shown that these are not satisfied either.
Finally, for properties 12 and 13, one can show that these properties are not satisfied by any of the prefix metrics either. 

\begin{restatable}{theorem}{prefixthm}\label{thm:prefix1213}
None of the prefix metrics satisfy property 12 (deepness threshold) and property 13 (sensitivity).
\end{restatable}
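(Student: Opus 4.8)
The plan is to prove the two non-satisfaction claims separately, settling \emph{sensitivity} uniformly for all three prefix metrics and then attacking the \emph{deepness threshold} by exhibiting, for each metric, a single population on which no admissible threshold $N'$ can exist.

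For property 13 I would exploit that each prefix metric has the form $m(r)=1-S(r)/Z(D)$, where $S(r)$ is the discounted penalty sum and $Z(D)$ is \emph{defined} as the maximal value of that sum over $\mathcal{R}(D)$; hence $m$ attains its minimum value $0$ exactly on the penalty-maximizing rankings. Fix any population $\D$ with uniform relevance and any candidate set $D\subsetneq\D$ with $G_0\setminus D\neq\emptyset$, pick a penalty-maximizing ranking $r\in\mathcal{R}(D)$ so that $m(r)=0$, and append some $d'\in G_0\setminus D$ to obtain $r'$ on $D'=D\cup\{d'\}$. Since $S(r')\leq Z(D')$ by definition of the normalizer, we get $m(r')\geq 0=m(r)$, so the strict decrease $m(r')<m(r)$ required by \emph{sensitivity} is impossible. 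The only points to check are the existence of a penalty-maximizer (immediate, as $\mathcal{R}(D)$ is finite) and the availability of $d'$ (immediate in Setting 2).

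For property 12 I would look for a population on which the ``last'' ranking $r'\in\mathcal{R}_{\operatorname{last}}(D_N')$ is itself a penalty-maximizer while the ``first'' ranking $r\in\mathcal{R}_{\operatorname{first}}(D_N)$ is not, forcing $m(r)>m(r')=0$ for \emph{every} $N$ and thereby ruling out any threshold. For \rnd{} and \rkl{} this succeeds with balanced groups, $p_{G_1}(\D)=\tfrac{1}{2}$: then $D_N'$ is itself balanced, and the penalty terms are invariant under exchanging the two group labels, so $\mathcal{R}_{\operatorname{first}}(D_N')$ and $\mathcal{R}_{\operatorname{last}}(D_N')$ receive equal sums; combined with the (separately established) lemma that extreme rankings maximize the penalty sum, this makes $r'$ a maximizer and $m(r')=0$. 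In contrast $D_N$ holds a single protected candidate, and a short computation shows the penalty is strictly larger when that candidate sits last than when it sits first, so $r$ is not a maximizer and $m(r)>0$. Hence $m(r)>m(r')$ for all $N$.

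The crux will be \rrd{}, where the symmetric construction collapses. Because \rrd{} penalizes deviations of the \emph{ratio} $p^k_{G_1}(r)/p^k_{G_0}(r)$ and uses the convention of setting this ratio to $0$ whenever $p^k_{G_0}(r)=0$, the penalty is no longer symmetric under swapping group labels; moreover, on $D_N'$ the rankings in $\mathcal{R}_{\operatorname{first}}(D_N')$ produce ratios that grow without bound just past rank $N$, so $Z_{\text{RD}}(D_N')$ is dominated by those configurations and $\mathcal{R}_{\operatorname{last}}(D_N')$ never attains it. The clean ``$r'$ is a maximizer'' route is therefore unavailable, and one must instead control the normalizer $Z_{\text{RD}}$ directly — precisely the erratic, numerically sensitive behavior of $\vl$ seen in our synthetic experiments. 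I expect this direct estimation of \rrd{}'s extreme values, via a tailored population, to be the main technical obstacle, whereas the conceptual steps for \rnd{} and \rkl{} and the uniform \emph{sensitivity} argument are routine.
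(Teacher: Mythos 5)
Your Property 13 argument is correct and complete, and it takes a genuinely different route from the paper's. The paper fixes a concrete instance ($p_{G_1}(\D)=0.8$, index set $I=\{N\}$, $r'\in\mathcal{R}_{\operatorname{last}}(D_N')$) and argues that appending a non-protected candidate changes neither the discounted sum (which only sees the top-$N$ prefix) nor the normalizer, so the score stays \emph{equal} rather than strictly decreasing. Your floor argument---at a penalty-maximizing ranking the score is $0$, and no ranking on the enlarged set can score below $0$ because $Z(D\cup\{d'\})$ is by definition an upper bound on the sum---needs no computation, works uniformly for all three metrics and any index set, and its side conditions ($Z(D)>0$, existence of $d'\in G_0\setminus D$) are trivial to arrange. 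Your Property 12 treatment of \rnd{} and \rkl{} is also sound, just on a different instance: with $p_{G_1}=\tfrac12$ the achievable prefix proportions at each cutoff form an interval symmetric about $\tfrac12$, and the per-cutoff penalties are convex and symmetric, so rankings in $\mathcal{R}_{\operatorname{last}}(D_N')$ attain the maximal sum (score $0$) while rankings in $\mathcal{R}_{\operatorname{first}}(D_N)$ do not (score $>0$); the paper verifies the same two facts directly for $p_{G_1}=0.8$ with $I=\{N\}$.

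The genuine gap is \rrd{}, exactly where you located it. The theorem asserts that \emph{none} of the three prefix metrics has a deepness threshold, so a proposal that ends by declaring the \rrd{} case ``the main technical obstacle'' has not proved it. Moreover, your diagnosis of the obstacle is correct, and it is worth recording that the paper's own sketch does not overcome it either: the paper claims $\rrd(r'(D_N'))=0$ because the all-non-protected top-$N$ prefix maximizes the distance term, but on $D_N'$ a top-$N$ prefix containing $N-1$ protected and one non-protected candidate has ratio $N-1$ and hence distance $|(N-1)-4|=N-5$, which exceeds the last ranking's distance $4$ as soon as $N\geq 10$. Consequently $Z_\text{RD}(D_N')=b(N)\cdot(N-5)$, so $\rrd(r'(D_N'))=1-\tfrac{4}{N-5}\to 1$, while $\rrd(r(D_N))=\tfrac{1}{4(N-1)}\to 0$; on the paper's instance the deepness-threshold inequality therefore eventually \emph{holds} rather than fails. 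The unbounded prefix ratios created by the zero-convention really do move the normalizer onto mixed prefixes and away from $\mathcal{R}_{\operatorname{last}}(D_N')$, so ``direct estimation'' on instances of this kind cannot yield a counterexample.

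What the \rrd{} case needs is an instance on which \emph{all} achievable prefix ratios on $D_N'$ stay bounded, so the normalizer cannot explode. For example, place the single cutoff near the end of the ranking, $I=\{2N-1\}$, where every achievable prefix on $D_N'$ misses exactly one candidate and all ratios are close to $1$, and let the protected group be a small minority with $t:=p_{G_1}/p_{G_0}<\tfrac15$. A short computation then gives, for large $N$, $\rrd(r(D_N))=\tfrac{1}{t(2N-2)}\approx\tfrac{1}{2tN}$ and $\rrd(r'(D_N'))=\tfrac{2N-1}{N\,(N(1-t)+t)}\approx\tfrac{2}{(1-t)N}$, and for $t<\tfrac15$ the first quantity strictly exceeds the second for all large $N$, so no threshold $N'$ can exist. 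An argument of this type (bounded achievable ratios plus an asymptotic comparison) is what is missing from your proposal.
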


\begin{proof}[Proof (sketch)]
To show that property 12 is not satisfied, we construct a counterexample on a population $\D$ where $p_{G_1}(\D) = 0.8$ and the index set is chosen as $I=\{N\}$.
Using the same candidate population and index set, we consider a ranking $r'\in\mathcal{R}_{\operatorname{last}}\big(D_N'\big)$ to show that property 13 is violated. 
\end{proof}

\subheader{Properties of Exposure Metrics.}
In our experiments on properties 8-10, the exposure metrics already appeared to satisfy \emph{distinguishability of groups} on the given datasets. We show that this observation indeed holds in general.

\begin{theorem}
All of the exposure metrics satisfy property 1 (distinguishability of groups). 
\end{theorem}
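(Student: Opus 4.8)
The plan is to reduce the six exposure metrics to the two base metrics \ed{} and \er{}. Property 1 is stated under uniform relevance, i.e.\ $y(d)=1$ for all $d\in\D$, and for the full population, so that $D=\D$. Under uniform relevance the excerpt already records that \dtd{} and \did{} coincide with \ed{}, while \dtr{} and \dir{} coincide with \er{}. Hence it suffices to establish the inequality chain $\vl<\vopt<\vf$ for \ed{} (where $\vopt(ED)=0$) and for \er{} (where $\vopt(ER)=1$); the remaining four metrics then follow immediately.

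The key step is an elementary averaging lemma for the position bias. Write $n_1:=|G_1|$, $n_0:=|G_0|$ and $n:=n_0+n_1$. For any ranking $r\in\mathcal{R}_{\operatorname{first}}(\D)$ the protected group occupies ranks $1,\dots,n_1$ and the non-protected group occupies ranks $n_1+1,\dots,n$. Because the position bias $b(k)=1/\log_2(k+1)$ is strictly decreasing, every top-block weight is at least $b(n_1)$ and every bottom-block weight is at most $b(n_1+1)$, so
\[
\operatorname{Exposure}(G_1 | r)=\tfrac{1}{n_1}\sum_{k=1}^{n_1}b(k)\;\ge\;b(n_1)\;>\;b(n_1+1)\;\ge\;\tfrac{1}{n_0}\sum_{k=n_1+1}^{n}b(k)=\operatorname{Exposure}(G_0 | r),
\]
where the strict middle inequality is exactly the strict monotonicity of $b$. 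I would stress that this holds for arbitrary, possibly unequal, block sizes $n_1\neq n_0$.

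Given the lemma, the conclusion is immediate. For $r\in\mathcal{R}_{\operatorname{first}}(\D)$ we have $\operatorname{Exposure}(G_1 | r)>\operatorname{Exposure}(G_0 | r)>0$, hence $\ed(r)>0=\vopt(ED)$ and $\er(r)>1=\vopt(ER)$; that is, $\vf>\vopt$ for both metrics. Swapping the roles of the two groups yields the symmetric statement for $r'\in\mathcal{R}_{\operatorname{last}}(\D)$, namely $\operatorname{Exposure}(G_1 | r')<\operatorname{Exposure}(G_0 | r')$, whence $\ed(r')<0$ and $\er(r')<1$, i.e.\ $\vl<\vopt$. Combining the two gives $\vl<\vopt<\vf$ for \ed{} and \er{}, and by the reduction above for all six exposure metrics.

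The only genuine content is the averaging lemma, and its sole subtlety is that the two blocks have different sizes, so one cannot simply pair terms across them. Bounding each block by its boundary weight ($b(n_1)$ for the top, $b(n_1+1)$ for the bottom) sidesteps this difficulty and makes the argument independent of the block sizes; everything else is bookkeeping through the definitions of the exposure metrics and the collapse of \dtd{}, \dtr{}, \did{}, \dir{} onto \ed{} and \er{} under uniform relevance.
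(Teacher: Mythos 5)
Your proposal is correct and follows essentially the same route as the paper's proof: reduce to \ed{} and \er{} via uniform relevance, then establish $\operatorname{Exposure}(G_1|r) \geq b(|G_1|) > b(|G_1|+1) \geq \operatorname{Exposure}(G_0|r)$ for $r\in\mathcal{R}_{\operatorname{first}}(\D)$ using strict monotonicity of the position bias, and argue symmetrically for $\mathcal{R}_{\operatorname{last}}(\D)$. Your only (welcome) refinement is using non-strict outer inequalities, which correctly covers the edge case of a singleton group, where the paper's all-strict chain is slightly imprecise.
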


\begin{proof}
Due to uniform relevance we only need to consider \ed{} and \er{}. For any population $\D$ and all rankings $r\in\mathcal{R}_{\operatorname{first}}(\D)$, and $r'\in\mathcal{R}_{\operatorname{last}}(\D)$, from the position bias $b(k)$ being strictly monotonically decreasing it follows that 
$$
\textstyle{
\operatorname{Exposure}(G_1|r) = \frac{1}{|G_1|}\cdot\sum_{k=1}^{|G_1|} b(k) > b(|G_1|) > b(|G_1|+1) > \frac{1}{|G_0|}\cdot\sum_{k=|G_1|+1}^{|\D|} b(k) = \operatorname{Exposure}(G_0|r).}
$$
Analogously, it can be shown that $\operatorname{Exposure}(G_0|r') > \operatorname{Exposure}(G_1|r')$.
Then, by definition of \ed{} and \er{}, it directly follows that these metrics satisfy property 1.
\end{proof}

We proceed with property 2 \emph{(boundedness)}.

\begin{theorem}
Except for $ED$, none of the exposure metrics satisfy property 2 (boundedness).
\end{theorem}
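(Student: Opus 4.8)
The plan is to exhibit, for each of the five metrics \er{}, \dtr{}, \dir{}, \dtd{}, and \did{}, a family of populations and rankings on which the metric diverges, so that no finite $\vmax$ can exist. Before that I would record the elementary fact behind the exception: since $b(k)\in(0,1]$ for all $k\geq 1$, every group exposure satisfies
\[
0 \le \operatorname{Exposure}(G\mid r) = \tfrac{1}{|G|}\sum_{d\in G\cap D} b(r^{-1}(d)) \le \tfrac{|G\cap D|}{|G|} \le 1,
\]
so $\ed(r)\in[-1,1]$ is bounded, which is precisely why \ed{} must be excluded.

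For \er{}, the idea is to drive the denominator exposure to zero while keeping the numerator fixed. I would take a population with a single protected candidate and $n-1$ non-protected candidates and rank the protected candidate first, so that $\operatorname{Exposure}(G_1\mid r)=b(1)=1$, whereas $\operatorname{Exposure}(G_0\mid r)=\tfrac{1}{n-1}\sum_{k=2}^{n}b(k)$. Since $b(k)\to 0$, its Cesàro mean also tends to $0$, hence $\er(r)=1/\operatorname{Exposure}(G_0\mid r)\to\infty$ as $n\to\infty$, showing \er{} is unbounded above. Because \dtr{} and \dir{} coincide with \er{} under uniform relevance (Section~\ref{sec:ex_metrics}), the very same construction with $y\equiv 1$ shows \dtr{} and \dir{} are unbounded as well.

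For \dtd{}, the divergence comes from the denominator $Y(G_1)$ rather than from an exposure ratio. I would fix a protected candidate ranked at the top so that $\operatorname{Exposure}(G_1\mid r)=1$ stays bounded away from zero, and let its relevance shrink so that $Y(G_1)\to 0^{+}$; then $\operatorname{Exposure}(G_1\mid r)/Y(G_1)\to\infty$ and thus $\dtd(r)\to\infty$. The subtle case is \did{}: here I would first note that for strictly positive relevance each term $CTR(G\mid r)/Y(G)=\big(\sum_{d\in G\cap D} b(r^{-1}(d))\,y(d)\big)/\big(\sum_{d\in G}y(d)\big)\le 1$, so a naive small-relevance construction leaves \did{} bounded. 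The fix is to exploit Setting~2, where $G_1\cap D\subsetneq G_1$: I place a single high-relevance protected candidate in $D$ at a top rank, while the remaining protected candidates lie outside $D$ with relevance chosen so that $Y(G_1)=\tfrac{1}{|G_1|}\sum_{d\in G_1}y(d)\to 0^{+}$ (using relevance of opposite sign, which $y\in\R$ permits). Then $CTR(G_1\mid r)$ stays bounded away from zero while $Y(G_1)\to 0^{+}$, so $\did(r)\to\infty$. Throughout, I would keep all denominators strictly nonzero so that each metric remains well-defined, as property 2 requires.

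The main obstacle is the \did{} case, since the most natural attempt — shrinking relevance — fails because $CTR$ scales with relevance just as $Y$ does; recognizing that one must instead combine the mismatch between the retrieved set and the full group with sign-changing relevance to force $Y(G_1)\to 0^{+}$ is the crux. The \er{} step also rests on the (routine but not purely algebraic) observation that the Cesàro mean of the null sequence $b(k)$ vanishes.
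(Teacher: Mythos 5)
Your proof is correct, and for \ed{}, \er{}, \dtr{}, and \dir{} it follows essentially the same route as the paper: \ed{} is bounded because every group exposure lies in $(0,1]$; \er{} is driven to infinity along a family of rankings where one group's exposure vanishes (the paper places a single \emph{non-protected} candidate last within a retrieved subset so the denominator becomes $b(n)/|G_0|$, while you place a single \emph{protected} candidate first and let the denominator vanish via the Ces\`aro mean of $b(k)$ --- both constructions work); \dtr{} and \dir{} then inherit unboundedness because \er{} is their uniform-relevance special case. Where you genuinely depart from the paper is in the remaining two metrics. The paper disposes of \did{} by noting it is not even well-defined when $Y(G_0)=0$ or $Y(G_1)=0$, i.e., it invokes the well-definedness clause in the definition of property 2, and its written proof gives no explicit argument for \dtd{} at all (the same division-by-zero observation would cover it). You instead keep every denominator strictly nonzero and exhibit genuine divergence: for \dtd{} by fixing $\operatorname{Exposure}(G_1|r)=1$ and letting $Y(G_1)\to 0^{+}$, and for \did{} --- after the correct and non-obvious observation that \did{} stays in $[-1,1]$ whenever all relevances are nonnegative --- by exploiting Setting 2 with negative relevance scores on unretrieved protected candidates, so that $Y(G_1)\to 0^{+}$ while $CTR(G_1|r)$ remains bounded away from zero. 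Your route is longer but proves a strictly stronger failure (divergence of well-defined values rather than mere ill-definedness in a degenerate case), and it closes the \dtd{} gap left open in the paper's proof; the paper's route is shorter but leans entirely on the degenerate $Y(G)=0$ case for the relevance-weighted difference metrics.
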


\begin{proof}
Since it holds that $0 < b(k) \leq 1$ for all $k\in\N$, it follows that $0 < \operatorname{Exposure}(G | r) \leq 1$ for each group $G\in\{G_0,G_1\}$ and all rankings $r\in\mathcal{R}(D)$.
Therefore, the exposure difference has to be bounded in the interval $[-1,1]$.
Regarding \er{}, we consider a population $\D$ in which $|G_0|=|G_1|$, a candidate set $D$ with $|D\cap G_0| = 1$, and a ranking $r\in\mathcal{R}_{\operatorname{first}}(D)$.
In that setting it holds that 
$$
\textstyle
ER(r) = \frac{|G_0|\cdot\sum_{k=1}^{n-1} \frac{1}{\log(k+1)}}{|G_1|\cdot \frac{1}{\log(n+1)}} = \log(n+1) \cdot \sum_{k=1}^{n-1} \frac{1}{\log(k+1)} \geq \log(n+1)\xrightarrow{n\to\infty}\infty
$$
and thus, \er{} does not satisfy \emph{boundedness}. Since \er{} can be considered a special case of both \dtr{} and \dir{}, it directly follows that these two metrics do not satisfy \emph{boundedness} either. For \did{}, it holds that this metric is not well-defined when $Y(G_0)=0$ or $Y(G_1)=0$ due to division by zero. Thus, this metric does not satisfy \emph{boundedness} either.
\end{proof}

We continue with discussing \emph{monotonicity} and \emph{deepness}.

\begin{restatable}{theorem}{monodeepthm}
All exposure metrics satisfy property 3 (monotonicity) and property 4 (deepness).
\end{restatable}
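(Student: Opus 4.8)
The plan is to reduce both properties to two elementary facts about the position bias $b$: that it is strictly decreasing, and that $b(k)=1/\log_2(k+1)$ is strictly convex, so that the consecutive gaps $b(k)-b(k+1)$ are themselves strictly decreasing in $k$ (a routine second-derivative check). Throughout I would track how a single swap perturbs the group exposures $\operatorname{Exposure}(G_0\mid r),\operatorname{Exposure}(G_1\mid r)$ and, for the impact metrics, the click-through rates $CTR(G_0\mid r),CTR(G_1\mid r)$. Since the averages $Y(G)$ and the group sizes $|G|$ are fixed under reordering, each metric is an explicit and simple function of these quantities, and verifying the two properties amounts to reading off the sign and magnitude of the induced change.

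For monotonicity, consider the swap $r_{i\leftrightarrow j}$ with $r(i)\in G_0$, $r(j)\in G_1$, $i<j$, and set $\delta:=b(i)-b(j)>0$. The protected candidate gains exposure while the non-protected one loses it: $\operatorname{Exposure}(G_1)$ increases by $\delta/|G_1|$ and $\operatorname{Exposure}(G_0)$ decreases by $\delta/|G_0|$. For \ed{} and \dtd{} the difference of (scaled) exposures then strictly increases, and for \er{} and \dtr{} the quotient $\operatorname{Exposure}(G_1)/\operatorname{Exposure}(G_0)$ strictly increases because its numerator grows while its still-positive denominator shrinks; the fixed positive factors $1/Y(G)$ and $Y(G_0)/Y(G_1)$ never affect the sign. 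For \did{} and \dir{} I would repeat the computation with $CTR$ in place of exposure: the changes to $CTR(G_1)$ and $CTR(G_0)$ are $+\delta\,y(r(j))/|G_1|$ and $-\delta\,y(r(i))/|G_0|$, so \did{} changes by $\delta\big(y(r(j))/(|G_1|Y(G_1))+y(r(i))/(|G_0|Y(G_0))\big)$ and \dir{} again has a growing numerator and a shrinking denominator.

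For deepness I would exploit that, with $r$ fixed, an adjacent swap at $(k,k+1)$ perturbs every exposure or CTR by an amount proportional to $\Delta_k:=b(k)-b(k+1)$, with proportionality constants determined by the group memberships and relevances at ranks $k,k+1$. The hypotheses $y(r(i))=y(r(j))$, $y(r(i+1))=y(r(j+1))$ together with the matching group patterns guarantee that these constants are \emph{identical} for the swaps at $(i,i+1)$ and $(j,j+1)$. Hence each metric's post-swap value is one and the same function $\phi$ of the single argument $\Delta_k$, and $\phi$ is strictly monotone (additive with nonzero slope for \ed{}/\dtd{}/\did{}, and a quotient with increasing numerator and decreasing denominator for \er{}/\dtr{}/\dir{}). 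Since strict convexity gives $\Delta_i>\Delta_j$, monotonicity of $\phi$ yields $|m(r)-m(r_{i\leftrightarrow i+1})|>|m(r)-m(r_{j\leftrightarrow j+1})|$ directly, in both sign patterns of the definition (the sign of the change flips, but the absolute values compare the same way).

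The main obstacle is the relevance-weighted ratio metrics. For \er{}, \dtr{} and \dir{} the perturbation is not additive, so increments cannot be compared term by term; the clean workaround, used above, is to observe that the relevant quotient is a strictly monotone function of the swap amount, which sidesteps all messy algebra. The genuinely delicate point is strictness for \did{} and \dir{}: their change vanishes precisely when both swapped candidates have zero relevance, so I would invoke the standing positivity of relevance scores, which these ratio metrics already require for well-definedness since $Y(G)>0$, to secure the strict inequalities. The exposure-only metrics \ed{}, \er{}, \dtd{}, \dtr{} need no such assumption, as the relevance of the swapped candidates never enters their change.
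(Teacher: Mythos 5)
Your proposal is correct and follows essentially the same route as the paper's proof: both reduce each property to how a swap perturbs $\operatorname{Exposure}(G\mid r)$ and $CTR(G\mid r)$ by a multiple of $b(i)-b(j)$, invoking that $b$ is strictly decreasing for monotonicity and strictly convex (hence with decreasing consecutive gaps) for deepness. If anything, you are more careful than the paper, whose appendix reduces deepness to \ed{} and \er{} via a ``uniform relevance'' assumption the property does not actually impose, and which passes silently over the zero-relevance edge case affecting strictness for \did{} and \dir{} that you explicitly flag.
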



\begin{proof}[Proof (sketch)]
For both \emph{monotonicity} and \emph{deepness}, the considered candidate swaps only affect the $\operatorname{Exposure}(G|r)$ and $CTR(G|r)$ terms, essentially swapping a pair of position biases $b(i),b(j)$. 
The satisfaction of both properties by all exposure metrics then follows from the position bias function being strictly monotonically decreasing and convex.
\end{proof}

\dtd{}, \dtr{}, \did{} and \dir{} are the only four metrics under study that also consider relevance scores. 
We find that two of these metrics satisfy \emph{intra-group fairness}.

\begin{restatable}{theorem}{intragroupthm}
$DID$ and $DIR$ satisfy property 5 (intra-group fairness), while this is not the case for \dtd{} and \dtr{}.
\end{restatable}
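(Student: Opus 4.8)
The plan is to isolate how swapping two candidates of the same group $r(i),r(j)$ with $i<j$ and $y(r(i))<y(r(j))$ affects each of the four building blocks $\operatorname{Exposure}(G\mid r)$, $CTR(G\mid r)$, and $Y(G)$. First I would observe that the swap $\rij$ neither moves any candidate of the opposite group nor changes any relevance score, so $Y(G_0)$ and $Y(G_1)$ stay fixed, and the exposure and click-through rate of the group \emph{not} containing the swapped pair are unaffected. For the group $G$ that does contain the pair, the multiset of positions occupied by members of $G$ is unchanged under $\rij$, hence $\operatorname{Exposure}(G\mid\rij)=\operatorname{Exposure}(G\mid r)$. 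The $CTR$ term, however, does change, and a one-line computation gives
\[
CTR(G\mid\rij)-CTR(G\mid r)=\tfrac{1}{|G|}\bigl(b(i)-b(j)\bigr)\bigl(y(r(j))-y(r(i))\bigr),
\]
which is strictly positive because $b$ is strictly decreasing (so $b(i)>b(j)$) and $y(r(i))<y(r(j))$ by hypothesis.

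For \dtd{} and \dtr{}, I would then note that these metrics are functions of $\operatorname{Exposure}(G_0\mid r)$, $\operatorname{Exposure}(G_1\mid r)$, $Y(G_0)$ and $Y(G_1)$ only, all four of which are invariant under the swap by the first paragraph. Consequently $DTD(\rij)=DTD(r)$ and $DTR(\rij)=DTR(r)$, so the strict inequality demanded by property 5 fails in both the $G_1$ and the $G_0$ case, and neither metric satisfies intra-group fairness. This formalizes the behavior illustrated for \dtr{} in Figure~\ref{fig:universal}.

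For \did{} and \dir{}, I would split into the two cases of the property. If both swapped candidates lie in $G_1$, then $CTR(G_1\mid r)$ strictly increases while $CTR(G_0\mid r)$ and all $Y$-terms are fixed; since \did{} adds the term $CTR(G_1\mid r)/Y(G_1)$ and \dir{} carries $CTR(G_1\mid r)$ in its numerator, both metrics strictly increase, giving $DID(\rij)>DID(r)$ and $DIR(\rij)>DIR(r)$. Symmetrically, if both candidates lie in $G_0$, only $CTR(G_0\mid r)$ increases, which strictly decreases the subtracted term $CTR(G_0\mid r)/Y(G_0)$ in \did{} and strictly inflates the denominator of \dir{}; hence $DID(\rij)<DID(r)$ and $DIR(\rij)<DIR(r)$. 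Both conclusions match the directions required by property 5.

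The only real obstacle is bookkeeping rather than any substantive difficulty: I must track the sign of the $CTR$ change correctly through the \emph{difference} form of \did{} and the \emph{ratio} form of \dir{}, and use that the untouched group's quantities remain fixed. For the ratio metric \dir{} I would additionally observe that well-definedness implicitly requires the relevant $CTR$ and $Y$ values to be positive; granting this, the monotone dependence of \dir{} on $CTR(G_1\mid r)$ (increasing) and on $CTR(G_0\mid r)$ (decreasing) closes the argument without further computation.
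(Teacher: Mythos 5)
Your proof is correct and follows essentially the same route as the paper's: you show that $\operatorname{Exposure}(G\mid r)$ and $Y(G)$ are invariant under intra-group swaps (so \dtd{} and \dtr{} cannot satisfy the strict inequality), and that $CTR(G\mid \rij)-CTR(G\mid r)=\tfrac{1}{|G|}\bigl(b(i)-b(j)\bigr)\bigl(y(r(j))-y(r(i))\bigr)>0$, which is exactly the paper's key computation, then conclude via the difference/ratio structure of \did{} and \dir{}. Your explicit positivity caveat for the $CTR$ and $Y$ terms is a small point the paper leaves implicit, but it does not change the argument.
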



\begin{proof}[Proof (sketch)]
From their definition we can directly infer that \dtd{} and \dtr{} remain unaffected by intra-group swaps.
For \did{} and \dir{}, we solve the inequation $CTR(G|r) < CTR(G| \rij) $ for the relevance scores of the swapped candidates to show that \emph{intra-group fairness} is satisfied.
\end{proof}

In the experimental results in Section 4.1, we have omitted plots regarding the \emph{invariance to rescaling of relevance scores} of exposure metrics, since for three metrics, namely \dtr{}, \did{}, and \dir{}, the values appeared constant under such transformations.
In Appendix \ref{apdx:proofs}, we show that these metrics indeed satisfy \emph{invariance to rescaling of relevance scores}, whereas this is not the case for \dtd{}.

\begin{restatable}{theorem}{scalethm}\label{thm:prop6}
$DTR$, $DID$, and $DIR$ satisfy property 5 (invariance to rescaling of relevance scores).
For $DTD$, it holds that $DTD\big(r\big(f_{a,0}(D)\big)\big) = \tfrac{1}{a}\cdot DTD\big(r(D)\big)$.
\end{restatable}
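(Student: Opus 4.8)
The plan is to track how the three constituent quantities---the average relevance $Y(G)$, the group exposure $\operatorname{Exposure}(G\mid r)$, and the click-through rate $CTR(G\mid r)$---transform when the relevance scores are rescaled via $f_{a,0}\colon y\mapsto ay$ with $a>0$, and then to substitute these transformed quantities directly into the four metric definitions and read off the cancellations.

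First I would record the elementary scaling behavior of each building block. Since $\operatorname{Exposure}(G\mid r)$ depends only on the positions $r^{-1}(d)$ and not on relevance, it is left entirely unchanged by $f_{a,0}$. By contrast, both $Y(G)=\frac{1}{|G|}\sum_{d\in G}y(d)$ and $CTR(G\mid r)=\frac{1}{|G|}\sum_{d\in G\cap D}b(r^{-1}(d))\,y(d)$ are linear in the relevance scores, so each is simply multiplied by $a$ under the transformation; that is, the rescaled quantities are $a\,Y(G)$ and $a\,CTR(G\mid r)$, respectively.

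Then I would substitute into each definition and observe where the factor $a$ cancels. In $DIR$ the factor $a$ appears in the numerator and denominator of the $CTR$-ratio and again in the $Y$-ratio, so all copies of $a$ cancel; likewise in $DTR$ the $a$ from each $Y$ cancels, leaving the metric unchanged. For $DID$ each of its two difference terms has the form $\frac{a\,CTR(G\mid r)}{a\,Y(G)}$, so the shared factor cancels within each term and the difference is preserved. Hence $DTR$, $DID$, and $DIR$ are invariant to rescaling. For $DTD$, however, the numerators are the relevance-independent exposures while only the denominators $Y(G)$ pick up the factor $a$, so every term acquires a factor $\tfrac1a$ that can be pulled out of the difference, yielding exactly $DTD\big(r(f_{a,0}(D))\big)=\tfrac1a\,DTD\big(r(D)\big)$.

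The argument is essentially a substitution, so I expect no serious obstacle; the only point requiring genuine care is articulating why $DTD$ is the exception. The asymmetry arises because $DTD$ is the unique metric among the four that pairs a relevance-\emph{independent} numerator (exposure) with a relevance-\emph{dependent} denominator ($Y$) inside a difference, so the scaling factor has no matching factor to cancel against, unlike in the two ratio metrics and in $DID$, where the numerator also scales by $a$.
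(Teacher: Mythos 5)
Your proposal is correct and follows essentially the same route as the paper's proof: establish that $Y(G)$ (and, for the impact metrics, $CTR(G\mid r)$) scales by the factor $a$ while exposure is unchanged, then substitute into each definition and cancel, with $DTD$ picking up the uncancelled factor $\tfrac{1}{a}$. Your write-up is in fact slightly more explicit than the paper's, which verifies $DTD$ and $DTR$ by substitution and only remarks that the cancellation for $DID$ and $DIR$ is analogous.
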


For Setting 1 where the full population is ranked, our experiments have shown that the exposure metrics do not satisfy properties 8-10. 
Next, we investigate property 7 (optimality of random rankings).

\begin{restatable}{theorem}{randomthm}
$ED$, $DTD$, and $DID$ are the only exposure metrics that satisfy property 7 (optimality of random rankings). 
\end{restatable}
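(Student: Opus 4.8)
The plan is to reduce everything to the two base metrics $ED$ and $ER$. Property 7 is only evaluated under uniform relevance, and in that regime $Y(G_0)=Y(G_1)=1$ and $CTR(G|r)=\operatorname{Exposure}(G|r)$ for both groups, so $DTD$ and $DID$ collapse to $ED$, while $DTR$ and $DIR$ collapse to $ER$ (as already noted in Section~\ref{sec:ex_metrics}). Hence it suffices to show that $ED$ satisfies \emph{optimality of random rankings} and that $ER$ does not.

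For $ED$ I would argue by linearity of expectation. Writing $v_E(ED,\D)=E_{r\sim R(\D)}\big[\operatorname{Exposure}(G_1|r)\big]-E_{r\sim R(\D)}\big[\operatorname{Exposure}(G_0|r)\big]$, the key observation is that a uniformly random ranking of the full population places any fixed candidate $d$ at each of the $n=|\D|$ positions with equal probability $1/n$, so $E\big[b(r^{-1}(d))\big]=\tfrac{1}{n}\sum_{k=1}^n b(k)$ is the \emph{same constant} for every candidate, independent of its group. Consequently $E\big[\operatorname{Exposure}(G|r)\big]=\tfrac{1}{n}\sum_{k=1}^n b(k)$ for each group $G$, the two terms cancel, and $v_E(ED,\D)=0=\vopt(ED)$.

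For $ER$ the same cancellation fails because the expectation of a ratio is not the ratio of expectations. Rather than fight this in general, I would exhibit the minimal counterexample suggested in Figure~\ref{fig:setting1}: take $\D$ with $|G_0|=|G_1|=1$, so $n=2$ and there are exactly two rankings. Computing $ER$ on each yields the values $\log_2 3$ and $1/\log_2 3$, whose average is $\tfrac{1}{2}\big(\log_2 3 + 1/\log_2 3\big)$, which by the AM--GM inequality strictly exceeds $1=\vopt(ER)$ since $\log_2 3\neq 1$. This shows $ER$, and with it $DTR$ and $DIR$, violates property 7, completing the dichotomy.

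The routine half is the $ED$ computation, which is just linearity of expectation together with the positional symmetry of uniform random permutations. The one subtlety to flag is the $ER$ half: the difference-metric argument does not transfer, since $E[X/Y]\neq E[X]/E[Y]$, and one genuinely needs either a convexity argument or the explicit two-element instance above, where the strictness of AM--GM (i.e.\ $\log_2 3\neq 1$) is precisely what rules out accidental optimality.
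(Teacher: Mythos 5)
Your proposal is correct and follows essentially the same route as the paper's proof: reduce to $ED$ and $ER$ via uniform relevance, establish $ED$ by linearity of expectation over uniform random rankings, and refute $ER$ with the identical two-candidate counterexample (the paper verifies $\tfrac{1}{2}(b(1)/b(2)+b(2)/b(1))\approx 1.11>1$ numerically, where you invoke strict AM--GM, a cosmetic difference).
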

\begin{proof}[Proof (sketch)]
Due to uniform relevance we only need to consider \ed{} and \er{}.
Since \ed{} is essentially a linear combination of position biases $b(k)$, \emph{optimality of random rankings} follows from the linearity of expected values.
For \er{}, on a population $\D = \{d_0, d_1\}$ with $d_0\in G_0$ and $d_1\in G_1$ this property is not satisfied.
\end{proof}

Finally, we look into the properties for Setting 2, in which subsets of a population are ranked.

\begin{restatable}{theorem}{expothm}
 All exposure metrics satisfy property 11 (closeness threshold),  property 12 
(deepness threshold), and property 13 (sensitivity).
\end{restatable}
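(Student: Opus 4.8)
The plan is to reduce everything to \ed{} and \er{} and then treat the three properties in turn. All three properties assume uniform relevance, under which \dtd{} and \did{} collapse to \ed{} and \dtr{}, \dir{} collapse to \er{} (as noted in Section~\ref{sec:ex_metrics}); hence it suffices to verify properties 11--13 for \ed{} and \er{}.

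For properties 11 and 12 I would first write out the four exposure terms for the two candidate sets. Writing $S_N := \sum_{k=N+1}^{2N} b(k)$, for $r \in \mathcal{R}_{\operatorname{first}}(D_N)$ one has $\operatorname{Exposure}(G_1|r) = \tfrac{1}{|G_1|}$ and $\operatorname{Exposure}(G_0|r) = \tfrac{1}{|G_0|}\sum_{k=2}^{2N}b(k)$, whereas for $r' \in \mathcal{R}_{\operatorname{last}}(D_N')$ one has $\operatorname{Exposure}(G_1|r') = \tfrac{1}{|G_1|} S_N$ and $\operatorname{Exposure}(G_0|r') = \tfrac{1}{|G_0|}\sum_{k=1}^{N}b(k)$. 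For \ed{} the key observation is that, using $b(1)=1$, the difference telescopes to
\[
ED(r) - ED(r') = \left(\tfrac{1}{|G_0|} + \tfrac{1}{|G_1|}\right)\bigl(1 - S_N\bigr),
\]
so the comparison is governed entirely by whether $S_N$ is below or above $1$. Since $S_1 = b(2) < 1$, property 11 holds (with threshold $N'=1$), and since $S_N \geq N\,b(2N) = \tfrac{N}{\log_2(2N+1)}$ is increasing and unbounded, $S_N > 1$ for all large $N$, giving property 12.

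For \er{} I would set $B := S_N$, $C := \sum_{k=1}^{N} b(k)$, and note $\sum_{k=2}^{2N} b(k) = B + C - 1 =: A$, so that
\[
\frac{ER(r)}{ER(r')} = \frac{C}{A\,B}.
\]
Property 11 then reduces to $C > AB$ for small $N$ (true at $N=1$, where $C=1$ and $A=B=b(2)$, so the ratio is $b(2)^{-2} > 1$), and property 12 to $C < AB$ for large $N$. The latter follows from $A \geq B$ (since $C \geq b(1) = 1$), giving $AB \geq B^2 \geq (N\,b(2N))^2$, which eventually exceeds the crude bound $C \leq N$ because $\tfrac{N}{(\log_2(2N+1))^2} \to \infty$. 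I expect the main obstacle to be precisely this multiplicative \er{} comparison: unlike \ed{}, it does not telescope, so I must combine the asymptotic sizes of the partial sums $B$ and $C$ with an eventual-monotonicity argument for the governing lower bound, so as to guarantee the inequality for \emph{all} $N$ past the threshold rather than merely infinitely often.

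Property 13 is the simplest and I would dispatch it directly. Appending a non-protected candidate $d' \in G_0$ at position $n+1$ leaves every protected candidate's position unchanged, so $\operatorname{Exposure}(G_1|r') = \operatorname{Exposure}(G_1|r)$, while $\operatorname{Exposure}(G_0|r') = \operatorname{Exposure}(G_0|r) + \tfrac{b(n+1)}{|G_0|}$ strictly increases since $b(n+1) > 0$. Hence $ED(r') = ED(r) - \tfrac{b(n+1)}{|G_0|} < ED(r)$, and for \er{} the numerator stays fixed while the denominator strictly increases, so $ER(r') < ER(r)$; this completes the argument once the non-degenerate case in which both groups are represented is noted.
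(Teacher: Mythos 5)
Your proposal is correct and follows essentially the same route as the paper's proof: reduce to \ed{} and \er{} under uniform relevance, compute the four exposure terms explicitly, reduce the \ed{} comparison to whether $\sum_{k=N+1}^{2N} b(k) \lessgtr 1$, handle \er{} via the ratio of the same partial sums, and prove sensitivity by noting that appending $d'\in G_0$ only increases $\operatorname{Exposure}(G_0|\cdot)$ while leaving $\operatorname{Exposure}(G_1|\cdot)$ unchanged. The only divergence is your bounding step for the \er{} deepness threshold, where the paper reuses the inequality $\sum_{k=N+1}^{2N} b(k) > 1$ (valid for $N>2$) to conclude directly with the explicit threshold $N'=3$, whereas your cruder asymptotic bounds ($AB \geq B^2$ versus $C \leq N$) are equally valid but yield a larger, implicit threshold.
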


\begin{proof}[Proof (sketch)]
Due to uniform relevance, we only need to consider \ed{} and \er{}. 
When computing the fairness scores of these metrics $m\in\{ED,ER\}$ on rankings $r\in\mathcal{R}_{\operatorname{first}}(D_N)$, $r'\in\mathcal{R}_{\operatorname{last}}(D_N')$, the group proportions $p_{G_0},p_{G_1}$  cancel out, and one can directly solve the inequalities $m(r(D_N))\lessgtr m(r'(D_N')$ for $N$ to obtain the sought-for thresholds in properties 11 and 12.
\emph{Sensitivity} follows from $\operatorname{Exposure}(G_0|r)$ being the only term in both \ed{} and \er{} that is affected from appending a single candidate $d'\in G_0$ to a ranking $r$.
\end{proof}

\subheader{Properties of Attention-weighted Rank Fairness.}
\awrf{} by definition only takes on values in $[0,1]$, with $\vopt=1$. Thus, this metric does not satisfy \emph{distinguishability of groups}, but does satisfy \emph{boundedness}.
Our experiments on synthetic data have shown that \awrf{} does not satisfy \emph{monotonicity}, and next we show that it does not satisfy \emph{deepness} either.

\begin{theorem}
$AWRF$ does not satisfy property 4 (deepness).
\end{theorem}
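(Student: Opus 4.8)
The plan is to exploit that $\awrf$ depends on a ranking $r$ only through the exposure vector $\pexp(r)$, hence only through its first component $p_0 := \tfrac{1}{S}\sum_{d\in G_0\cap D} b(r^{-1}(d))$ with $S := \sum_{k=1}^n b(k)$, since the second component equals $1-p_0$. In particular the relevance equalities in the deepness preconditions are immaterial, as $\awrf$ ignores relevance scores. Consider the first case, $r(i),r(j)\in G_0$ and $r(i+1),r(j+1)\in G_1$ (the second case is symmetric). Swapping the adjacent pair at $(i,i+1)$ then decreases $p_0$ by $\delta_i/S$ with $\delta_i := b(i)-b(i+1)$, and swapping at $(j,j+1)$ decreases it by $\delta_j/S$ with $\delta_j := b(j)-b(j+1)$; both swaps move $p_0$ in the same direction, and since $b$ is strictly decreasing and convex we have $\delta_i > \delta_j > 0$, so the high-rank swap produces the strictly larger displacement of $\pexp(r)$.

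Writing $g(p) := \Delta_{\mathrm{JS}}\big((p,1-p),\,p_{\text{groups}}\big)$ so that $\awrf(r)=1-g(p_0)$, the two deepness quantities equal $\lvert g(p_0)-g(p_0-\delta_i/S)\rvert$ and $\lvert g(p_0)-g(p_0-\delta_j/S)\rvert$. For $\ed$ and $\er$ the corresponding map is monotone in $p_0$, so the larger displacement forces the larger change and deepness holds; for $\awrf$, however, $g$ is \emph{not} monotone, having a unique minimum (value $0$) at $p_0 = p_{G_0}(\D)$ and increasing to either side. The plan is to place the configuration so that the smaller low-rank swap stays on one side of this minimum while the larger high-rank swap \emph{overshoots} it. Concretely, if $p_0-\delta_i/S < p_{G_0}(\D) < p_0-\delta_j/S$ and the overshoot is mild enough that $g(p_0-\delta_j/S) \le g(p_0-\delta_i/S) \le g(p_0)$, then both absolute differences resolve with $g(p_0)$ as the larger term, giving $\lvert g(p_0)-g(p_0-\delta_i/S)\rvert \le \lvert g(p_0)-g(p_0-\delta_j/S)\rvert$ and contradicting the strict inequality required by deepness.

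It then remains to realize this regime by an explicit ranking: fix $n$ and positions $i<j$ with $j\ge i+2$ (so the four ranks are distinct), place the required $G_0$/$G_1$ pattern at $i,i+1,j,j+1$, and fill the remaining ranks so that $p_0=\pexp(r)$ lies just above the target $p_{G_0}(\D)$ by an amount strictly between $\delta_j/S$ and $\delta_i/S$; the freedom to set $p_{\text{groups}}$ independently of the ranked set---most readily in Setting~2, where $D\subsetneq\D$---makes it easy to tune the minimum of $g$ into this overshoot window, and we then exhibit such a population, ranking, and pair of positions and verify the reversed inequality numerically. The main obstacle is purely bookkeeping: the admissible displacements $\delta_i/S,\delta_j/S$ and values of $p_0$ are constrained by the integer group counts and by $b$ evaluated at the chosen ranks, so one must choose $n$ and the positions so that the overshoot window is nonempty and contains an achievable $p_0$---taking $n$ large and the two swap positions far apart, which makes $\delta_i$ much larger than $\delta_j$, guarantees this.
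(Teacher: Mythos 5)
Your proposal is correct and takes essentially the same route as the paper: the paper's proof is precisely the counterexample your mechanism prescribes --- a population with $p_\text{groups}(\D) = (0.56, 0.44)$ and an alternating ranking of $n=6$ candidates, where the swap at $i=3$ overshoots the minimum of $g$ (the $G_0$ exposure share drops from $\approx 0.571$ to $\approx 0.550$, crossing $0.56$) while the swap at $j=5$ stays above it ($\approx 0.562$), after which the reversed inequality is checked numerically. Both arguments ultimately rest on exhibiting one explicit configuration and verifying the numbers; your analysis of the non-monotonicity of $g$ is the design rationale behind that example rather than a genuinely different proof.
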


\begin{proof}
Let $\mathcal{D}$ a population with $p_\text{groups}(\mathcal{D}) = (0.56, 0.44)$, $D\subseteq \mathcal{D}$ a candidate set consisting of $n=6$ candidates, and $r\in\mathcal{R}(D)$ a ranking where $r(1),r(3),r(5)\in G_0$, and $r(2),r(4),r(6)\in G_1$.
If we choose $i=3$ and $j=5$, it holds that
\[
|AWRF(r) - AWRF(r_{i\leftrightarrow i+1}) | \approx 1.51\cdot 10^{-5} < 8.62\cdot 10^{-5} \approx |AWRF(r) - AWRF(r_{j\leftrightarrow j+1}) |,
\]
which shows that \emph{deepness} is not satisfied.
\end{proof}

\awrf{} does not consider relevance scores, so properties 5 and 6 are not applicable.
Since the optimal value $\vopt(AWRF)=1$ is also its upper bound, it is trivial that \awrf{} does not satisfy \emph{optimality of random rankings}.
Our experiments have further shown that properties 8-11 are not satisfied either.
To close this part of the analysis, we now show that this metric does not satisfy properties 12 and 13 either.

\begin{restatable}{theorem}{awrfthm}\label{thm:awrf}
$AWRF$ does not satisfy property 12 (deepness threshold) and property 13 (\emph{sensitivity}).

\end{restatable}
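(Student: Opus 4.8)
The plan is to disprove both properties using a single population in which the protected group is a small minority, say $p_\text{groups}(\D) = (0.9, 0.1)$, and to exploit that $AWRF(r) = 1 - \Delta_\text{JS}\!\left(\pexp(r), p_\text{groups}\right)$ is maximal exactly when the exposure distribution $\pexp(r)$ coincides with the population group distribution $p_\text{groups}$. The common engine is therefore to track how each relevant ranking operation moves $\pexp(r)$ relative to the fixed target $(0.9,0.1)$, and to convert ``closer to the target'' into ``higher $AWRF$'' via monotonicity of the Jensen--Shannon divergence in this two-point setting.

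For property 12, I would first compute the two exposure vectors explicitly. Writing $Z := \sum_{k=1}^{2N} b(k)$, $S_1 := \sum_{k=1}^{N} b(k)$, and $S_2 := \sum_{k=N+1}^{2N} b(k)$, a ranking $r\in\mathcal{R}_{\operatorname{first}}(D_N)$ places the single protected candidate first and gives $\pexp(r) = \big(1 - 1/Z,\, 1/Z\big)$, whereas a ranking $r'\in\mathcal{R}_{\operatorname{last}}(D_N')$ places the $N$ protected candidates last and gives $\pexp(r') = \big(S_1/Z,\, S_2/Z\big)$. As $N\to\infty$ we have $1/Z\to 0$, so $\pexp(r)\to(1,0)$; the key asymptotic is $S_2/Z\to\tfrac12$, which holds because $b(k)=\ln 2/\ln(k+1)$ is slowly varying, so that $\sum_{k=1}^{M} b(k)\sim \ln 2\cdot M/\ln M$ and hence $S_1\sim S_2\sim \ln 2\cdot N/\ln N$, giving $\pexp(r')\to(\tfrac12,\tfrac12)$. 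Comparing the limits against the target, a direct evaluation yields $\Delta_\text{JS}\big((1,0),(0.9,0.1)\big) < \Delta_\text{JS}\big((\tfrac12,\tfrac12),(0.9,0.1)\big)$, so by continuity of $\Delta_\text{JS}(\cdot,p_\text{groups})$ we get $AWRF(r) > AWRF(r')$ for all sufficiently large $N$. Since property 12 demands the reverse inequality for all $N$ beyond some threshold, no such threshold can exist and the property fails; the small remaining cases $N>2$ can be checked by direct computation to match the experimental observation. \textbf{The main obstacle is precisely the slowly-varying partial-sum asymptotic} establishing $S_2/Z\to\tfrac12$; once the two limit distributions are pinned down, the rest reduces to a finite divergence comparison.

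For property 13, I would exhibit a small explicit counterexample on a candidate set $D\subsetneq\D$ in which $G_0$ is under-exposed relative to its large population share, so that appending a further non-protected candidate actually moves $\pexp$ \emph{toward} $p_\text{groups}$. Concretely, with $p_\text{groups}=(0.9,0.1)$ take $D=\{d_0,d_1\}$ where $d_0\in G_0$, $d_1\in G_1$, and the ranking $r=\langle d_1,d_0\rangle$, so that $\pexp(r)=\tfrac{1}{1+b(2)}\big(b(2),1\big)\approx(0.39,0.61)$. Appending a candidate $d'\in G_0$ with $d'\notin D$ yields $r'=\langle d_1,d_0,d'\rangle$ and $\pexp(r')=\tfrac{1}{1+b(2)+b(3)}\big(b(2)+b(3),1\big)\approx(0.53,0.47)$. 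Both $G_0$-coordinates lie below the target value $0.9$, and the one for $r'$ is strictly larger, so $\pexp(r')$ is strictly closer to $p_\text{groups}$ than $\pexp(r)$; hence $\Delta_\text{JS}(\pexp(r'),p_\text{groups}) < \Delta_\text{JS}(\pexp(r),p_\text{groups})$ and $AWRF(r') > AWRF(r)$, contradicting sensitivity, which requires $AWRF(r') < AWRF(r)$. This part is a routine finite calculation and presents no real difficulty; the only care needed is to choose the population imbalance and the starting ranking so that adding a $G_0$ candidate genuinely decreases the divergence rather than increasing it.
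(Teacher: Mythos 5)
Your proof is correct, and it lands on the same counterexamples in spirit as the paper, but the technical route for property 12 is genuinely different. For property 13 you are essentially doing what the paper does: it also exhibits a two-candidate set in which $G_0$ is under-exposed relative to its population share and checks that appending a $G_0$ candidate raises the score (the paper uses $p_\text{groups}=(0.75,0.25)$ with the \emph{non-protected} candidate ranked first and verifies $AWRF \approx 0.984 \to 0.998$ numerically, while you use $(0.9,0.1)$ with the protected candidate first); your replacement of the numerics by a monotonicity argument is sound, since the Jensen--Shannon divergence is jointly convex and vanishes exactly at $p_\text{groups}$, hence is strictly decreasing along the segment of two-point distributions approaching the target. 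For property 12 the paper uses the same population $(0.9,0.1)$ and also computes $\lim_N AWRF(r(D_N)) \approx 0.948$, but it never identifies the limit of $B_N' := S_1/Z$; instead it proves the bounds $\tfrac12 < B_N' < 0.55$ for all $N>5000$ through three auxiliary propositions (a calculus inequality, monotone increase of an auxiliary function, and strict monotone decrease of $B_N'$ for $N\geq 5$ established by induction, plus a numerical check at $N=5000$), and then lower-bounds $\Delta_\text{KL}\left(p_\text{groups}\,\big\|\,p_\text{sum}\right) > 0.12$ to get $AWRF\big(r'\big(D_N'\big)\big) < 0.94 < 0.948$. You instead pin down the exact limit $\pexp\big(r'\big(D_N'\big)\big) \to \big(\tfrac12,\tfrac12\big)$ via the asymptotic $\sum_{k\leq M} 1/\ln(k+1) \sim M/\ln M$ and compare the two limiting divergences directly. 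That is shorter and yields more information (the limiting value $\approx 0.85$ versus $0.948$), but the partial-sum asymptotic is exactly the step you assert rather than prove; it is standard (Karamata's theorem for slowly varying functions, or an elementary splitting of the sum at $N/(\ln N)^2$), yet it is the one ingredient a self-contained write-up must supply, and the paper's heavier induction machinery exists precisely to avoid invoking it. Both arguments are valid; yours trades the paper's elementary but laborious monotonicity bounds for a cited asymptotic fact.
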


\begin{proof}[Proof (sketch)]
Regarding the \emph{deepness threshold}, one can consider a population $\D$ where $p_\text{groups}(\D) = (0.9,0.1)$, and then compare the values $AWRF(r(D_N)), AWRF\big(r'\big(D_N'\big)\big)$ for $N\rightarrow\infty$.
For \emph{sensitivity}, one can construct a counterexample on a candidate set $D'\subseteq \mathcal{D}'$ of $n'=2$ candidates from a population $\mathcal{D}'$ where $p_\text{groups}(\mathcal{D}') = (0.75, 0.25)$.
\end{proof}

\subheader{Properties of Pairwise Metrics.}
By its definition, \psp{} 
is \emph{bounded} in the interval $[-1,1]$. 
We will now show that \psp{} also satisfies \emph{monotonicity}, but does not satisfy \emph{deepness}.
\begin{theorem}
$PSP$ satisfies property 3 (monotonicity), but it does not satisfy property 4 (deepness).
\end{theorem}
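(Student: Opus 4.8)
The plan is to treat the numerator and denominator of \psp{} separately. The denominator $|G_0\times G_1|$ depends only on the group cardinalities and is unaffected by any reordering, so it suffices to track the numerator, which I would rewrite as a signed count over cross-group pairs: writing $N^+(r)$ for the number of pairs $(d,d')\in G_0\times G_1$ with $d'\succ_r d$ (the protected candidate ranked above) and $N^-(r)$ for those with $d\succ_r d'$, the numerator equals $N^+(r)-N^-(r)$. I would also note at the outset that \psp{} ignores relevance entirely, so the relevance hypotheses appearing in both properties play no role and only the group assignments and rank positions matter.

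For \emph{monotonicity}, consider the swap $\rij$ with $a:=r(i)\in G_0$ and $b:=r(j)\in G_1$, $i<j$. I would first argue that the only cross-group pairs whose relative order changes are the direct pair $(a,b)$ together with the pairs formed by $a$ or $b$ and a candidate $c=r(k)$ at an intermediate position $i<k<j$; every other pair either lies inside a single group or does not straddle the two moved candidates, and is therefore unaffected. A short case distinction on the group of $c$ then shows that in each case exactly one cross-group pair flips and always in the same direction, turning a pair counted in $N^-$ into one counted in $N^+$: if $c\in G_0$ the pair $(c,b)$ flips, if $c\in G_1$ the pair $(a,c)$ flips, and the direct pair $(a,b)$ flips as well. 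Each flip raises the numerator by $2$, giving $N^+(\rij)-N^-(\rij)=\big(N^+(r)-N^-(r)\big)+2(j-i)$, which is strictly larger since $j>i$. Dividing by the fixed denominator yields $\psp(\rij)>\psp(r)$.

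For \emph{deepness}, the key observation is that \psp{} is rank-blind: an adjacent swap of a non-protected and a protected candidate flips exactly one cross-group pair, since no intermediate candidate exists, and hence changes the numerator by exactly $\pm 2$ regardless of which rank positions are involved. Consequently both swaps in the definition of deepness have the identical impact $2/|G_0\times G_1|$, so the required strict inequality cannot hold. To make this concrete I would exhibit a minimal counterexample under uniform relevance: a four-candidate ranking $r=\langle r(1),r(2),r(3),r(4)\rangle$ with $r(1),r(3)\in G_0$ and $r(2),r(4)\in G_1$, and take $i=1$, $j=3$. Both $|\psp(r)-\psp(r_{1\leftrightarrow 2})|$ and $|\psp(r)-\psp(r_{3\leftrightarrow 4})|$ then equal $2/|G_0\times G_1|$, so the two quantities in the deepness inequality are equal in absolute value rather than the first being strictly larger.

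The main obstacle is the bookkeeping in the monotonicity step: one must verify carefully that no cross-group pair flips in the wrong direction (from $N^+$ to $N^-$) and that pairs lying within a single group, or pairs not straddling $a$ and $b$, are genuinely unchanged. Once the case analysis confirms that every affected pair contributes exactly $+2$, both halves of the theorem follow immediately, with \emph{deepness} requiring only the single explicit configuration above.
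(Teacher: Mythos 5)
Your proposal is correct and follows essentially the same route as the paper's proof: monotonicity by observing that swapping a non-protected candidate above a protected one only flips cross-group pairs from the ``$d \succ_r d'$'' set into the ``$d' \succ_r d$'' set, and failure of deepness because an adjacent swap always flips exactly one cross-group pair, making the change in $PSP$ independent of the rank at which the swap occurs. Your version is simply more explicit than the paper's—pinning down the exact increment $2(j-i)$ in the numerator and exhibiting a concrete four-candidate counterexample—but the underlying argument is the same.
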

\begin{proof}
For all $i<j\leq|\D|$ it holds that swapping candidates $r(i)\in G_1$ and $r(j)\in G_0$ will increase the set cardinality $\big|\{(d, d')\in G_0 \times G_1 : d' \succ_r d\}\big|$, and decrease the set cardinality $\big|\{(d, d')\in G_0 \times G_1 : d \succ_r d'\}\big|$.
Thus, \emph{PSP} satisfies \emph{monotonicity}. In the case that $j = i+1$, the increase/decrease in the cardinalities of the sets 
$\big|\{(d, d')\in G_0 \times G_1 : d' \succ_r d\}\big|$ and $\big|\{(d, d')\in G_0 \times G_1 : d \succ_r d'\}\big|$, respectively, is always exactly 1, independent from the exact value of $i$.
Therefore, \emph{PSP} does not satisfy \emph{deepness}.
\end{proof}

Since \psp{} does not consider relevance scores, properties 5 and 6 are not applicable to it.
We neglected the results of \psp{} with respect to setting 1 in section \ref{sec:synex}, since all associated properties seemed to be satisfied.
We now show that this claim holds in general, and that \psp{} also satisfies \emph{distinguishability of groups}.

\begin{theorem}\label{thrm:psp}
$PSP$ satisfies properties 1, 8, 9, and 10. More precisely, for any candidate population $\D$ it holds that $\vf(PSP,\D) = 1$ and $\vl(PSP,\D) = -1$. 
\end{theorem}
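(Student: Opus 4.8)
The plan is to compute $\vf(PSP,\D)$ and $\vl(PSP,\D)$ directly, and then observe that the remaining claims follow immediately. Recall the definition
\[
PSP(r) = \frac{\big|\{(d, d')\in G_0 \times G_1 : d' \succ_r d\}\big| - \big|\{(d, d')\in G_0 \times G_1 : d \succ_r d'\}\big|}{\big| G_0 \times G_1\big|}.
\]
Every ordered pair $(d,d')\in G_0\times G_1$ contributes to exactly one of the two sets in the numerator, since $d$ and $d'$ occupy distinct ranks and hence exactly one of $d'\succ_r d$ or $d\succ_r d'$ holds. Therefore the two cardinalities always sum to $|G_0\times G_1|$, which means $PSP$ is of the form $\frac{A - B}{A+B}$ with $A+B = |G_0\times G_1|$ fixed.

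First I would evaluate the extreme rankings. For $r\in\mathcal{R}_{\operatorname{first}}(\D)$, every protected candidate is ranked above every non-protected candidate, so for each pair $(d,d')\in G_0\times G_1$ we have $d'\succ_r d$. Hence the first set has cardinality $|G_0\times G_1|$, the second has cardinality $0$, and $PSP(r) = \frac{|G_0\times G_1| - 0}{|G_0\times G_1|} = 1$. Symmetrically, for $r'\in\mathcal{R}_{\operatorname{last}}(\D)$ every non-protected candidate outranks every protected candidate, so $d\succ_{r'} d'$ for all such pairs, giving $PSP(r')= \frac{0 - |G_0\times G_1|}{|G_0\times G_1|} = -1$. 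This establishes $\vf(PSP,\D)=1$ and $\vl(PSP,\D)=-1$ for \emph{every} population $\D$, independent of $|\D|$ or $p_{G_1}(\D)$.

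The remaining properties then follow with essentially no further work. Since $\vopt(PSP)=0$, the chain $\vl(PSP,\D) = -1 < 0 < 1 = \vf(PSP,\D)$ gives property 1 (\emph{distinguishability of groups}). Because the computed extreme values are the constants $1$ and $-1$ regardless of the population, they are in particular unchanged when the ranking length varies at fixed group proportion (property 8) or when the group proportion varies at fixed length (property 9). Finally, symmetry holds because $|\vf(PSP,\D)-\vopt| = |1-0| = 1 = |0-(-1)| = |\vopt - \vl(PSP,\D)|$, giving property 10 via its additive clause. I do not anticipate a genuine obstacle here: the only subtlety is the counting observation that the two numerator cardinalities partition $G_0\times G_1$, and once that is stated the extreme-value computations are immediate. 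The main thing to be careful about is matching the sign convention so that $\mathcal{R}_{\operatorname{first}}$ (protected ranked on top) yields the value above $\vopt$, consistent with property 1's requirement that $\vf > \vopt$.
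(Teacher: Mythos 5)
Your proof is correct and follows essentially the same route as the paper: directly evaluate $PSP$ on rankings in $\mathcal{R}_{\operatorname{first}}(\D)$ and $\mathcal{R}_{\operatorname{last}}(\D)$ to get the constants $1$ and $-1$, from which properties 1, 8, 9, and 10 follow immediately. If anything, your version is the more reliable one: the paper's proof contains a slip swapping the two cardinalities (it asserts $\big|\{(d,d')\in G_0\times G_1 : d'\succ_r d\}\big| = 0$ for $r\in\mathcal{R}_{\operatorname{first}}(\D)$, whereas by definition of $\mathcal{R}_{\operatorname{first}}$ this set is all of $G_0\times G_1$, so the stated counts would yield $-1$ rather than the claimed $1$), while you state the counts correctly and also spell out the deduction of properties 1 and 8--10, which the paper leaves implicit.
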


\begin{proof}
By definition of $\mathcal{R}_{\operatorname{first}}$, for every $r\in\mathcal{R}_{\operatorname{first}}(\D)$ it holds that $\big|\{(d, d')\in G_0 \times G_1 : d' \succ_r d\}\big| = 0$ and thus $\big|\{(d, d')\in G_0 \times G_1 : d \succ_r d'\}\big| = \big|G_0 \times G_1 \big|$. 
From that, it directly follows that $PSP(r)=1$.
Analogously, it can be shown that $PSP(r)=-1$ for all $r\in\mathcal{R}_{\operatorname{last}}(\D)$.
\end{proof}

The only property left for consideration with respect to \psp{} is \emph{optimality of random rankings}, given that \psp{} is not applicable in Setting 2. For this property, we obtain the following result.

\begin{restatable}{theorem}{pspthm}
$PSP$ satisfies property 7 (optimality of random rankings).
\end{restatable}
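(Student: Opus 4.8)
The plan is to show that the expected value of $PSP$ over a uniformly random ranking of the full population $\D$ equals its optimal value $\vopt(PSP)=0$. By definition,
\[
PSP(r) = \frac{\big|\{(d,d')\in G_0\times G_1 : d'\succ_r d\}\big| - \big|\{(d,d')\in G_0\times G_1 : d\succ_r d'\}\big|}{|G_0\times G_1|}.
\]
First I would observe that by linearity of expectation, it suffices to compute the expected contribution of a single cross-group pair. For any fixed pair $(d,d')\in G_0\times G_1$, I would introduce the indicator-like quantity that is $+1$ when $d'\succ_r d$ and $-1$ when $d\succ_r d'$, so that the numerator is the sum of these quantities over all $|G_0\times G_1|$ pairs. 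The expectation of $PSP$ is then the average of the per-pair expectations.

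The key step is a symmetry argument. For a uniformly random ranking $R\sim R(\D)$ and any fixed pair of distinct candidates $d\in G_0$, $d'\in G_1$, the events $\{d'\succ_R d\}$ and $\{d\succ_R d'\}$ are equiprobable, each with probability $\tfrac12$. This follows because the map that swaps the positions of $d$ and $d'$ is a bijection on $\mathcal{R}(\D)$ that preserves the uniform measure and exchanges the two events. Hence the expected value of the $\pm1$ quantity for each pair is $\tfrac12\cdot(+1) + \tfrac12\cdot(-1) = 0$. Summing over all pairs and dividing by $|G_0\times G_1|$ gives $v_E(PSP,\D) = 0 = \vopt(PSP)$, which is exactly the claim.

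I do not expect a serious obstacle here, since the argument rests entirely on the pairwise symmetry of uniform random permutations; the main thing to be careful about is that $PSP$ assumes uniform relevance is \emph{not} needed for this particular property (the metric does not use relevance scores at all), so the statement holds for any population $\D$, and that the bijection argument is stated cleanly for each individual cross-group pair before invoking linearity of expectation. The only mild subtlety worth stating explicitly is that $d$ and $d'$ are always distinct (they lie in disjoint groups $G_0$ and $G_1$), so the swap is genuinely nontrivial and the two orderings are well-defined and mutually exclusive.
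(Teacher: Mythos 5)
Your proof is correct, but it takes a genuinely different route from the paper. The paper constructs a single global involution $\pi_{\text{inv}}$ on $\mathcal{R}(\D)$ that reverses each ranking, proves $PSP(\pi_{\text{inv}}(r)) = -PSP(r)$, and then pairs each ranking with its reversal so that the sum over all rankings cancels. You instead decompose the numerator of $PSP$ into per-pair $\pm 1$ contributions, use the swap bijection on $\mathcal{R}(\D)$ for each fixed cross-group pair $(d,d')\in G_0\times G_1$ to get $P(d'\succ_R d)=P(d\succ_R d')=\tfrac12$, and conclude by linearity of expectation. Your pairwise argument is more elementary and in a sense finer-grained: it shows that every individual cross-group pair already has zero expected contribution, which makes the cancellation local rather than global, and the same template extends readily to any metric built as an average of pairwise comparison indicators. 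The paper's inversion argument buys something slightly different: since $\pi_{\text{inv}}$ is measure-preserving and negates $PSP$, it shows that the entire distribution of $PSP$ under a uniform random ranking is symmetric about $0$, not merely that its mean is $0$. Your closing remarks are also accurate: $PSP$ ignores relevance scores, so the uniform-relevance hypothesis in property 7 is vacuous here, and the disjointness of $G_0$ and $G_1$ guarantees the two orderings of each pair are well-defined and mutually exclusive.
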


\begin{proof}[Proof (\emph{sketch})]
One can show that inverting a ranking $r$, i.e., moving all candidates from their rank $k$ to rank $n+1-k$, yields the negative \psp{} score of the original ranking. 
From this relationship it follows that $v_E(PSP)=\vopt=0$.
\end{proof}


\begin{sidewaystable}
        \centering
	    \resizebox{\linewidth}{!}{\tiny
		\begin{tabularx}{\textwidth}{XlXccccccccccccc}
			\hline
			\textbf{Metric} & \textbf{Definition} & \textbf{Description} & \textbf{P 1} & \textbf{P 2} & \textbf{P 3} & \textbf{P 4} & \textbf{P 5} & \textbf{P 6} & \textbf{P 7} & \textbf{P 8} & \textbf{P 9} & \textbf{P 10} & \textbf{P 11} & \textbf{P 12} & \textbf{P 13}\\ 
			\hline
			
			Normalized discounted \newline difference ($rND$) \cite{yang2016measuring} & $\mathlarger{1} - \tfrac{1}{Z_\text{ND}(D)}\cdot \sum_{k\in I} b(k)\cdot \left|p_{G_1}^k(r) - p_{G_1}(r)\right|$ & difference in proportion of protected group in top-$k$ and overall ranking 
			&\cross & \gcheck & \cross & \cross & N/A & N/A & \cross & \cross & \cross & \cross &\cross &\cross &\cross \\ \rowcolor[gray]{.9}
			
			Normalized discounted \newline ratio ($rRD$) \cite{yang2016measuring} &
			$\mathlarger{1} - \frac{1}{Z_\text{RD}(D)} \cdot \sum_{k \in I} b(k) \cdot  \left|\tfrac{p_{G_1}^k(r)}{p_{G_0}^k(r)} - \tfrac{p_{G_1}(r)}{p_{G_0}(r)}\right|$
			& difference of ratios of protected to non-protected candidates in top-$k$ and overall ranking 
			& \cross & \gcheck & \cross & \cross & N/A & N/A & \cross & \cross & \cross & \cross & \cross & \cross & \cross \\ 
			
			Normalized discounted \newline KL-divergence ($rKL$) \cite{yang2016measuring} & $\mathlarger{1} - \frac{1}{Z_\text{KL}(D)} \cdot \sum_{k\in I} b(k) \cdot \sum_{j\in\{0,1\}}p^k_{G_j}(r)\cdot \log\tfrac{p_{G_j}^k(r)}{p_{G_j}(r)} $ & expectation of difference in proportion of protected group in top-$k$ and overall ranking 
			&\cross & \gcheck & \cross & \cross & N/A & N/A & \cross & \cross & \cross & \cross  & \cross & \cross & \cross \\ \rowcolor[gray]{.9}
			
			Exposure difference ($ED$) \cite{Singh2018} & $\operatorname{Exposure}(G_1|r) - \operatorname{Exposure}(G_0|r)$ 
			& difference between group exposures & \gcheck & \gcheck & \gcheck & \gcheck & N/A & N/A & \gcheck & \cross & \cross & \cross & \gcheck & \gcheck & \gcheck \\ 
			
			Exposure ratio ($ER$) \cite{Singh2018} & $\frac{\operatorname{Exposure}(G_1|r)}{\operatorname{Exposure}(G_0|r)}$ & ratio of group exposures 
			& \gcheck & \cross & \gcheck & \gcheck & N/A & N/A & \cross & \cross & \cross & \cross & \gcheck & \gcheck & \gcheck \\ \rowcolor[gray]{.9}
			
			Disparate treatment \newline difference ($DTD$) \cite{Singh2018} & $\frac{\operatorname{Exposure}(G_1|r)}{Y(G_1)} - \frac{\operatorname{Exposure}(G_0|r)}{Y(G_0)}$ & difference of ratios of group exposure to group relevance 
			& \gcheck & \cross & \gcheck & \gcheck & \cross & \cross & \gcheck & \cross & \cross & \cross &\gcheck & \gcheck & \gcheck \\ 
			
			Disparate treatment \newline ratio ($DTR$)\cite{Singh2018} & $\frac{\operatorname{Exposure}(G_1|r)}{\operatorname{Exposure}(G_0|r)}\cdot \frac{Y(G_0)}{Y(G_1)}$ & quotient of ratios of group exposure to group relevance 
			& \gcheck & \cross & \gcheck & \gcheck & \cross & \cross & \cross & \cross & \cross & \cross  & \gcheck & \gcheck & \gcheck \\ 	\rowcolor[gray]{.9}		
			
			Disparate impact \newline difference ($DID$) \cite{Singh2018} & $\frac{CTR(G_1|r)}{Y(G_1)} - \frac{CTR(G_0|r)}{Y(G_0)}$ & difference of ratios of group click-through rate to group relevance 
			& \gcheck & \cross & \gcheck & \gcheck & \gcheck & \cross & \gcheck & \cross & \cross &\cross & \gcheck & \gcheck & \gcheck \\
			
			Disparate impact ratio ($DIR$) \cite{Singh2018} & $\frac{CTR(G_1|r)}{CTR(G_0|r)}\cdot \frac{Y(G_0)}{Y(G_1)}$ & quotient of ratios of group click-through rate to group relevance 
			& \gcheck &\cross & \gcheck & \gcheck & \gcheck & \cross & \cross & \cross & \cross & \cross & \gcheck & \gcheck & \gcheck \\ \rowcolor[gray]{.9}
			
			Attention-weighted Rank\newline Fairness ($AWRF$) \cite{sapiezynski2019quantifying} & $\mathlarger{1} - \tfrac{1}{2}\cdot\left[\mathlarger{\Delta_\text{KL}}\left(\pexp(r)\big\| p_\text{sum}(r)\right) + \cdot\mathlarger{\Delta_\text{KL}}\left(p_\text{groups}\big\| p_\text{sum}(r)\right)\right]$ & Jensen-Shannon divergence between average group exposure and relative group sizes
			& \cross &\gcheck & \cross & \cross & N/A & N/A & \cross & \cross & \cross & \cross & \cross & \cross & \cross \\ 
			
			Pairwise statistical \newline parity ($PSP$) \cite{narasimhan2020pairwise} & $\frac{\big|\{(d, d')\in G_0 \times G_1| d' \succ_r d\}\big| - \big|\{(d, d')\in G_0 \times G_1 | d \succ_r d'\}\big|}{\big| G_0 \times G_1\big|} $ & difference in chances of one group being ranked higher than the other 
			& \gcheck & \gcheck & \gcheck & \cross & N/A & N/A & \gcheck & \gcheck &  \gcheck &  \gcheck  & N/A & N/A & N/A\\
			\hline
		\end{tabularx}
		}
	\caption{\emph{Overview of existing group fairness metrics in rankings.} For each of the group fairness metrics considered in this work, we give the definition along with a brief description, and illustrate which of the four properties they satisfy. 
	We find that the majority of of existing group fairness metrics only satisfy a fraction of our properties.
	In particular, established metrics such as \rnd{}, \rrd{}, \rkl{} and \awrf{} do not satisfy any of the given properties (\cross) aside from property 2 (\gcheck).
	Conversely, \psp{} is the only metric that satisfies properties 8-10, but this metric is also the only metric not applicable in Setting 2.
	 N/A marks cases where a property is not applicable for a given metric.}
	\label{table:overview}
\end{sidewaystable}

\section{Discussion}
We present an overview of which properties are satisfied by the reviewed group fairness metrics in Table \ref{table:overview}.
Our analysis shows that most of these metrics satisfy only a subset of the proposed properties.
In particular, well-established metrics such as the prefix metrics or \awrf{} only satisfy one property.
We note that a metric does not need to satisfy all the properties in order to be considered appropriate for a given ranking context.
However, not satisfying specific properties can have practical implications when interpreting or comparing results. 
The choice of a ranking metric for a problem at hand should therefore consider the properties that are deemed relevant for the given context. 
In the following, we discuss potential implications of our analysis and point out limitations of our study.

\subheader{Universal Properties.}
Overall, the prefix metrics and \awrf{} only appear to satisfy property 2 (\emph{boundedness}).
Furthermore, these are the only metrics that do not satisfy property 1 (\emph{distinguishability of groups}).
Hence, if any of these metrics generates a non-optimal value, it is difficult to conclude if the protected or the non-protected group is at an advantage.
However, a metric can only practically be used as (part of a) loss function in optimizing for fairness, if the optimal value $\vopt(m)$ is also its minimum or maximum, which is one of the primary objective when designing these metrics~\cite{yang2016measuring}.
By taking away the absolute values within the sums of \rnd{} and \rrd{}, these metrics could be 
made to satisfy \emph{distinguishability of groups}, but that may lead to violation of \emph{boundedness}.

Concerning the other universal properties, the prefix metrics and \awrf{} do not satisfy \emph{monotonicity} and \emph{deepness} either.
This is in principle due to their usage of absolute differences or Jensen-Shannon divergence as internal distance functions, which mathematically counters  design choices such as using position bias to assign more weight to higher ranking positions.

In contrast to the prefix metrics, exposure metrics generally satisfy properties 1-4 but not \emph{boundedness}, except for \ed{}. 
\dtd{}, \dtr{}, \did{}, and \dir{} are also the only metrics that consider relevance scores, but only the latter two satisfy property 5 (\emph{intra-group fairness}). 
Hence, when it is also of concern to rank items of the same group in concordance with their relevance, these metrics should not be used. 

None of the exposure metrics satisfy property 6 (\emph{invariance to linear transformation of relevance scores}). Hence, standard normalization operations such as min-max scaling or z-score normalization (often used in practice~\citep{zehlike2020matching}) should only be deployed with proper justifications. 
Moreover, our experiments indicate that in some cases, such a transformation could even alter the score from indicating an unfair outcome to indicating fair outcome for the protected group.  
This opens a new discussion on how ground truth relevance scores should be gathered in the first place.
Given that they can fundamentally influence the fairness assessment of a metric, exact relevance scores and value ranges should be consciously chosen and justified. 

\subheader{Properties for Setting 1.}
For Setting 1 in which the full candidate population is ranked, \psp{} is the only metric that does satisfy any of the given properties aside from property 7 (\emph{optimality of a random ranking}).
This again has strong implications for practical use cases.
When optimizing fair ranking models based on such metrics, the corresponding loss functions will be particularly sensitive for settings where the range of values of a metric is particularly large, e.g., when there are very few protected instances in the candidate set.
Therefore, for this specific setting, \psp{} might be best suited as an evaluation metric, unless \emph{deepness} is also a requirement.
In that regard, we suspect that the choice of the inverse logarithm as a position bias function \eqref{eq:posbias} that ensures satisfaction of \emph{deepness} is a large factor in skewing the values of these ranking metrics so that they do not satisfy properties 8-10. Further investigations are necessary in this direction.

Further, in line with the findings of \citet{kuhlman2021measuring}, our experiments on properties 8-10 demonstrate that several metrics such as \rnd{}, \rkl{}, or the exposure metrics are particularly sensitive to cases where the minority group is at a disadvantage. 
The fact that disadvantaging such a minority group can result in fairness values close to the optimum may, in practice, lead to a lack of recognition of discrimination against them. 
Since in the real world minorities often correspond to groups that already experience discrimination, this effect requires particular attention.

\subheader{Properties for Setting 2.}
We found that in Setting 2, when only a subset of a candidate population is ranked, \psp{} is not applicable and both the prefix metrics and \awrf{} do not satisfy any properties.
In contrast, all exposure metrics satisfy each of the properties for Setting 2.
This behavior bears some similarity to the universal properties, and overall suggests that the exposure metrics might be more suitable for such application scenarios than the prefix metrics or \awrf{}.

\subheader{Limitations.}
We have deliberately restricted ourselves to the setting of a binary sensitive attribute, in which only one group needs protection. Moreover, we only considered the simple setting of single deterministic rankings, whereas given metrics are often applied in context of stochastic rankings.
We argue that these issues however pose no real restriction, since our properties as well as our findings for the binary setting would mostly generalize to non-binary settings as well.
Further, when measuring performance of stochastic rankings, one computes expected fairness values over a given metric, where performance of single rankings are ultimately averaged.
Therefore, we argue that our findings regarding inconsistencies in the values of such metrics are particularly relevant for this scenario. 

We have focused on group fairness metrics, given a vast majority of the existing metrics are based on this notion \cite{Zehlike2022a, Zehlike2022b}. 
An obvious extension would be to include other notions of fairness (e.g., individual, causality-driven). 
Given these metrics are fundamentally different, we believe such extensions call for separate research efforts and are beyond the scope of this paper.

\section{Related Work}

We provide a brief overview of related research on metrics for fairness in rankings.

\noindent\textbf{Metrics for Fairness in Rankings.} 
In addition to the metrics for assessing group fairness which have been presented in Section \ref{sec:ex_metrics}, there are also several metrics that evaluate rankings based on other definitions of fairness which we briefly summarize here.
\emph{Equity of attention}~\cite{Biega2018}, which is based on the notion of individual fairness, requires that the  ranked subjects receive attention that is proportional to their relevance in a given search task. 
The notion of \emph{equal error rates}, which stipulates that a ranking model attains similar error rates for each group, is represented by metrics such as pairwise group accuracy~\cite{narasimhan2020pairwise, beutel2019fairness}. 
Finally, fairness can also be evaluated by means of causal inference, and this approach has been taken by \citet{wu2018discrimination}.    

\subheader{Comparative Evaluations of Metrics for Information Retrieval Problems.} 
Given that evaluation plays a key role in determining the efficiency of information retrieval systems, several attempts have been made towards comparing different evaluation metrics, albeit in context of the quality and diversity of the retrieved items~\cite{amigo2009comparison,hofmann2016online,sakai2007reliability,amigo2013general, amigo2018axiomatic, amigo2019comparison, parapar2021towards}.
\citet{amigo2013general} point out a set of general constraints that any evaluation measure concerning tasks like document clustering, filtering, ranking or retrieval should satisfy. 
As noted above, the \emph{deepness}, \emph{closeness threshold}, \emph{deepness threshold}, and \emph{sensitivity} properties are also directly adapted from their work.
Similarly, \citet{amigo2018axiomatic} introduce constraints which concern the metrics for quantifying diversity and quality of the retrieval results. \citet{parapar2021towards} further develop diversity and relevance constraints for evaluation metrics in the context of recommender systems. 
With respect to document filtering, \citet{amigo2019comparison} provide a comparative study of evaluations metrics through a set of defined properties such as \emph{monotonicity}, which stipulates that relabeling an item to the correct category should increase the score, and was also adapted to the group fairness setting in our work. 
In the context of argument retrieval, \citet{pathiyan2021evaluating} also provide a comparative analysis of fairness metrics for this problem.

\subheader{Comparative Evaluations of Metrics for Fairness in Rankings.} 
In our work, we primarily focus on analyzing existing metrics for group fairness in rankings. 
In this regard, the works by \citet{kuhlman2021measuring} and \citet{raj2022measuring} are the closest to our work. 
\citet{raj2022measuring} qualitatively compare different metrics with regard to scenarios such as their ability to handle missing data (e.g., relevance scores, group memberships) or dealing with edge cases among others. 
Their empirical evaluation further demonstrates that the metrics do not show clear consensus across datasets. 
Additionally, they perform a sensitivity analysis of the metrics with respect to their parameter settings as well as design choices which include size of ranking.
\citet{kuhlman2021measuring} compare the behavior of fairness metrics in expectation over distributions of rankings characterized by functions of group advantage.
Additionally, they provide recommendations for practitioners when selecting a suitable fairness metric for a given use case. 
In contrast to the existing work, our comparison of the metrics is through the lens of a set of properties which are inspired from real-world scenarios, including the size of rankings as explored by \citet{raj2022measuring}. We further provide a detailed theoretical analysis of the metrics with respect to their satisfiability of the proposed properties.   
Such comparative analysis, to the best of our knowledge, has largely remained unexplored in the existing literature. 

\section{Conclusion}

This work proposes 13 properties of group fairness metrics for rankings and evaluates to what extent existing metrics satisfy them. 
These properties are categorized based on settings where (i) a full population of candidates is ranked, and (ii) only a subset of a bigger population ends up being ranked.
Through extensive empirical as well as theoretical analysis, we demonstrate that existing fairness metrics only satisfy a subset of these properties. 
Our work not only highlights limitations of existing metrics, but also contributes to assessing and interpreting different fairness metrics for rankings when deployed in real-world applications. 
Given the plethora of available ranking fairness metrics and the unavailability of principled ways for selecting fairness metrics, our work could also act as a guide for practitioners in choosing appropriate metrics for evaluating fairness in a given application scenario.

\bibliographystyle{plainnat}
\bibliography{references}

\appendix
\section{Table of Notations}
\vspace{0.5cm}

\begin{center}
    \begin{tabular}{|c|l|}
    \toprule
        $b(k)$ & position bias at rank $k$, defined as $b(k)=\tfrac{1}{\log_2(k+1)}$\\
        $d$ & candidate\\
        $D$ & candidate set \\
        $\D$ & population of candidates \\
        $D_N$ & candidate set where $|D_N| = 2N$, $|D_N\cap G_1| = 1$, and $y(d) = 1$ for all $d\in D_N$ \\ 
        $D_N'$ & candidate set where $|D_N'| = 2N$, $|D_N'
        \cap G_1| = N$, and $y(d) = 1$ for all $d\in D_N'$ \\ 
        $G_0$ & non-protected group\\
        $G_1$ & protected group\\
        $p_{G_0}(\D)$ & relative proportion of non-protected group in population $\D$\\
        $p_{G_1}(\D)$ & relative proportion of protected group in population $\D$\\
        $p_{G}^k(r)$ & relative proportion of candidates from group $G\in\{G_0,G_1\}$ in top-$k$ positions of ranking $r$ \\
        $p_\text{groups}(\D)$ & vector of relative group proportions  $\left(p_{G_0}(\D), p_{G_1}(\D)\right)$ \\
        $m$ & ranking metric\\
        $m(r)$ & fairness score of ranking metric $m$ on ranking $r$\\
        $n$ & cardinality of candidate set/length of ranking\\
        $q$ & query\\
        $\mathcal{Q}$ & set of queries\\
        $r$ & ranking\\
        $r(k)$ & candidate at position $k$ in ranking $r$\\
        $r^{-1}(d)$ & rank of candidate $d$ in ranking $r$\\
        $\succ_r$ & ranking relation, if $d$ is \textit{ranked higher} than $d'$ we write $d \succ_r d'$ \\
        $r_{i \leftrightarrow j}$ & ranking resulting from swapping the candidates at positions $i$ and $j$ in a ranking $r$\\
        $\mathcal{R}(D)$ & set of all rankings on candidate set $D$\\
        $\mathcal{R}_{\operatorname{first}}(D)$ & set of all rankings on $D$ where every candidate $d\in G_1$ is ranked higher than every $d\in G_0$\\
        $\mathcal{R}_{\operatorname{last}}(D)$ & set of all rankings on $D$ where every candidate $d\in G_1$ is ranked lower than every $d\in G_0$\\
        $\vopt(m)$ & optimal value of fairness metric $m$\\
        $\vf(m,D)$ & value that metric $m$ takes on every ranking $r\in\mathcal{R}_{\operatorname{first}}(D)$ \\
        $\vl(m,D)$ & value that metric $m$ takes on every ranking $r\in\mathcal{R}_{\operatorname{last}}(D)$ \\
        $\mathbf{x}$ & feature vector of a candidate\\ 
        $y(d), y(\mathbf{x})$ & relevance score of a candidate $d$ with feature vector $\mathbf{x}$\\
        $Y(G)$ & average relevance of all candidates from group $G$  \\\bottomrule      
    \end{tabular}
\end{center}

\newpage
\section{Supplementary Proofs for Theoretical Analysis}\label{apdx:proofs}

In the following, we provide detailed proofs to the theorems given in Section \ref{sec:theory}.

\prefixthm*
\begin{proof}
We consider a population $\D$ where $p_{G_1}(\D) = 0.8$, and, for the \emph{deepness threshold}, consider a minimal example with the index set $I=\{N\}$.
Then for each $r'\in\mathcal{R}_{\operatorname{last}}\big(D_N'\big)$ it holds that
$$
\rnd\big(r'\big(D_N'\big)\big) = \rrd\big(r'\big(D_N'\big)\big) = \rkl\big(r'\big(D_N'\big)\big) = 0,
$$
since the rankings $r'\big(D_N'\big)$ maximize the distance terms at the single cut-off point $N$  due to the proportion of the protected group being $p^N_{G_1}\big(r'\big(D_N'\big)\big)=0$. 
Conversely, in each ranking $r\in\mathcal{R}_{\operatorname{first}}(D_N)$, we have $p^N_{G_1}(r)=\frac{1}{N}$, which does not maximize the distance terms in the prefix metrics. 
In consequence, each of the prefix metrics $m\in\{\rnd{},\rrd{},\rkl{}\}$ will have a value $m(r(D_N))>0$ for all $N\in\N$, and therefore, these metrics do not satisfy property 12.

Regarding \emph{sensitivity}, we again consider a ranking $r'\in\mathcal{R}_{\operatorname{last}}\big(D_N'\big)$ in the same population $\D$ with $I=\{N\}$. When appending a single candidate $d\in G_0$ at the end of this ranking, this will not be detected by the prefix metrics due to both groups having $N$ candidates contained in the ranking.
Therefore, these metrics do not satisfy \emph{sensitivity} either.
\end{proof}

\monodeepthm*
\begin{proof}
Regarding \emph{deepness}, due to the uniform relevance assumption we just have to consider \ed{} and \er{}.
Since these metrics just consider differences or ratios in group-wise averaged exposure, which is measured in terms of position bias $b(k)$,
the claim follows due to $b(k)$ being strictly monotonically decreasing and convex.

Regarding \emph{monotonicity}, we note that swapping a pair of candidates $r(i) \in G_0$ and $r(j) \in G_1$ with $i<j$ and $y(r(i)) \leq y(r(j))$ will not affect the average group relevance scores $Y(G_0)$ and $Y(G_1)$, whereas by monotonicity of the position bias 
the average exposure of the protected group $\operatorname{Exposure}(G_1|r)$ will increase and the average exposure of the non-protected group $\operatorname{Exposure}(G_0|r)$ will decrease.
Thus, for \ed{}, \er{}, \dtd{}, and \dtr{} it directly follows that these metrics satisfy \emph{monotonicity}.
Regarding the click-through-rate ($CTR$), it holds that
\begin{talign}
         CTR(G|r) < CTR(G| \rij) \nonumber
    \Leftrightarrow& \,   \frac{1}{|G|} \sum_{d \in G\cap D} b\big(r^{-1}(d)\big) \cdot y(d) < \frac{1}{|G|} \sum_{d \in G\cap D} b\big(\rij^{-1}(d)\big) \\ \nonumber
    \Leftrightarrow&\,  b(i)\cdot y(r(i)) +  b(j)\cdot y(r(j)) < b(j)\cdot y(r(i)) +  b(i)\cdot y(r(j)) \\ \nonumber
    \Leftrightarrow&\,  y(r(i))\cdot \big[b(i) - b(j)\big] < y(r(j))\cdot \big[b(i) - b(j)\big] \\
    \Leftrightarrow &\,  y(r(i)) < y(r(j)), \label{eq:ctr}
\end{talign}
and thus the $CTR$ increases if and only if candidates are swapped in concordance with their relevance scores.
Therefore, the \dtd{} and \dtr{} satisfy \emph{monotonicity} as well.
\end{proof}

\intragroupthm*

\begin{proof}
It is trivial that both the $\operatorname{Exposure}(G|r)$ and $Y(G)$ terms are not affected when swapping candidates $r(i)$ and $r(j)$ that are from the same group $G$. 
Thus, \dtd{} and \dtr{}, which just compute ratios and differences of these values, are invariant to such swaps.
Regarding the click-through-rate ($CTR$), by Equation \eqref{eq:ctr} the $CTR$ increases when switching candidates in concordance with their relevance scores.
Thus, by the definitions of \did{} and \dir{}, and the invariance of $Y(G)$, it follows that indeed these metrics satisfy \emph{intra-group fairness}.
\end{proof}

\scalethm*

\begin{proof}
Let $r$ denote an arbitrary ranking on a candidate set $D\in\mathcal{D}$, and $f_{a,0}(Y(G))$ denote the average relevance scores of group $G$ after rescaling according to $f_{a,0}$.
Then it holds that 
$${\textstyle
f_{a,0}(Y(G)) = \frac{1}{|G|}\cdot \sum_{d \in G} f_{a,0}(y(d))= \frac{1}{|G|}\cdot \sum_{d \in G} a\cdot y(d) = a \cdot Y(G).
}
$$
Thus, it follows that
\[\textstyle
DTD\big(r\big(f_{a,0}(D)\big)\big) = \frac{\operatorname{Exposure}(G_1|r)}{f_{a,0}(Y(G_1))} - \frac{\operatorname{Exposure}(G_0|r)}{f_{a,0}(Y(G_0))} 
 = \frac{\operatorname{Exposure}(G_1|r)}{a\cdot Y(G_1)} - \frac{\operatorname{Exposure}(G_0|r)}{a\cdot Y(G_0)}
  = \frac{1}{a}\cdot DTD(r(D)).
\]
Conversely, the \dtr{} is \emph{invariant to rescaling of relevance scores} since
\[
\textstyle
    DTR\big(r\big(f_{a,0}(D)\big)\big) = \frac{\operatorname{Exposure}(G_1|r)}{\operatorname{Exposure}(G_0|r)}\cdot \frac{f_{a,0}(Y(G_0))}{f_{a,0}(Y(G_1))} 
    = \frac{\operatorname{Exposure}(G_1|r)}{\operatorname{Exposure}(G_0|r)}\cdot \frac{aY(G_0)}{a Y(G_1)} 
    = \frac{\operatorname{Exposure}(G_1|r)}{\operatorname{Exposure}(G_0|r)}\cdot \frac{Y(G_0)}{ Y(G_1)} 
   = DTR(r(D)).
\]
For \did{} and \dir{}, the factor $a$ cancels out analogously in the fractions of relevance scores and click-through rates.
\end{proof}

\randomthm*
\begin{proof}
Let $R:=R(\D)$ denote the random variable that draws uniformly from all rankings $r\in\mathcal{R}(\D)$, and $\mathds{1}_{G}$ the indicator function for a group $G$. By linearity of the expected value, for each group $G\in\{G_0,G_1\}$ it holds that
\begin{talign*}
    E_{r\sim R}\big(\operatorname{Exposure}(G|r)\big) &= E_{r\sim R}\left(\frac{1}{|G|}\cdot\sum_{d \in G\cap \D} b\big(r^{-1}(d)\big)\right) = E_{r\sim R}\left(\frac{1}{|G|}\cdot\sum_{k = 1}^{|\D|} \mathds{1}_{G}(r(k))\cdot b(k)\right) \\
    &= \frac{1}{|G|}\cdot\sum_{k = 1}^{|\D|} E_{r\sim R}\big(\mathds{1}_{G}(r(k))\big)\cdot b(k) 
    = \frac{1}{|G|}\cdot\sum_{k = 1}^{|\D|} p_{G} \cdot b(k) 
    = \frac{1}{|\D|} \cdot\sum_{k = 1}^{|\D|} b(k).
\end{talign*}
Thus, once again using linearity of the expected value, it directly follows that 
$$
E_{r\sim R} \big(ED(r)\big) = E_{r\sim R}\big(\operatorname{Exposure}(G_1|r) - \operatorname{Exposure}(G_0|r)\big) = 0 = \vopt(ED).
$$
For \er{}, we can provide a minimal example for which this property is not satisfied. 
Let $\D = \{d_0, d_1\}$ with $d_0\in G_0$ and $d_1\in G_1$.
There are exactly two rankings in $\mathcal{R}(\D)$, namely $r_0 = \langle d_0,d_1\rangle$, and $r_1 = \langle d_1,d_0\rangle$.
Then it holds that
$$
E_{r\sim R} \big(ER(r)\big) = \tfrac{1}{2} \cdot \tfrac{\operatorname{Exposure}(G_1|r_0)}{\operatorname{Exposure}(G_0|r_0)} + \tfrac{1}{2} \cdot \tfrac{\operatorname{Exposure}(G_1|r_1)}{\operatorname{Exposure}(G_0|r_1)}
= \tfrac{1}{2} \cdot \left(\tfrac{b(2)}{b(1)} + \tfrac{b(1)}{b(2)}\right)
\approx 1.11 > 1 = \vopt(ER),
$$
and thus \er{} does not satisfy property 7.
\end{proof}

\expothm*

\begin{proof}
Since we assume uniform relevance, we only need to prove the assertion for \ed{} and \er{}.
Regarding the \emph{closeness threshold} and \emph{deepness threshold}, for all $r\in\mathcal{R}_{\operatorname{first}}(D_N)$, $r'\in\mathcal{R}_{\operatorname{last}}(D_N')$ holds that
\begin{talign*}
    ED(r) = \frac{1}{|G_1|} - \frac{1}{|G_0|}\cdot\sum_{k=2}^{2N} \frac{1}{\log(k+1)}, \quad  
    & ED(r') = \frac{1}{|G_1|} \cdot \sum_{k=N+1}^{2N} \frac{1}{\log(k+1)} - \frac{1}{|G_0|}\cdot\sum_{k=1}^{N} \frac{1}{\log(k+1)}, \\
    ER(r) = \frac{|G_0|}{|G_1|\cdot \sum_{k=2}^{2N} \frac{1}{\log(k+1)}}, \quad\text{and}\quad 
    & ER(r') = \frac{|G_0|\cdot \sum_{k=N+1}^{2N} \frac{1}{\log(k+1)}}{|G_1|\cdot \sum_{k=1}^{N} \frac{1}{\log(k+1)}}.
\end{talign*}
Therefore, we obtain
\begin{talign*}
    ED(r) \geq ED(r') 
    \Leftrightarrow &\,\frac{1}{|G_1|} - \frac{1}{|G_0|}\cdot\sum_{k=2}^{2N} \frac{1}{\log(k+1)} \geq \frac{1}{|G_1|}\cdot \sum_{k=N+1}^{2N} \frac{1}{\log(k+1)} - \frac{1}{|G_0|}\cdot\sum_{k=1}^{N} \frac{1}{\log(k+1)} \\
    \Leftrightarrow &\, \frac{1}{|G_1|} + \frac{1}{|G_0|} - \frac{1}{|G_0|}\cdot \sum_{k=N+1}^{2N} \frac{1}{\log(k+1)} \geq \frac{1}{|G_1|} \cdot \sum_{k=N+1}^{2N} \frac{1}{\log(k+1)}  \\
    \Leftrightarrow &\, \frac{1}{|G_1|} + \frac{1}{|G_0|} \geq \left( \frac{1}{|G_1|} + \frac{1}{|G_0|} \right) \cdot \sum_{k=N+1}^{2N} \frac{1}{\log(k+1)} \\
    \Leftrightarrow &\, \mathlarger{1} \geq \sum_{k=N+1}^{2N} \frac{1}{\log(k+1)}, 
\end{talign*}
and this inequality holds for $N=1$, since $1 > \frac{1}{\log(3)}$. Therefore, by choosing $N'=1$, we can conclude that \er{} satisfies property 11. Further, for $N > 2$ it holds that
\begin{equation}\label{eq:deepineq}
\mathsmaller{ \sum_{k=N+1}^{2N} \frac{1}{\log(k+1)} > \frac{N}{\log(2N+2)} = \frac{N}{ \log(N+1) + 1} > 1}, 
\end{equation}
and thus, \ed{} has a \emph{deepness threshold} as well.
Similarly, for \er{} we obtain that 
\begin{talign*}
    ER(r) \geq ER(r') 
    \Leftrightarrow &\,\frac{|G_0|}{|G_1| \sum_{k=2}^{2N} \frac{1}{\log(k+1)}} \geq  \frac{|G_0| \sum_{k=N+1}^{2N} \frac{1}{\log(k+1)}}{|G_1| \sum_{k=1}^{N} \frac{1}{\log(k+1)}}\\
    \Leftrightarrow &\, \sum_{k=1}^{N} \frac{1}{\log(k+1)} \geq \sum_{k=2}^{2N} \frac{1}{\log(k+1)} \cdot \sum_{k=N+1}^{2N} \frac{1}{\log(k+1)}, 
\end{talign*}
which again is satisfied for $N=1$ since $1 > \frac{1}{\log(3)^2}$, and thus \er{} has a \emph{closeness threshold}. 
Conversely, since by \eqref{eq:deepineq} we have that $\sum_{k=N+1}^{2N} \frac{1}{\log(k+1)} > 1$ for $N>2$, it follows that 
$$
\mathsmaller \sum_{k=2}^{2N} \tfrac{1}{\log(k+1)}
\cdot 
\mathsmaller \sum_{k=N+1}^{2N} \tfrac{1}{\log(k+1)}
> {\mathsmaller \sum_{k=1}^{N} \tfrac{1}{\log(k+1)}},
$$
and thereby \er{} satisfies property 12.

Regarding \emph{sensitivity}, we note that adding an additional candidate $d'\in G_0$ to the ranking will always increase the exposure of $G_0$, i.e., $\operatorname{Exposure}(G_0|r') > \operatorname{Exposure}(G_0|r)$. Since adding $d'$ to the ranking will not affect the exposure of group $G_1$, it directly follows that \ed{} and \er{} satisfy property 13.
\end{proof}

To properly prove Theorem \ref{thm:awrf}, we need to establish some initial results.

\begin{proposition}\label{prop:log_ineq}
For all $x\geq 3$ it holds that $x+1 > (x+2)\cdot\log(x+2)-(x+1)\cdot\log(x+1)$.
\end{proposition}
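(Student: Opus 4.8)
The plan is to reduce the claimed inequality to a one-variable monotonicity argument. Throughout I read $\log = \log_2$, consistent with the convention used in the preceding proofs (the identity $\log(2N+2) = \log(N+1)+1$ in Equation~\eqref{eq:deepineq} already relies on base $2$). Define
\[
g(x) := (x+1) - \big[(x+2)\log(x+2) - (x+1)\log(x+1)\big],
\]
so that the proposition is equivalent to the statement $g(x) > 0$ for all $x \geq 3$.

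First I would differentiate $g$. Using $\tfrac{d}{dt}\big(t\log t\big) = \log t + \tfrac{1}{\ln 2}$, the two constant terms $\tfrac{1}{\ln 2}$ arising from the $(x+2)\log(x+2)$ and $(x+1)\log(x+1)$ summands cancel, leaving the clean expression
\[
g'(x) = 1 - \log(x+2) + \log(x+1) = 1 - \log\!\Big(1 + \tfrac{1}{x+1}\Big).
\]
This cancellation is the crux of the argument, turning an a priori messy derivative into something manifestly controllable. For $x \geq 3$ we have $\tfrac{1}{x+1} \leq \tfrac14$, hence $\log(1 + \tfrac1{x+1}) \leq \log \tfrac54 < 1$, so $g'(x) > 0$ on $[3,\infty)$ and $g$ is strictly increasing there.

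It then remains only to check the base case. Evaluating at $x = 3$ gives $g(3) = 4 - 5\log 5 + 4\log 4 = 12 - 5\log 5$, and since $\log_2 5 < 2.4$ we obtain $5\log 5 < 12$, so $g(3) > 0$. Combining this with the monotonicity established above yields $g(x) \geq g(3) > 0$ for every $x \geq 3$, which is exactly the assertion.

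The only genuine obstacle is spotting and exploiting the derivative cancellation; once $g'$ is in the form $1 - \log(1 + \tfrac1{x+1})$, the remainder is a routine bound together with a single numerical evaluation at $x=3$. An alternative route avoids differentiation entirely: one writes the bracketed term as $h(x+2) - h(x+1)$ for the strictly convex $h(t) = t\log t$, bounds the secant slope by the right-hand derivative $h'(x+2) = \log(x+2) + \log e$, and then verifies the simpler linear-versus-logarithmic inequality $x+1 > \log(x+2) + \log e$ for $x \geq 3$, which holds at $x=3$ and persists since the left side grows with unit slope while the right side has slope below $1$.
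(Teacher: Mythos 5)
Your proof is correct and follows essentially the same route as the paper's: both verify the inequality at $x=3$ and then compare derivatives, exploiting the cancellation of the constant terms so that the derivative comparison reduces to $\log\bigl(\tfrac{x+2}{x+1}\bigr) < 1$. Packaging the two sides into a single difference function $g$ rather than comparing $g$ and $h$ separately is only a cosmetic variation.
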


\begin{proof}
Let $g, h: \R_+ \rightarrow \R$ two functions defined as 
$$
g(x):= x+1 \quad \text{and}\quad h(x) := (x+2)\cdot\log(x+2)-(x+1)\cdot\log(x+1).
$$
Then it holds that $h(3)\approx 3.6 < 4 = g(3)$.
Further, both $g$ and $h$ are differentiable, and for all $x \geq 3$ we obtain that
$$
h'(x) = \log(x+2) - \log(x+1) = \log\left(\tfrac{x+2}{x+1}\right) < \log\left(\tfrac{2x+2}{x+1}\right) = \log(2) = 1 = g'(x).
$$
Therefore, the assertion follows.
\end{proof}

\begin{proposition}\label{prop:f_dec}
The function $f: \R_+ \rightarrow \R_+ : x \mapsto \log(x+2) \cdot \left( \frac{1}{\log(2x+2)} + \frac{1}{\log(2x+3)} \right)$ is strictly monotonically increasing for all $x\geq3$.
\end{proposition}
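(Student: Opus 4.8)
The plan is to split $f$ into its two summands and show that each is strictly increasing on $[3,\infty)$, so that their sum is as well. Write $f_1(x) := \tfrac{\log(x+2)}{\log(2x+2)}$ and $f_2(x) := \tfrac{\log(x+2)}{\log(2x+3)}$, keeping $\log=\log_2$ throughout (so that $\log 2 = 1$, as in Proposition~\ref{prop:log_ineq}). Both summands are positive, so it only remains to verify $f_1'(x)>0$ and $f_2'(x)>0$ for $x\ge 3$. In each case I would apply the quotient rule and observe that, since the denominator is a positive square, the sign of the derivative is that of the numerator $u'v-uv'$.

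First I would treat $f_1$. Using $\log'(x+2)=\tfrac{1}{(x+2)\ln 2}$ and $\log'(2x+2)=\tfrac{1}{(x+1)\ln 2}$, the numerator reduces (after clearing the positive factor $\tfrac{1}{(x+1)(x+2)\ln 2}$) to $(x+1)\log(2x+2)-(x+2)\log(x+2)$. Expanding $\log(2x+2)=1+\log(x+1)$ rewrites this as $(x+1)-\big[(x+2)\log(x+2)-(x+1)\log(x+1)\big]$, which is strictly positive for $x\ge 3$ by Proposition~\ref{prop:log_ineq}. Hence $f_1$ is strictly increasing. In fact the monotonicity of $f_1$ is exactly equivalent to that proposition, which is precisely why the inequality was isolated beforehand, so this summand carries no further work.

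For $f_2$ the same computation, now with $\log'(2x+3)=\tfrac{2}{(2x+3)\ln 2}$, shows that the sign of $f_2'(x)$ equals the sign of $\phi(x):=(2x+3)\log(2x+3)-(2x+4)\log(x+2)$. This does not reduce directly to Proposition~\ref{prop:log_ineq}, so I would instead argue by an auxiliary monotonicity. Differentiating, the chain-rule contributions each produce a constant $\tfrac{2}{\ln 2}$ that cancels, leaving $\phi'(x)=2\big(\log(2x+3)-\log(x+2)\big)=2\log\tfrac{2x+3}{x+2}>0$ for all $x>-1$. Combined with the single base-case check $\phi(3)=9\log 9-10\log 5>0$, this gives $\phi(x)>0$ on $[3,\infty)$, so $f_2$ is strictly increasing as well.

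Adding the two conclusions yields $f'(x)=f_1'(x)+f_2'(x)>0$ on $[3,\infty)$, which is the claim. The only real care lies in the two reductions: one must track the factor $\tfrac{2}{2x+3}$ (resp. $\tfrac{1}{x+1}$) from the chain rule so that the constant terms in $\phi'$ cancel cleanly, and then verify the lone numerical base case. The genuinely delicate ingredient is the behaviour of the first summand, but that difficulty is already absorbed into Proposition~\ref{prop:log_ineq}; somewhat counterintuitively, the less symmetric argument $2x+3$ in the second summand turns out to admit the easier derivative argument.
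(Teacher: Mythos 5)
Your proof is correct, but it is organized differently from the paper's. The paper does not treat the two summands separately: it writes $f'$ as the sum of the two quotient-rule terms and then lower-bounds that sum as a whole, using $\log(2x+2)<\log(2x+3)$ together with the fact that the second term's numerator dominates the first's, arriving at $f'(x) > c_1(x)\cdot\bigl[(x+1)+(x+1)\log(x+1)-(x+2)\log(x+2)\bigr]$ with $c_1(x)>0$; positivity then follows from a single application of Proposition~\ref{prop:log_ineq}. Your decomposition $f=f_1+f_2$ instead proves the stronger statement that each summand is individually increasing on $[3,\infty)$: for $f_1$ this is, as you note, exactly equivalent to Proposition~\ref{prop:log_ineq}, but for $f_2$ you must supply a genuinely new ingredient --- the auxiliary function $\phi(x)=(2x+3)\log(2x+3)-(2x+4)\log(x+2)$, its derivative $\phi'(x)=2\log\bigl(\tfrac{2x+3}{x+2}\bigr)>0$ (the chain-rule constants do cancel, as you claim), and the base case $\phi(3)=9\log 9-10\log 5>0$; all three of these steps check out. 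The trade-off is: the paper gets by with one lemma and no extra numerical verification, at the price of a more delicate comparison between the two quotient terms (a bound which, note, is itself only legitimate once the sign of the first numerator --- i.e.\ the lemma --- is known); your version costs one additional elementary monotonicity argument plus one base-case evaluation, but it decouples the two terms cleanly and yields the per-summand monotonicity as a bonus. Both are valid proofs of the proposition.
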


\begin{proof}
The function $f(x) = \log(x+2) \cdot \left( \frac{1}{\log(2x+2)} + \frac{1}{\log(2x+3)}\right) =  \frac{\log(x+2)}{\log(2x+2)} + \frac{\log(x+2)}{\log(2x+3)}$ is differentiable with
\begin{talign*}
f'(x) &= \frac{\frac{1}{x+2}\cdot \log(2x+2) - \log(x+2)\cdot \frac{2}{2x+2}}{\log(2x+2)^2} + \frac{\frac{1}{x+2}\cdot \log(2x+3) - \log(x+2)\cdot \frac{2}{2x+3}}{\log(2x+3)^2} \\
&> \frac{1}{\log(2x+3)^2}\cdot \left( 2\cdot \frac{\log(2x+2)}{x+2} - 2 \cdot \frac{\log(x+2)}{x+1}\right) \\
&= \frac{2}{(x+1)\cdot(x+2)\cdot\log(2x+3)^2} \cdot \left[(x+1)\cdot(1+\log(x+1)) - (x+2)\cdot\log(x+2)\right] \\
&= \underbrace{\frac{2}{(x+1)\cdot(x+2)\cdot\log(2x+3)^2}}_{=:c_1(x)}\cdot \underbrace{\left[x+1 + (x+1)\cdot\log(x+1) - (x+2)\cdot\log(x+2)\right]}_{=:c_2(x)}
\end{talign*}
Since it holds that $c_1(x)>0$ for all $x \geq 0$ and, by Proposition \ref{prop:log_ineq}, also $c_2(x) >0$ for all $x\geq 3$, it also follows that $f'(x) > 0$ for all $x\geq3$ and thus, $f(x)$ is strictly monotonically increasing for all $x\geq 3$.
\end{proof}

\begin{proposition}\label{prop:seq_dec}
The sequence $B_N':= \frac{\sum_{k=1}^N \frac{1}{\log(k+1)}}{\sum_{k=1}^{2N} \frac{1}{\log(k+1)}}$ is strictly monotonically decreasing for all $N\geq 5$.
\end{proposition}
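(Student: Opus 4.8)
The plan is to reduce the claim to a single scalar inequality involving the function $f$ of Proposition \ref{prop:f_dec} and then to exploit its monotonicity. Writing $a_k := \tfrac{1}{\log(k+1)}$ and $S_N := \sum_{k=1}^N a_k$, so that $B_N' = S_N/S_{2N}$, I would first note that since all these quantities are positive, $B_{N+1}' < B_N'$ is equivalent to $S_{N+1}S_{2N} < S_N S_{2N+2}$. Substituting $S_{N+1} = S_N + a_{N+1}$ and $S_{2N+2} = S_{2N} + a_{2N+1} + a_{2N+2}$ and cancelling the common term $S_N S_{2N}$, this collapses to
\[ a_{N+1}\,S_{2N} < S_N\,(a_{2N+1} + a_{2N+2}). \]
The key observation is the identity $a_{2j-1} + a_{2j} = a_j\,f(j-1)$, which follows directly from the definitions of $a_k$ and $f$. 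Applying it with $j = N+1$ gives $a_{2N+1} + a_{2N+2} = a_{N+1}\,f(N)$, so after dividing by $a_{N+1}$ the inequality to be proved is exactly $S_{2N}/S_N < f(N)$.

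Next I would rewrite the left-hand side as a weighted average of values of $f$. Grouping $S_{2N}$ into consecutive pairs and reusing the identity, $S_{2N} = \sum_{j=1}^N (a_{2j-1}+a_{2j}) = \sum_{j=1}^N a_j\,f(j-1)$, whence
\[ \frac{S_{2N}}{S_N} = \frac{\sum_{j=1}^N a_j\,f(j-1)}{\sum_{j=1}^N a_j} \]
is a weighted average, with positive weights $a_j$, of the values $f(0), f(1), \dots, f(N-1)$. Consequently $f(N) - S_{2N}/S_N$ has the same sign as $\sum_{j=1}^N a_j\bigl(f(N) - f(j-1)\bigr)$, and it suffices to show this quantity is positive for $N \ge 5$. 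By Proposition \ref{prop:f_dec}, $f$ is strictly increasing on $[3,\infty)$, so for every $j \ge 4$ we have $f(j-1) \le f(N-1) < f(N)$; combined with the two direct comparisons $f(1), f(2) < f(5) \le f(N)$, every summand with $j \ge 2$ is positive. The only term that can be negative is the one for $j = 1$, namely $a_1\bigl(f(N) - f(0)\bigr)$.

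This isolates the real difficulty: near the threshold the inequality is extremely tight (numerically $S_{10}/S_5 \approx 1.541$ against $f(5) \approx 1.542$), so the positive contributions from $j \ge 2$ only barely dominate the single negative term, and a crude global bound will not suffice. I would therefore split the analysis. For large $N$ — concretely $N \ge 16$, where one checks that $f(N) \ge f(16) > f(0)$ — every comparison $f(N) - f(j-1)$ is nonnegative, so the sum is manifestly positive and the result is immediate from the monotonicity of $f$. The remaining finitely many cases $5 \le N \le 15$ I would settle by direct evaluation of $S_{2N}/S_N$ versus $f(N)$. Given the sharpness at $N = 5$, this finite verification is the genuine crux of the argument, whereas the large-$N$ regime follows cleanly once Proposition \ref{prop:f_dec} is in hand.
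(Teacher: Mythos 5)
Your proposal is correct, but it takes a genuinely different route from the paper. Both proofs start from the same algebraic reduction ($B_{N+1}'<B_N'$ iff $a_{N+1}S_{2N}<S_N(a_{2N+1}+a_{2N+2})$, with $a_k=\tfrac{1}{\log(k+1)}$ and $S_N=\sum_{k=1}^N a_k$) and both lean on the monotonicity of $f$ from Proposition \ref{prop:f_dec}, but the paper then proves the reduced inequality by induction on $N$: a single numerically verified base case at $N=5$, and an inductive step that applies the hypothesis and upgrades $f(N)$ to $f(N+1)$ via monotonicity. You avoid induction entirely: your pairing identity $a_{2j-1}+a_{2j}=a_j\,f(j-1)$ (which is valid, and does not appear in the paper) recasts the target as the statement that the weighted average $S_{2N}/S_N=\bigl(\sum_{j=1}^N a_j f(j-1)\bigr)/\bigl(\sum_{j=1}^N a_j\bigr)$ lies below $f(N)$, so that for $N\geq 16$ --- once $f(N)$ has climbed past the anomalous value $f(0)=1+1/\log(3)\approx 1.631$ --- the conclusion is immediate, while the cases $5\leq N\leq 15$ are deferred to direct computation. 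What the paper's induction buys is economy of numerics: one base-case check, versus your eleven threshold cases plus the comparison $f(16)\approx 1.633 > 1.631\approx f(0)$. All of your checks do succeed: the margin $f(N)-S_{2N}/S_N$ is about $0.0008$ at $N=5$ (namely $S_{10}/S_5\approx 1.5410 < 1.5418\approx f(5)$, matching your figures) and grows to about $0.065$ at $N=15$, so your finite verification goes through, though a complete writeup must actually tabulate those eleven evaluations rather than exhibit only $N=5$. What your approach buys in exchange is transparency: it isolates exactly why the inequality is tight near the threshold --- the single negative summand $a_1\bigl(f(N)-f(0)\bigr)$, carrying the largest weight $a_1=1$, is only gradually diluted as $N$ grows --- a mechanism that the paper's induction hides inside its base case.
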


\begin{proof}
We need to show that for all $N\geq5$ it holds that
\[
B_{N+1}' = \tfrac{\sum_{k=1}^N \frac{1}{\log(k+1)} + \frac{1}{\log(N+2)}}{\sum_{k=1}^{2N} \frac{1}{\log(k+1)}+\frac{1}{\log(2N+2)} + \tfrac{1}{\log(2N+3)}} 
< \tfrac{\sum_{k=1}^N \tfrac{1}{\log(k+1)}}{\sum_{k=1}^{2N} \tfrac{1}{\log(k+1)}} = B_N'
\]
which is equivalent to 
\begin{equation}\label{eq:ind_ineq}
\textstyle
    \frac{1}{\log(N+2)} \cdot \sum_{k=1}^{2N} \frac{1}{\log(k+1)} < \left( \frac{1}{\log(2N+2)} + \frac{1}{\log(2N+3)} \right) \cdot \sum_{k=1}^N \frac{1}{\log(k+1)}.
\end{equation}
Thus, we show via mathematical induction that inequality \eqref{eq:ind_ineq} holds for all $N\geq5$. 
The base case can be verified via computer algebra systems - indeed it holds that the left-hand side of \eqref{eq:ind_ineq} is roughly equal to $1.618$, whereas the right-hand side roughly equals $1.619$.
As induction hypothesis (IH), we assume that the claim holds for a given $N\geq 5$, and try to prove the assertion for $N+1$ in the induction step.
It holds that
\begin{talign*}
\frac{1}{\log(N+3)} \cdot \sum_{k=1}^{2N+2} \frac{1}{\log(k+1)} 
&\, = \frac{\log(N+2)}{\log(N+3)}\cdot \frac{1}{\log(N+2)} \cdot \left( \sum_{k=1}^{2N} \frac{1}{\log(k+1)}  + \frac{1}{\log(2N+2)} + \frac{1}{\log(2N+3)}\right) \\
&\, = \frac{\log(N+2)}{\log(N+3)} \cdot \frac{1}{\log(N+2)} \cdot \sum_{k=1}^{2N} \frac{1}{\log(k+1)} + \frac{1}{\log(N+3)} \cdot \left( \frac{1}{\log(2N+2)} + \frac{1}{\log(2N+3)}\right) \\
&\overset{\text{(IH)}}{<} \frac{\log(N+2)}{\log(N+3)} \cdot \left( \frac{1}{\log(2N+2)} + \frac{1}{\log(2N+3)} \right) \cdot \sum_{k=1}^N \frac{1}{\log(k+1)} \\
& \quad + \frac{1}{\log(N+3)} \cdot \left( \frac{1}{\log(2N+2)} + \frac{1}{\log(2N+3)}\right)  \\
&\, = \left( \frac{1}{\log(2N+2)} + \frac{1}{\log(2N+3)} \right) \cdot \left[ \frac{\log(N+2)}{\log(N+3)} \cdot \left(\sum_{k=1}^{N+1} \frac{1}{\log(k+1)} - \frac{1}{\log(N+2)} \right) + \frac{1}{\log(N+3)}\right] \\
&\, = \underbrace{\log(N+2)\cdot\left( \frac{1}{\log(2N+2)} + \frac{1}{\log(2N+3)} \right)}_{=:f(N)} \cdot \frac{1}{\log(N+3)} \cdot \left(\sum_{k=1}^{N+1} \frac{1}{\log(k+1)} \right).
\end{talign*}
By Proposition \ref{prop:f_dec}, it holds that $f(N) < f(N+1)$ for all $N\geq 3$, and therefore we obtain
\begin{talign*}
\frac{1}{\log(N+2)} \cdot \sum_{k=1}^{2N+2} \frac{1}{\log(k+1)}
&< \log(N+3)\cdot\left( \frac{1}{\log(2N+4)} + \frac{1}{\log(2N+5)} \right) \cdot \frac{1}{\log(N+3)} \cdot \left(\sum_{k=1}^{N+1} \frac{1}{\log(k+1)} \right) \\
&= \left( \frac{1}{\log(2N+4)} + \frac{1}{\log(2N+5)} \right) \cdot \sum_{k=1}^{N+1} \frac{1}{\log(k+1)}.
\end{talign*}
Now the assertion follows by the principle of mathematical induction.
\end{proof}

\awrfthm*
\begin{proof}
To show that there is no \emph{deepness threshold} for \awrf{}, we consider a population $\D$ in which we have $p_\text{groups}(\D) = (0.9,0.1)$.
Letting $B_N := \tfrac{1}{\sum_{k=1}^{2N} b(k)}$ and $B_N' := \tfrac{\sum_{k=1}^{N} b(k)}{\sum_{k=1}^{2N} b(k)}$, it is easy to see that 
\[\textstyle
p_\text{Exp}(r(D_N)) =(1-B_N,B_N)\quad
\text{and}\quad
p_\text{Exp}\big(r'\big(D_N'\big)\big) = \big(B_N',1-B_N'\big).
\]
Due to
$
\sum_{k=1}^{2N} b(k) 
> \sum_{k=1}^{2N} \tfrac{1}{k+1}, 
$
which, as harmonic series, diverges, we have that $B_N \xrightarrow{N\to\infty} 0$.
It follows that 
\begin{talign*}
\Delta_\text{KL}\left(p_\text{groups}\| p_\text{sum}\left(r(D_N)\right)\right) 
&= 0.9\cdot\left[\log(0.9) - \log\left(0.5\cdot\left(1-B_N+0.9\right)\right) \right]\\
&\quad + 0.1\cdot\left[\log(0.1) - \log\left(0.5\cdot\left(B_N
+0.1\right)\right)\right]\\
& \xrightarrow{N\to\infty} 0.9\cdot\log(0.9) - 0.9\cdot\log(0.95) +  0.1\cdot\log(0.1) - 0.1\cdot\log(0.05)  
\end{talign*}
and 
\begin{talign*}
\Delta_\text{KL}\big(p_\text{Exp}\left(r(D_N)\big) \| p_\text{sum}\left(r\left(D_N\right)\right)\right) 
&= \left(1-B_N
\right) \cdot \left[\log\left(1-B_N\right) - \log\left(0.5\cdot\left(1-B_N
+0.9\right)\right) \right]
\\
&\quad +B_N\cdot \left[\log\left(B_N\right) - \log\left(0.5\cdot\left(B_N+0.1\right)\right)\right]\\
&\xrightarrow{N\to\infty}   - \log(0.95),
\end{talign*}
where we use that $\lim_{x \to 0} x\cdot\log(x) = 0$.
With these results, we obtain
$$
\lim_{N\to\infty} AWRF(r(D_N)) = 1 - 0.5\cdot[ - \log(0.95) + 0.9\cdot\log(0.9) - 0.9\cdot\log(0.95) +  0.1\cdot\log(0.1) - 0.1\cdot\log(0.05)] \approx 0.948.
$$
Regarding $AWRF\big(r'\big(D_N'\big)\big)$, we again first consider the values of the exposure vector $p_\text{Exp}\big(r'\big(D_N'\big)\big) = \big(B_N',1-B_N'\big)$. Since the position bias $b(k)$ is strictly monotonically decreasing, 
it holds that $B_N' > \tfrac{1}{2}$.
Conversely, by Proposition \ref{prop:seq_dec} it holds that the $B_N'$ is strictly monotonically decreasing for $N\geq 5$, and even more, for $N\geq 5000$ one can compute that $B_N' < 0.55$.
Thus, for $N>5000$ it follows that
\begin{talign*}
\Delta_\text{KL}\left(p_\text{groups}\| p_\text{sum}\big(\big(r'(D_N'\big)\big)\right)
&= 0.9\left[\log(0.9) - \log\left(0.5\left(
B_N'
+0.9\right)\right) \right] + 0.1\left[\log(0.1) - \log\left(0.5\left(1-B_N'+0.1\right)\right)\right]\\
&> 0.9\left[\log(0.9) - \log\left(0.5\left(0.55+0.9\right)\right) \right]
+ 0.1\left[\log(0.1) - \log\left(0.5\left(1-0.5+0.1\right)\right)\right] \\
&= 0.9\cdot[\log(0.9) - \log(0.725)] + 0.1\cdot[\log(0.1) - \log(0.3)] \\
&\approx 0.1223 > 0.12.
\end{talign*}
Further, it holds that $\Delta_\text{KL}\left(p_\text{Exp}\| p_\text{sum}\big(r'(D_N'\big)\big)\right) \geq 0$ by the properties of the Kullback-Leibler divergence, and therefore, for $N>5000$ we finally obtain that
\begin{talign*}
AWRF\left(r_N'\big(D_N'\big)\right) &= 1 - \tfrac{1}{2}\cdot \left[\Delta_\text{KL}\left(p_\text{Exp}\| p_\text{sum}\big(r'\big(D_N'\big)\big)\right) + \Delta_\text{KL}\big(p_\text{groups}\| p_\text{sum}\big(r'\big(D_N'\big)\big)\big)\right] \\
& < 0.94 < \lim_{N \to\infty} AWRF(r(D_N)),
\end{talign*}
from which the claim follows.

Regarding \emph{sensitivity}, consider a population $\mathcal{D}'$ where $p_\text{groups}(\mathcal{D}') = (0.75, 0.25)$, a candidate set $D'\subseteq \mathcal{D}'$ of $n'=2$ candidates, and a ranking $r'\in\mathcal{R}(D')$ with $r'(1)\in G_0$ and $r'(2)\in G_1$. Further, let $d''\in G_0, d''\notin D'$ a candidate that has not been a part of $D'$, and $r''\in \mathcal{R}(D\cup\{d''\})$ a ranking with $r''(i)=r'(i)$ for $i\in\{1,2\}$ and $r''(3)=d''$.
Then it holds that 
$
AWRF(r'') \approx 0.998 > 0.984 \approx AWRF(r'),   
$
and therefore, \emph{sensitivity} is not satisfied.
\end{proof}

\pspthm*
\begin{proof}
Let 
$\pi_{\text{inv}}: \mathcal{R}(\D) \rightarrow \mathcal{R}(\D),\, r := \langle r(1), ..., r(n)\rangle \mapsto \langle r(n), ..., r(1)\rangle
$
denote the function that inverts every ranking on $\D$.
This mapping is bijective, and for every candidate pair $d,d'\in \D$ with $d \succ_r d'$ on a given ranking $r\in\mathcal{R}$, it holds that  $d' \succ_{\pi_{\text{inv}}(r)} d$.
In consequence, for every ranking $r\in\mathcal{R}$ we obtain 

\begin{talign*}
  PSP(r) &= \frac{\big|\big\{(d, d')\in G_0 \times G_1\, :\, d' \succ_r d\big\}\big| - \big|\big\{(d, d')\in G_0 \times G_1 \, :\, d \succ_r d' \big\}\big|}{\big|G_0 \times G_1\big|} \\
         &= \frac{\big|\big\{(d, d')\in G_0 \times G_1 \, :\, d \succ_{\pi_{\text{inv}}b(r)} d'\big\}\big| - \big|\big\{(d, d')\in G_0 \times G_1 \, :\, d' \succ_{\pi_{\text{inv}}(r)} d\big\}\big|}{\big|G_0 \times G_1\big|}  \\
         &= - PSP\big(\pi_{\text{inv}}(r)\big).
\end{talign*}
From this, it ultimately follows that
\begin{talign*}
 v_E(PSP,\D) &= \frac{1}{|\mathcal{R}(\D)|}\cdot \sum_{r\in \mathcal{R}(\D)} PSP(r)
           =\frac{1}{2\cdot|\mathcal{R}(\D)|}\cdot\left(\sum_{r\in \mathcal{R}(\D)}\ PSP(r) + \sum_{r\in \mathcal{R}(\D)}\ PSP\big(\pi_{\text{inv}}(r)\big)\right)\\
         &=\frac{1}{2\cdot|\mathcal{R}(\D)|}\cdot\sum_{r\in \mathcal{R}(\D)}\left(PSP(r) - PSP(r)\right)\ = \vopt(PSP),
\end{talign*}
which concludes the proof.
\end{proof}

\end{document}